\pdfoutput=1
\documentclass{article}

\usepackage[final,nonatbib]{neurips_2021}

\usepackage[utf8]{inputenc} 
\usepackage[T1]{fontenc}    
\usepackage{hyperref}       
\usepackage{url}            
\usepackage{booktabs}       
\usepackage{amsfonts}       
\usepackage{nicefrac}       
\usepackage{microtype}      
\usepackage{xcolor}         
\usepackage{amsmath}
\usepackage{graphicx}
\usepackage{subfigure}
\usepackage{amsthm,amsmath,amssymb}
\usepackage{mathrsfs}
\usepackage{booktabs}
\usepackage{tablefootnote}
\usepackage{threeparttable}
\usepackage{bm}
\usepackage[noend]{algpseudocode}
\usepackage{algorithmicx,algorithm}
\usepackage{caption}
\usepackage{multirow}

\title{Training Feedback Spiking Neural Networks by Implicit Differentiation on the Equilibrium State}

%

\author{%
  Mingqing Xiao$^1$, Qingyan Meng$^{2,3}$, Zongpeng Zhang$^{1,4}$, Yisen Wang$^1$, Zhouchen Lin$^{1,5\thanks{Corresponding author.}}$\\
  $^1$Key Laboratory of Machine Perception (MOE), School of AI, Peking University\\
  $^2$The Chinese University of Hong Kong, Shenzhen\\
  $^3$Shenzhen Research Institute of Big Data\\
  $^4$Center for Data Science, Academy for Advanced Interdisciplinary Studies, Peking University\\
  $^5$Pazhou Lab, Guangzhou 510330, China\\
  \texttt{\{mingqing\_xiao, yisen.wang, zlin\}@pku.edu.cn, qingyanmeng@link.cuhk.edu.cn,}\\ \texttt{zongpeng.zhang98@gmail.com} \\
}

\begin{document}
\maketitle

\begin{abstract}
  Spiking neural networks (SNNs) are brain-inspired models that enable energy-efficient implementation on neuromorphic hardware. However, the supervised training of SNNs remains a hard problem due to the discontinuity of the spiking neuron model. Most existing methods imitate the backpropagation framework and feedforward architectures for artificial neural networks, and use surrogate derivatives or compute gradients with respect to the spiking time to deal with the problem. 
  These approaches either accumulate approximation errors or only propagate information limitedly through existing spikes, and usually require information propagation along time steps with large memory costs and biological implausibility. In this work, we consider feedback spiking neural networks, which are more brain-like, and propose a novel training method that does not rely on the exact reverse of the forward computation. First, we show that the average firing rates of SNNs with feedback connections would gradually evolve to an equilibrium state along time, which follows a fixed-point equation. Then by viewing the forward computation of feedback SNNs as a black-box solver for this equation, and leveraging the implicit differentiation on the equation, we can compute the gradient for parameters without considering the exact forward procedure. In this way, the forward and backward procedures are decoupled and therefore the problem of non-differentiable spiking functions is avoided. We also briefly discuss the biological plausibility of implicit differentiation, which only requires computing another equilibrium. Extensive experiments on MNIST, Fashion-MNIST, N-MNIST, CIFAR-10, and CIFAR-100 demonstrate the superior performance of our method for feedback models with fewer neurons and parameters in a small number of time steps. Our code is available at \url{https://github.com/pkuxmq/IDE-FSNN}.
\end{abstract}

\vspace{-2mm}
\section{Introduction}
\vspace{-2mm}

Spiking neural networks (SNNs) have gained increasing attention recently due to their inherent energy-efficient computation~\cite{lee2016training,shrestha2018slayer,wu2018spatio,roy2019towards,deng2021optimal}. Inspired by the neurons in the human brain, biologically plausible SNNs transmit spikes between neurons, enabling event-based computation which can be carried out on neuromorphic chips with less energy consumption~\cite{akopyan2015truenorth,davies2018loihi,pei2019towards,roy2019towards}. Meanwhile, 
SNNs are computationally more powerful than artificial neural networks (ANNs) theoretically and are therefore regarded as the third generation of neural network models~\cite{maass1997networks}.

Despite the advantages, directly supervised training of SNNs remains a hard problem, which hampers the real applications of SNNs compared with popular ANNs. The main obstacle lies in the complex spiking neuron model. While backpropagation~\cite{rumelhart1986learning} works well for ANNs, it suffers from the discontinuity of spike generation which is non-differentiable in SNN training. 
Most recent successful SNN training methods still imitate the backpropagation through time (BPTT)~\cite{werbos1990backpropagation} framework by error propagation through the computational graph unfolded along time steps, and they deal with the spiking function by applying surrogate derivatives to approximate the gradients~\cite{wu2018spatio,bellec2018long,jin2018hybrid,shrestha2018slayer,wu2019direct,neftci2019surrogate,kim2020unifying,zheng2020going}, or by computing the gradients with respect to the spiking time only on the spiking neurons~\cite{bohte2002error,zhang2020temporal,kim2020unifying}. However, these methods either accumulate approximation error along time steps, or suffer from the ``dead neuron'' problem~\cite{shrestha2018slayer}, i.e. learning would not occur when no neuron spikes. At the same time, BPTT requires memorizing intermediate variables at all time steps and backpropagating along them, which is memory-costing and biologically implausible. So it is necessary to consider training methods other than backpropagation along computational graphs that fit SNNs better.

On the other hand, most recent SNN models simply imitate the feedforward architectures of ANNs~\cite{wu2018spatio,shrestha2018slayer,wu2019direct,zhang2020temporal,zheng2020going}, which ignores the ubiquitous feedback connections in the human brain. 
Feedback (recurrent) circuits are critical to human's vision system for object recognition~\cite{kar2019evidence}. Meanwhile, \cite{kubilius2019brain} shows that shallow ANNs with recurrence achieve higher functional fidelity of human brains and similarly high performance on large-scale vision recognition tasks, compared with deep ANNs. So incorporating feedback connections enables neural networks to be shallower, more efficient, and more brain-like.
As for SNNs, feedback was popular in early models like Liquid State Machine~\cite{maass2002real}, which leverages a recurrent reservoir layer with weights fixed or trained by unsupervised methods. And compared with the uneconomical cost for ANNs to incorporate feedback connections by unfolding along time, SNNs naturally compute with multiple time steps, which inherently supports feedback connections. 
Most recent SNN models imitate feedforward architectures because they were once lacking effective training methods and thus they borrow everything from successful ANNs. We focus on another direction, i.e. feedback SNN, which is a natural choice for visual tasks as well.

In this work, we consider the training of feedback spiking neural networks (FSNN), and propose a novel method based on the Implicit Differentiation on the Equilibrium state (IDE). Inspired by recent advances in implicit models~\cite{bai2019deep,bai2020multiscale}, which treat weight-tied ANNs as solving a fixed-point equilibrium equation and propose alternative implicit models defined by the equation, we derive that when the average inputs converge to an equilibrium, the average firing rates of FSNNs would gradually evolve to an equilibrium state along time, which follows a fixed-point equation as well. Then we view the forward computation of FSNN as a black-box solver for the fixed-point equation, and borrow the idea of implicit differentiation from implicit models~\cite{bai2019deep,bai2020multiscale} to calculate the gradients, which only relies on the equation rather than the exact forward procedure. In this way, gradient calculation is agnostic to the spiking function in SNN, thus avoiding the common difficulties in SNN training. 
While implicit differentiation may seem too abstract to be computed in the brain, we briefly discuss the biological plausibility and show that it only requires computing another equilibrium along the inverse connections of neurons. Besides, we incorporate the multi-layer structure into the feedback model for better representation ability. Our contributions include: 
\vspace{-1mm}
\begin{enumerate}
    \item We are the first to theoretically derive the equilibrium states with a fixed-point equation for the average firing rates of FSNNs with the (leaky) integrate and fire model under both continuous and discrete views. According to this, the forward computation of FSNNs can be interpreted as solving a fixed-point equation.
\vspace{-0.25mm}
    \item We propose a novel training method for FSNNs based on the implicit differentiation on the equilibrium state, which is decoupled from the forward computational graph and avoids SNN training problems, e.g. non-differentiability and large memory costs. We also discuss the biological plausibility and demonstrate the connection to the Hebbian learning rule.
\vspace{-0.25mm}
    \item We conduct extensive experiments on MNIST, Fashion-MNIST, N-MNIST, CIFAR-10, and CIFAR-100, which demonstrate the superior results of our methods with fewer neurons and parameters in a small number of time steps for both static images and neuromorphic inputs. Especially, our directly trained model can outperform the state-of-the-art SNN performance on the complex CIFAR-100 dataset with only 30 time steps.
\end{enumerate}
\vspace{-2.5mm}

\section{Related Work}
\vspace{-2mm}

\textbf{Training Methods for Spiking Neural Networks.}\quad
Early works apply biologically inspired method, spike-time dependent plasticity (STDP)~\cite{diehl2015unsupervised}, to formulate a bottom-up unsupervised learning rule, or choose reward-modulated STDP~\cite{legenstein2008learning} with limited supervision. 
Since the rise of successful ANNs, error backpropagation and gradient descent have inspired many methods. One direction is to convert a trained ANN to SNN~\cite{hunsberger2015spiking,rueckauer2017conversion,sengupta2019going,rathi2019enabling,deng2021optimal,yan2021near}. However, they suffer from conversion errors and extremely large simulation time steps. The other methods are to directly calculate the gradient and train SNNs. These methods either compute the gradient with respect to spiking times~\cite{bohte2002error,zhang2020temporal}, or leverage a surrogate derivative for discontinuous spiking functions~\cite{lee2016training,wu2018spatio,bellec2018long,jin2018hybrid,shrestha2018slayer,wu2019direct,neftci2019surrogate,zheng2020going}, or combine them~\cite{kim2020unifying}.  
However, they suffer from the ``dead neuron'' problem~\cite{shrestha2018slayer} or accumulated approximation error, and typically require backpropagation along the computational graph to be unfolded by many time steps, which is memory-consuming and biologically implausible. As for SNN with feedback connection, ~\cite{zhang2019spike} proposes the ST-RSBP method, which backpropagates errors at the spike-train level. They simply approximate the activation function of a neuron as a linear operation, and require long time steps for satisfactory results. In this work, we propose a new training method that does not rely on the exact reverse of the forward computation, which avoids problems of non-differentiability and large memory costs, and only requires short time steps for superior performance. There are also works trying methods other than BP along computational graphs to train SNNs, such as equilibrium propagation~\cite{o2019training}. However, \cite{o2019training} defines a complex computation form rather than common SNN models, and can hardly achieve satisfactory results on the simple MNIST dataset. Instead, our work is based on SNN models applicable on neuromorphic hardware and demonstrates promising results on more complex datasets.

\textbf{Equilibrium of Neural Networks.}\quad
The study on the equilibrium of neural networks originates from energy-based models, e.g. Hopfield Network~\cite{hopfield1982neural,hopfield1984neurons}. They view the dynamics or iterative procedures of feedback (recurrent) neural networks as minimizing an energy function, which will converge to a minimum of the energy. 
Based on the energy, several training methods are proposed, including recurrent backpropagation~\cite{almeida1987learning,pineda1987generalization} and more recent equilibrium propagation (EP)~\cite{scellier2017equilibrium}. They heavily rely on the energy function and can hardly achieve competitive results as deep neural networks do. Deep equilibrium models~\cite{bai2019deep,bai2020multiscale}, on the other hand, are recently proposed models which treat weight-tied deep ANNs as solving a fixed-point equilibrium point, and propose alternative implicit models defined by the fixed-point equations rather than energy functions. They express the entire deep network as an equilibrium computation and solve and train it by root-finding methods and implicit differentiation, respectively, which achieves superior results. 
Most of these works are based on ANNs, except that \cite{o2019training} generalizes the EP method to neurons with binary communications. They define a complex neuron computation form and follow the methodology of energy-based EP method to approximate the gradients. Several recent works also study the equilibrium of SNNs~\cite{li2020minimax,mancoo2020understanding}. They consider equilibrium from the perspective of solving a constrained optimization problem, but either do not propose to guide the training based on it or are limited in single-layer structure and simple problems. 
Differently, we are the first to derive the equilibrium state with a fixed-point equation for average firing rates of FSNNs with common SNN models, and propose to train SNNs by exact gradients through implicit differentiation, which is also scalable to multi-layer structure and deep learning problems.

\section{Preliminaries}

\subsection{Spiking Neural Network Models}

Spiking neurons, different from real-valued ANNs, communicate with each other by binary spike trains along time. Each neuron maintains a membrane potential, which integrates input spike trains, and the neuron would generate an output spike once the membrane potential exceeds a threshold. The commonly used integrate and fire (IF) model and leaky integrate and fire (LIF) model describe the dynamics of the membrane potential as:
\begin{equation}
    \begin{aligned}
        \textbf{IF:}\quad &\frac{du}{dt} = R\cdot I(t),\quad &u < V_{th},\\
        \textbf{LIF:}\quad &\tau_m\frac{du}{dt} = -(u-u_{rest}) + R\cdot I(t),\quad &u < V_{th},
    \end{aligned}
\end{equation}
where $u$ is the membrane potential, $I$ is the input current, $V_{th}$ is the spiking threshold, and $R$ and $\tau_m$ are resistance and time constant, respectively. 
Once $u$ reaches $V_{th}$ at time $t^f$, a spike is generated and $u$ is reset to the resting potential $u=u_{rest}$, which is usually taken as $0$. The spike train is expressed by the Dirac delta function: $s(t) = \sum_{t^f}\delta(t-t^f)$. 
We consider the simple current model $I_i(t)=\sum_j w_{ij}s_j(t) +b$, where 
$w_{ij}$ is the weight from neuron $j$ to neuron $i$, which is the same as ANN. After discretization, the general computation form for the IF and LIF model is described as:
\begin{equation}
    \left\{
    \begin{aligned}
        &u_i\left[t + 0.5\right] = \lambda u_i[t] + \sum_j w_{ij}s_j[t] + b,\\
        &s_i[t + 1] = H(u_i\left[t+0.5\right] - V_{th}),\\
        &u_i[t + 1] = u_i\left[t + 0.5\right] - V_{th}s_i[t + 1],
    \end{aligned}
    \right.
    \label{eq.discrete}
\end{equation}
where $H(x)$ is the Heaviside step function, i.e. the non-differentiable spiking function,  $s_i[t]$ is the binary spike train of neuron $i$, and $\lambda$ is 1 for the IF model while $\lambda<1$ is a leaky term related to the constant $\tau_m$ and discretization time interval for the LIF model. The constant $R$, $\tau_m$, and time step size are absorbed into the weights $w_{ij}$ and bias $b$. 
We use subtraction as the reset operation.

\subsection{Implicit Differentiation on the Fixed-Point Equation}\label{sec:id}

We consider a fixed-point equation $\mathbf{a}=f_{\theta}(\mathbf{a})$ parameterized by $\theta$. Let $\mathcal{L}(\mathbf{a}^*)$ denote the objective function with respect to the equilibrium state $\mathbf{a}^*$, and let $g_{\theta}(\mathbf{a})=f_{\theta}(\mathbf{a})-\mathbf{a}$. The implicit differentiation on the equation satisfies $\left(I-\frac{\partial f_{\theta}(\mathbf{a}^*)}{\partial \mathbf{a}^*}\right)\frac{\mathrm{d}\mathbf{a}^*}{\mathrm{d} \theta}=\frac{\partial f_{\theta}(\mathbf{a}^*)}{\partial \theta}$~\cite{bai2019deep}.
Therefore, the differentiation of $\mathcal{L}(\mathbf{a}^*)$ with respect to parameters can be calculated based on implicit differentiation:
\begin{equation}
    \frac{\partial \mathcal{L}(\mathbf{a}^*)}{\partial \theta} = -\frac{\partial \mathcal{L}(\mathbf{a}^*)}{\partial \mathbf{a}^*} \left(J_{g_{\theta}}^{-1}\vert_{\mathbf{a}^*}\right) \frac{\partial f_{\theta}(\mathbf{a}^*)}{\partial \theta},
\end{equation}
where $J_{g_{\theta}}^{-1}\vert_{\mathbf{a}^*}$ is the inverse Jacobian of $g_{\theta}$ evaluated at $\mathbf{a}^*$. For the proof please refer to \cite{bai2019deep}. To solve the inverse Jacobian, we follow \cite{bai2019deep,bai2020multiscale} and solve an alternative linear system $\left(J_{g_{\theta}}^T\vert_{\mathbf{a}^*}\right)\mathbf{x}+\left(\frac{\partial \mathcal{L}(\mathbf{a}^*)}{\partial \mathbf{a}^*}\right)^T=0$. We can leverage Broyden’s method~\cite{bai2019deep,bai2020multiscale}, which is a second-order quasi-Newton approach; or we can alternatively use a fixed-point update scheme 
$\mathbf{x}=(J_{f_{\theta}}^T\vert_{\textbf{a}^*})\mathbf{x}+\left(\frac{\partial \mathcal{L}(\mathbf{a}^*)}{\partial \mathbf{a}^*}\right)^T$ since $J_{g_{\theta}}^T\vert_{\mathbf{a}^*}=J_{f_{\theta}}^T\vert_{\mathbf{a}^*}-I$, and it converges with linear convergence rate as long as $\lVert J_{f_{\theta}}^T\vert_{\mathbf{a}^*} \rVert< 1$. In this way, gradients for the parameters can be calculated only with the equilibrium state and equation. 

\section{Proposed IDE Method}\label{sec:method}

In this section, we first derive the equilibrium state of FSNNs under both continuous and discrete views, and demonstrate that FSNNs can be treated as solving a fixed-point equation. Then we introduce how to train the network by the proposed IDE method based on the equation and briefly discuss the biological plausibility. Finally, we incorporate the multi-layer structure into the model for more non-linearity and stronger representation ability.

\subsection{Derivation of Equilibrium States for Feedback Spiking Neural Networks}\label{equilibrium derivation}

\subsubsection{Continuous View}

We first consider a group of spiking neurons with feedback connections. Let $\mathbf{u}(t)$ and $\mathbf{s}(t)$ denote the membrane potentials and spikes of these neurons at time $t$ respectively, $\mathbf{x}(t)$ denote the inputs, $\mathbf{W}$ denote the feedback weight matrix, $\mathbf{F}$ denote the weight matrix from inputs to these neurons, and $\mathbf{b}$ denote a constant bias. Under the IF model, the dynamics of membrane potentials are expressed as:
\begin{equation}
    \frac{\mathrm{d}\mathbf{u}}{\mathrm{d}t}=\mathbf{W}\mathbf{s}(t-\Delta t_d) + \mathbf{F}\mathbf{x}(t) + \mathbf{b} - V_{th}\mathbf{s}(t),
\end{equation}
where $\Delta t_d$ is a time delay of feedback connections, and $V_{th}$ is the threshold. Note that $\mathbf{W}$ and $\mathbf{F}$ represent linear operations including both fully-connected and convolutional layers.
Define the average firing rates as $\mathbf{a}(t)=\frac{1}{t}\int_0^t \mathbf{s}(\tau)\mathrm{d}\tau$, and the average inputs as $\mathbf{\overline{x}}(t)=\frac{1}{t}\int_0^t \mathbf{x}(\tau)\mathrm{d}\tau$. Then through integration, we have:
\begin{equation}
    \mathbf{a}(t) = \frac{1}{V_{th}}\left(\frac{t-\Delta t_d}{t}\mathbf{W}\mathbf{a}(t-\Delta t_d)+\mathbf{F}\mathbf{\overline{x}}(t)+\mathbf{b}-\frac{\mathbf{u}(t)}{t}\right).
    \label{eq.con1}
\end{equation}

Eq.(\ref{eq.con1}) roughly follows a fixed-point update scheme except the existence of $\mathbf{u}(t)$. Now we dive into $\mathbf{u}(t)$. Since neurons will not spike when the accumulated $\mathbf{u}(t)$ is negative, if $v_i(t)=\left(\frac{t-\Delta t_d}{t}\mathbf{W}\mathbf{a}(t-\Delta t_d)+\mathbf{F}\mathbf{\overline{x}}(t)+\mathbf{b}\right)_i<0$, ideally neuron $i$ generates no spike and its accumulated negative term remains in $\mathbf{u}_i(t)$. So $\mathbf{u}_i(t)$ can be divided as $\mathbf{u}_i(t)=\mathbf{u}^-_i(t)+\mathbf{u}^+_i(t)$, where  
$\frac{1}{t}\mathbf{u}^-_i(t)=\min (v_i(t), 0)$ is the remaining negative term, 
and $\mathbf{u}^+_i(t)$ is the positive one typically bounded in the range between $0$ and $V_{th}$. There could be random error in $\mathbf{u}^+_i(t)$ in the context of random arrival of spikes rather than the average condition (e.g. the average is 0, but a large positive input followed by a negative one will generate an unexpected spike). Despite this, we can still assume $\mathbf{u}^+_i(t)$ to be bounded by a constant when average inputs converge. By this decomposition, we have the equation with the element-wise ReLU function ($\text{ReLU}(x)=
\max (x, 0)
$) and bounded $\mathbf{u}^+(t)$:
\begin{equation}
    \mathbf{a}(t) = \text{ReLU}\left(\frac{1}{V_{th}}\left(\frac{t-\Delta t_d}{t}\mathbf{W}\mathbf{a}(t-\Delta t_d)+\mathbf{F}\mathbf{\overline{x}}(t)+\mathbf{b}\right)\right)-\frac{1}{V_{th}}\frac{\mathbf{u}^+(t)}{t}.
    \label{eq.con2}
\end{equation}

With Eq.(\ref{eq.con2}), we can derive that the average firing rate will gradually converge to an equilibrium state.

\newtheorem{thm}{\bf Theorem}
\begin{thm}\label{thm1}
If the average inputs converge to an equilibrium point $\mathbf{\overline{x}}(t)\rightarrow \mathbf{x^*}$, and there exists constant $c$ and $\gamma<1$ such that $\vert\mathbf{u}^+_i(t)\vert\leq c,\forall i,t$ and $\lVert \mathbf{W} \rVert_2 \leq \gamma V_{th}$, then the average firing rates of FSNN with continuous IF model in Eq.(\ref{eq.con2}) will converge to an equilibrium point $\mathbf{a}(t)\rightarrow \mathbf{a^*}$, which satisfies the fixed-point equation $\mathbf{a^*} = \text{ReLU}\left(\frac{1}{V_{th}}\left(\mathbf{W}\mathbf{a^*}+\mathbf{F}\mathbf{x^*}+\mathbf{b}\right)\right)$.
\end{thm}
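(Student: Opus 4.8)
The plan is to recognize the right-hand side of Eq.(\ref{eq.con2}) as an asymptotically vanishing perturbation of one fixed contraction map, and then to run a standard ``contraction-plus-vanishing-error'' estimate along the chain of delay steps $t, t-\Delta t_d, t-2\Delta t_d,\dots$. First I would handle existence and uniqueness of $\mathbf{a^*}$: set $T_\infty(\mathbf{a}) := \text{ReLU}\!\left(\frac{1}{V_{th}}(\mathbf{W}\mathbf{a}+\mathbf{F}\mathbf{x^*}+\mathbf{b})\right)$; since $\text{ReLU}$ is $1$-Lipschitz componentwise and $\lVert\mathbf{W}\rVert_2\le\gamma V_{th}$, we get $\lVert T_\infty(\mathbf{a})-T_\infty(\mathbf{a}')\rVert_2 \le \frac{1}{V_{th}}\lVert\mathbf{W}(\mathbf{a}-\mathbf{a}')\rVert_2 \le \gamma\lVert\mathbf{a}-\mathbf{a}'\rVert_2$, so $T_\infty$ is a $\gamma$-contraction with $\gamma<1$ and, by the Banach fixed-point theorem, has a unique finite fixed point $\mathbf{a^*}$, which by definition satisfies the equation in the statement.

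Next I would derive a one-step recursion for $d(t):=\lVert\mathbf{a}(t)-\mathbf{a^*}\rVert_2$. Subtracting the fixed-point equation from Eq.(\ref{eq.con2}), using the $1$-Lipschitz property of $\text{ReLU}$, the triangle inequality, $\lVert\mathbf{W}\rVert_2\le\gamma V_{th}$, and the splitting $\frac{t-\Delta t_d}{t}\mathbf{a}(t-\Delta t_d)-\mathbf{a^*} = \frac{t-\Delta t_d}{t}(\mathbf{a}(t-\Delta t_d)-\mathbf{a^*})-\frac{\Delta t_d}{t}\mathbf{a^*}$ together with $\frac{t-\Delta t_d}{t}\le 1$, I expect
\[
d(t)\ \le\ \gamma\, d(t-\Delta t_d) + \epsilon(t),\qquad
\epsilon(t) := \gamma\frac{\Delta t_d}{t}\lVert\mathbf{a^*}\rVert_2 + \frac{\lVert\mathbf{F}\rVert_2}{V_{th}}\lVert\overline{\mathbf{x}}(t)-\mathbf{x^*}\rVert_2 + \frac{\lVert\mathbf{u}^+(t)\rVert_2}{V_{th}\,t}.
\]
The hypotheses $\vert\mathbf{u}^+_i(t)\vert\le c$ (so $\lVert\mathbf{u}^+(t)\rVert_2$ is bounded by a constant) and $\overline{\mathbf{x}}(t)\to\mathbf{x^*}$ make $\epsilon(t)\to 0$, and $\sup_t\epsilon(t)<\infty$; feeding this constant bound back through the recursion also shows $d(t)$, hence $\mathbf{a}(t)$, stays bounded by some $D'$.

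Finally I would unroll the recursion. Fixing $t_1$ past which the above estimates hold and letting $m(t)$ be the largest integer with $t-m(t)\Delta t_d\ge t_1$ (so $m(t)\to\infty$ as $t\to\infty$), iteration of the one-step bound gives $d(t)\le \gamma^{m(t)}D' + \sum_{j=0}^{m(t)-1}\gamma^j\,\epsilon(t-j\Delta t_d)$. The first term vanishes; for the sum, given $\eta>0$ I pick $J$ with $\bar\epsilon\,\gamma^{J}/(1-\gamma)<\eta/3$ where $\bar\epsilon:=\sup_{s\ge t_1}\epsilon(s)$, bound the tail $j\ge J$ by $\eta/3$, and use that each of the finitely many head terms $j<J$ tends to $0$ as $t\to\infty$ (since $t-j\Delta t_d\to\infty$) to bound the head by $\eta/3$ for $t$ large. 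Hence $d(t)<\eta$ for all large $t$, i.e. $\mathbf{a}(t)\to\mathbf{a^*}$.

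I expect the main obstacle to be the continuous-time delayed recursion in the last step rather than any single inequality: unlike a clean discrete Banach iteration, one must control the perturbation uniformly along the whole continuum of delay chains $\{t-j\Delta t_d\}$ and must supply an a priori bound on $\mathbf{a}(t)$, which I would extract from the recursion itself rather than from physical reasoning about firing rates. The bookkeeping in Step 2 that folds the residual $\frac{\Delta t_d}{t}\mathbf{a^*}$ and the $\mathbf{u}^+(t)/t$ term into a single vanishing $\epsilon(t)$ is routine but is precisely what lets the contraction argument apply to Eq.(\ref{eq.con2}).
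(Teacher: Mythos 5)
Your proof is correct, and while it is built from the same ingredients as the paper's (a $\gamma$-contraction from $\lVert\mathbf{W}\rVert_2\le\gamma V_{th}$ plus a vanishing perturbation collecting the $\frac{\Delta t_d}{t}$, $\overline{\mathbf{x}}(t)-\mathbf{x^*}$, and $\mathbf{u}^+(t)/t$ terms, iterated along the delay chains $t-j\Delta t_d$), it is organized differently. The paper fixes an offset $\epsilon_t\in[0,\Delta t_d)$, proves the recursion for the \emph{successive differences} $\lVert\mathbf{a}_{\epsilon_t}^{i+1}-\mathbf{a}_{\epsilon_t}^{i}\rVert$, invokes Cauchy's convergence test to get a limit for each chain, identifies that limit as a solution of the fixed-point equation, and only then uses uniqueness (from the contraction) to conclude all chains share the same limit. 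You instead establish $\mathbf{a^*}$ up front via Banach's fixed-point theorem and bound the \emph{distance to the fixed point} $d(t)=\lVert\mathbf{a}(t)-\mathbf{a^*}\rVert$ directly, unrolling $d(t)\le\gamma\,d(t-\Delta t_d)+\epsilon(t)$ with a head/tail split. Your version is arguably the more robust of the two: deducing convergence of the chain from $\lVert\mathbf{a}^{i+1}-\mathbf{a}^{i}\rVert\to 0$ alone (which is all the paper's displayed bound literally yields, since $\epsilon(t)$ need not be summable) is not a valid application of Cauchy's criterion in general, whereas measuring distance to an already-constructed fixed point sidesteps that issue entirely and handles all offsets $\epsilon_t$ uniformly in one pass. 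Both proofs share the same minor unaddressed technicality of needing $\mathbf{a}(t)$ finite on an initial window to seed the recursion.
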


The proof can be found in Appendix C. Theorem~\ref{thm1} rigorously shows the equilibrium state under the IF model, and we can view the forward computation of FSNN as solving this fixed-point equation.

As for the LIF model, we can similarly define the weighted average firing rate $\mathbf{\hat{a}}(t)=\frac{\int_0^t \kappa(t-\tau)\mathbf{s}(\tau)\mathrm{d}\tau}{\int_0^t \kappa(t-\tau)\mathrm{d}\tau}$ and the weighted average inputs $\mathbf{\hat{x}}(t)=\frac{\int_0^t \kappa(t-\tau)\mathbf{x}(\tau)\mathrm{d}\tau}{\int_0^t \kappa(t-\tau)\mathrm{d}\tau}$, where $\kappa(\tau)=\exp(-\frac{\tau}{\tau_m})$ is the response kernel of the LIF model. In this setting, however, there could be random errors caused by $\mathbf{u}^+(t)$ as its denominator $\int_0^t \kappa(t-\tau)\mathrm{d}\tau$ does not go to infinity. We consider it as an approximate solver for the equilibrium with random errors, as shown in Proposition~\ref{pro1}. Please refer to Appendix E for details.

\newtheorem{pro}{\bf Proposition}
\begin{pro}\label{pro1}
If the weighted average inputs converge to an equilibrium point $\mathbf{\hat{x}}(t)\rightarrow \mathbf{x^*}$, and there exists constant $c$ and $\gamma<1$ such that $\vert\mathbf{u}^+_i(t)\vert\leq c,\forall i,t$ and $\lVert \mathbf{W} \rVert_2 \leq \gamma V_{th}$, then the weighted average firing rates $\mathbf{\hat{a}}(t)$ of FSNN with continuous LIF model gradually approximate an equilibrium point $\mathbf{a^*}$ with bounded random errors, which satisfies $\mathbf{a^*} = \text{ReLU}\left(\frac{1}{V_{th}}\left(\mathbf{W}\mathbf{a^*}+\mathbf{F}\mathbf{x^*}+\mathbf{b}\right)\right)$.
\end{pro}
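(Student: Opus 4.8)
The plan is to mirror the contraction argument behind Theorem~\ref{thm1}, the only essential difference being that integrating the continuous LIF dynamics against the exponential kernel $\kappa$ leaves a residual term that does \emph{not} vanish as $t\to\infty$. First I would solve the LIF membrane equation with the integrating factor $e^{t/\tau_m}$, which gives
\[
\mathbf{u}(t) = \mathbf{u}(0)\kappa(t) + \frac{1}{\tau_m}\int_0^t \kappa(t-\tau)\left(\mathbf{W}\mathbf{s}(\tau-\Delta t_d) + \mathbf{F}\mathbf{x}(\tau) + \mathbf{b} - V_{th}\mathbf{s}(\tau)\right)\mathrm{d}\tau .
\]
Using $\int_0^t \kappa(t-\tau)\,\mathrm{d}\tau = \tau_m\left(1-\kappa(t)\right)$ together with the definitions of $\mathbf{\hat{a}}$ and $\mathbf{\hat{x}}$, each convolution collapses to $\left(1-\kappa(t)\right)$ times a weighted average (the delayed feedback term contributing an extra factor $\kappa(\Delta t_d)$, up to boundary contributions of size $O(\kappa(t))$ that I would absorb or discard). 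Solving the resulting identity for $\mathbf{\hat{a}}(t)$ and then applying the same sign decomposition $\mathbf{u}=\mathbf{u}^-+\mathbf{u}^+$ used for the IF model — where $\mathbf{u}^-$ produces the ReLU — yields
\[
\mathbf{\hat{a}}(t) = \text{ReLU}\!\left(\frac{1}{V_{th}}\left(\mathbf{W}\mathbf{\hat{a}}(t-\Delta t_d) + \mathbf{F}\mathbf{\hat{x}}(t) + \mathbf{b} - \bm{\epsilon}(t)\right)\right),\qquad \bm{\epsilon}(t) := \frac{\mathbf{u}^+(t)-\mathbf{u}(0)\kappa(t)}{1-\kappa(t)} .
\]

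The crucial point is that, unlike the IF case where the analogous residual was $\mathbf{u}^+(t)/t\to 0$, here the denominator $1-\kappa(t)\to 1$ rather than diverging, so $\bm{\epsilon}(t)$ only stays bounded: for all $t\ge t_0$ with $t_0$ fixed, $\lVert\bm{\epsilon}(t)\rVert_2 \le \rho$ for a constant $\rho$ depending on $c$, $\mathbf{u}(0)$ and $t_0$ (this is exactly where the hypothesis $\lvert\mathbf{u}^+_i(t)\rvert\le c$ enters). I would then invoke the contraction: since ReLU is $1$-Lipschitz and $\lVert\mathbf{W}\rVert_2\le\gamma V_{th}$, the map $T(\mathbf{a}) = \text{ReLU}\!\left(\tfrac{1}{V_{th}}(\mathbf{W}\mathbf{a}+\mathbf{F}\mathbf{x^*}+\mathbf{b})\right)$ is a $\gamma$-contraction in $\ell_2$ and hence has a unique fixed point $\mathbf{a^*}$, which is exactly the claimed equation. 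Writing the recursion above as a perturbation of $T$ and using $1$-Lipschitzness of ReLU gives
\[
\lVert\mathbf{\hat{a}}(t)-\mathbf{a^*}\rVert_2 \le \gamma\,\lVert\mathbf{\hat{a}}(t-\Delta t_d)-\mathbf{a^*}\rVert_2 + \frac{\lVert\mathbf{F}\rVert_2}{V_{th}}\lVert\mathbf{\hat{x}}(t)-\mathbf{x^*}\rVert_2 + \frac{\rho}{V_{th}} .
\]
Iterating this along the progression $t,\,t-\Delta t_d,\,t-2\Delta t_d,\dots$ and using $\mathbf{\hat{x}}(t)\to\mathbf{x^*}$ to kill the middle term in the limit, I obtain $\limsup_{t\to\infty}\lVert\mathbf{\hat{a}}(t)-\mathbf{a^*}\rVert_2 \le \frac{\rho}{V_{th}(1-\gamma)}$, i.e. $\mathbf{\hat{a}}(t)$ approximates $\mathbf{a^*}$ with a bounded error.

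I expect two spots to demand care. The minor one is the bookkeeping for the delayed feedback term: after the substitution $\tau\mapsto\tau+\Delta t_d$ the kernel becomes $\kappa(t-\Delta t_d-\tau)$ on $[-\Delta t_d,\,t-\Delta t_d]$, and I must verify that replacing it by $\kappa(\Delta t_d)\,\kappa(t-\tau)$ on $[0,t]$ only costs terms decaying like $\kappa(t)$, and that keeping the factor $\kappa(\Delta t_d)<1$ can only tighten the contraction constant. The real heart of the matter is making the phrase ``bounded random errors'' precise: I need the uniform bound on $\mathbf{u}^+(t)$ assumed in the hypothesis, and then I must ensure the per-step errors do not compound — which is precisely what the geometric factor $1/(1-\gamma)$ in the last estimate controls. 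Everything else (the integrating-factor computation, the ReLU/sign decomposition, the $\ell_2$ contraction estimate) is routine and parallels Appendix~C.
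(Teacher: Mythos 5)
Your proposal is correct and follows essentially the same route as the paper's Appendix~E: integrate the LIF dynamics against the kernel $\kappa$, apply the $\mathbf{u}=\mathbf{u}^-+\mathbf{u}^+$ decomposition to produce the ReLU, observe that the normalizer $\int_0^t\kappa(\tau)\,\mathrm{d}\tau\to\tau_m$ stays bounded (unlike the $1/t$ factor in the IF case) so the residual term only remains bounded rather than vanishing, and then exploit the $\gamma$-contraction from $\lVert\mathbf{W}\rVert_2\le\gamma V_{th}$. The one genuine difference is the quantity you bound at the end: you iterate the perturbed contraction to control $\lVert\mathbf{\hat{a}}(t)-\mathbf{a^*}\rVert$ directly, whereas the paper bounds the fixed-point residual $e(t)=\lVert\mathrm{ReLU}(\tfrac{1}{V_{th}}(\mathbf{W}\mathbf{\hat{a}}(t)+\mathbf{F}\mathbf{x^*}+\mathbf{b}))-\mathbf{\hat{a}}(t)\rVert$, using an auxiliary recursion to show $\lVert\mathbf{\hat{a}}(t)-\mathbf{\hat{a}}(t-\Delta t_d)\rVert\le c'/(1-\gamma)$; the two formulations are equivalent up to a factor $1/(1-\gamma)$, and yours is arguably the more direct statement of ``approximates $\mathbf{a^*}$ with bounded error.'' One small correction to your bookkeeping worry: the delayed feedback term does not contribute a persistent factor $\kappa(\Delta t_d)$ --- the coefficient multiplying $\mathbf{W}\mathbf{\hat{a}}(t-\Delta t_d)$ is the ratio of normalizers $\bigl(\int_0^{t-\Delta t_d}\kappa(\tau)\,\mathrm{d}\tau\bigr)/\bigl(\int_0^{t}\kappa(\tau)\,\mathrm{d}\tau\bigr)=\bigl(1-\kappa(t-\Delta t_d)\bigr)/\bigl(1-\kappa(t)\bigr)$, which is at most $1$ and tends to $1$ --- but since it never exceeds $1$ the contraction constant $\gamma$ is preserved and your argument goes through unchanged.
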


\subsubsection{Discrete View}

In practice, we will simulate SNNs with discretization. 
Now consider the computation in Eq.(\ref{eq.discrete}). With feedback connections, the update equation of membrane potentials under the IF model is:
\begin{equation}
    \mathbf{u}[t+1] = \mathbf{u}[t] + \mathbf{W}\mathbf{s}[t] + \mathbf{F}\mathbf{x}[t] + \mathbf{b} - V_{th}\mathbf{s}[t+1],
\end{equation}
where we treat the feedback delay in one time step for simplicity.
Define the average firing rates as $\mathbf{a}[t]=\frac{1}{t}\sum_{\tau=1}^t \mathbf{s}[\tau]$, the average inputs as $\mathbf{\overline{x}}[t]=\frac{1}{t+1}\sum_{\tau=0}^t \mathbf{x}[\tau]$, and $\mathbf{u}[0]=\mathbf{0},\mathbf{s}[0]=\mathbf{0}$. By summation, we have:
\begin{equation}
    \mathbf{a}[t+1] = \frac{1}{V_{th}}\left(\frac{t}{t+1}\mathbf{W}\mathbf{a}[t]+\mathbf{F}\mathbf{\overline{x}}[t]+\mathbf{b}-\frac{\mathbf{u}[t+1]}{t+1}\right).
    \label{eq.dis1}
\end{equation}

Different from the continuous view, $\mathbf{a}[t]$ is bounded in the range of $[0, 1]$, since there could be at most $t$ spikes during $t$ time steps. 
Therefore, $\mathbf{u}_i[t]$ will maintain both the negative terms and the exceeded positive ones. 
Similarly, 
$\mathbf{u}_i[t]=\mathbf{u}^-_i[t]+\mathbf{u}^+_i[t]$, where $\frac{1}{t}\mathbf{u}^-_i[t]=
\min \left(\max \left(v_i[t]-V_{th}, 0\right), v_i[t]\right)$ is the exceeded term, and $\mathbf{u}^+_i[t]$ is assumed to be bounded by a constant as previously indicated. Then:
\begin{equation}
    \mathbf{a}[t+1] = \sigma\left(\frac{1}{V_{th}}\left(\frac{t}{t+1}\mathbf{W}\mathbf{a}[t]+\mathbf{F}\mathbf{\overline{x}}[t]+\mathbf{b}\right)\right)-\frac{1}{V_{th}}\frac{\mathbf{u}^+[t+1]}{t+1},
    \text{where}\, \sigma(x)=\left\{\begin{aligned}
    1,\quad &x>1\\
    x,\quad &0\leq x\leq1\\
    0,\quad &x<0\\
\end{aligned}\right..
    \label{eq.dis2}
\end{equation}
With Eq.(\ref{eq.dis2}), we can derive the equilibrium state under discrete view.

\begin{thm}\label{thm2}
If the average inputs converge to an equilibrium point $\mathbf{\overline{x}}[t]\rightarrow \mathbf{x^*}$, and there exists constant $c$ and $\gamma<1$ such that $\vert\mathbf{u}^+_i[t]\vert\leq c,\forall i,t$ and $\lVert \mathbf{W} \rVert_2 \leq \gamma V_{th}$, then the average firing rates of FSNN with discrete IF model in Eq.(\ref{eq.dis2}) will converge to an equilibrium point $\mathbf{a}[t]\rightarrow \mathbf{a^*}$, which satisfies the fixed-point equation $\mathbf{a^*} = \sigma\left(\frac{1}{V_{th}}\left(\mathbf{W}\mathbf{a^*}+\mathbf{F}\mathbf{x^*}+\mathbf{b}\right)\right)$.
\end{thm}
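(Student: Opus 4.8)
The plan is to recognize Eq.(\ref{eq.dis2}) as a \emph{perturbed} fixed-point iteration for the contraction map $\Phi(\mathbf{a}) := \sigma\!\left(\frac{1}{V_{th}}(\mathbf{W}\mathbf{a}+\mathbf{F}\mathbf{x^*}+\mathbf{b})\right)$ and to drive the perturbation terms to zero as $t\to\infty$; the structure is the discrete analogue of the argument behind Theorem~\ref{thm1}, with $\sigma$ replacing $\text{ReLU}$ and sums replacing integrals.

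First I would establish that $\Phi$ is a $\gamma$-contraction on $\mathbb{R}^n$. The saturation nonlinearity $\sigma$ is applied componentwise and each component is $1$-Lipschitz (it is a clamp, a composition of $\min$ and $\max$ with constants), hence $\lVert\sigma(\mathbf{x})-\sigma(\mathbf{y})\rVert_2\le\lVert\mathbf{x}-\mathbf{y}\rVert_2$; combined with $\lVert\mathbf{W}\rVert_2\le\gamma V_{th}$ this gives $\lVert\Phi(\mathbf{a})-\Phi(\mathbf{a}')\rVert_2\le\gamma\lVert\mathbf{a}-\mathbf{a}'\rVert_2$ with $\gamma<1$. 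By the Banach fixed-point theorem $\Phi$ has a unique fixed point, which is exactly the $\mathbf{a^*}$ claimed in the statement.

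Next I would write the iterate as $\mathbf{a}[t+1]=\Phi_t(\mathbf{a}[t])$ with the time-varying map $\Phi_t(\mathbf{a})=\sigma\!\left(\frac{1}{V_{th}}\!\left(\frac{t}{t+1}\mathbf{W}\mathbf{a}+\mathbf{F}\mathbf{\overline{x}}[t]+\mathbf{b}-\frac{\mathbf{u}^+[t+1]}{t+1}\right)\right)$, and bound the distance to equilibrium by the triangle inequality: $\lVert\mathbf{a}[t+1]-\mathbf{a^*}\rVert_2\le\lVert\Phi_t(\mathbf{a}[t])-\Phi(\mathbf{a}[t])\rVert_2+\lVert\Phi(\mathbf{a}[t])-\Phi(\mathbf{a^*})\rVert_2\le\varepsilon_t+\gamma\lVert\mathbf{a}[t]-\mathbf{a^*}\rVert_2$. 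Using the nonexpansiveness of $\sigma$, the perturbation $\varepsilon_t$ is controlled by $\frac{1}{V_{th}}\!\left(\frac{1}{t+1}\lVert\mathbf{W}\mathbf{a}[t]\rVert_2+\lVert\mathbf{F}\rVert_2\,\lVert\mathbf{\overline{x}}[t]-\mathbf{x^*}\rVert_2+\frac{1}{t+1}\lVert\mathbf{u}^+[t+1]\rVert_2\right)$. Here the key structural fact specific to the discrete view is that $\mathbf{a}[t]\in[0,1]^n$ (at most $t$ spikes in $t$ steps), so $\lVert\mathbf{W}\mathbf{a}[t]\rVert_2$ is uniformly bounded; together with $\lVert\mathbf{u}^+[t+1]\rVert_2\le c\sqrt{n}$ from the standing assumption and the hypothesis $\mathbf{\overline{x}}[t]\to\mathbf{x^*}$, this yields $\varepsilon_t\to0$.

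Finally I would invoke the elementary lemma that $d_{t+1}\le\gamma d_t+\varepsilon_t$ with $0\le\gamma<1$ and $\varepsilon_t\to0$ forces $d_t\to0$: unrolling gives $d_t\le\gamma^{t}d_0+\sum_{k=0}^{t-1}\gamma^{t-1-k}\varepsilon_k$, and splitting the tail sum at an index beyond which $\varepsilon_k$ is small shows the right-hand side is eventually arbitrarily small. Applying this with $d_t=\lVert\mathbf{a}[t]-\mathbf{a^*}\rVert_2$ gives $\mathbf{a}[t]\to\mathbf{a^*}$, and $\mathbf{a^*}$ solves $\mathbf{a^*}=\sigma\!\left(\frac{1}{V_{th}}(\mathbf{W}\mathbf{a^*}+\mathbf{F}\mathbf{x^*}+\mathbf{b})\right)$ by construction. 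The main obstacle, and really the only place care is needed, is verifying that every quantity feeding into $\varepsilon_t$ is bounded uniformly in $t$ — in particular exploiting the discrete saturation to pin $\mathbf{a}[t]$ in $[0,1]^n$ and relying on the explicit hypothesis $\lvert\mathbf{u}^+_i[t]\rvert\le c$ — since once $\varepsilon_t\to0$ is secured the contraction does the rest.
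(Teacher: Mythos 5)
Your proof is correct and follows essentially the same route as the paper's: both split $\mathbf{a}[t+1]$ into the ideal contraction $\sigma\!\left(\frac{1}{V_{th}}(\mathbf{W}\mathbf{a}[t]+\mathbf{F}\mathbf{x^*}+\mathbf{b})\right)$ plus perturbation terms of order $\frac{1}{t+1}\lVert\mathbf{W}\mathbf{a}[t]\rVert$, $\lVert\mathbf{F}(\mathbf{\overline{x}}[t]-\mathbf{x^*})\rVert$, and $\frac{1}{t+1}\lVert\mathbf{u}^+[t+1]\rVert$, and then close a recursion of the form $d_{t+1}\le\gamma d_t+\varepsilon_t$ with $\varepsilon_t\to0$. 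The only (cosmetic) difference is that you first obtain $\mathbf{a^*}$ from the Banach fixed-point theorem and track $d_t=\lVert\mathbf{a}[t]-\mathbf{a^*}\rVert$ directly, whereas the paper tracks consecutive differences $\lVert\mathbf{a}[t+1]-\mathbf{a}[t]\rVert$ and concludes via Cauchy's criterion before identifying the limit; both are valid, and yours is marginally more streamlined in the discrete setting.
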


The proof can be found in Appendix C. And for the LIF model, define the weighted average firing rate $\mathbf{\hat{a}}[t]=\frac{\sum_{\tau=1}^t \lambda^{t-\tau}\mathbf{s}[\tau]}{\sum_{\tau=1}^t \lambda^{t-\tau}}$ and the weighted average inputs $\mathbf{\hat{x}}[t]=\frac{\sum_{\tau=0}^t \lambda^{t-\tau}\mathbf{x}[\tau]}{\sum_{\tau=0}^t \lambda^{t-\tau}}$, then we can similarly consider it as an approximation solver for the equilibrium state with random errors, as shown in Proposition~\ref{pro2}. Please refer to Appendix E for details.

\begin{pro}\label{pro2}
If the weighted average inputs converge to an equilibrium point $\mathbf{\hat{x}}[t]\rightarrow \mathbf{x^*}$, and there exists constant $c$ and $\gamma<1$ such that $\vert\mathbf{u}^+_i[t]\vert\leq c,\forall i,t$ and $\lVert \mathbf{W} \rVert_2 \leq \gamma V_{th}$, then the weighted average firing rates $\mathbf{\hat{a}}[t]$ of FSNN with discrete LIF model gradually approximate an equilibrium point $\mathbf{a^*}$ with bounded random errors, which satisfies $\mathbf{a^*} = \sigma\left(\frac{1}{V_{th}}\left(\mathbf{W}\mathbf{a^*}+\mathbf{F}\mathbf{x^*}+\mathbf{b}\right)\right)$.
\end{pro}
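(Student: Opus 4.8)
The plan is to follow the same template as the proof of Theorem~\ref{thm2} for the discrete IF model, isolating the one structural change caused by the leaky factor $\lambda$: the geometric normalizer $S_t:=\sum_{\tau=1}^{t}\lambda^{t-\tau}=\frac{1-\lambda^{t}}{1-\lambda}$ converges to the \emph{finite} constant $\frac{1}{1-\lambda}$, rather than diverging like the arithmetic normalizer $t$ in the IF case, so the bounded residual potential $\mathbf{u}^{+}$ now contributes a uniformly bounded additive perturbation that does not vanish. Everything else — the contraction coming from $\lVert\mathbf{W}\rVert_2\le\gamma V_{th}$, the emergence of the clipped-ReLU $\sigma$, and the boundedness of $\mathbf{\hat{a}}[t]\in[0,1]^n$ ($n$ being the number of neurons in the group) — carries over essentially unchanged.

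First I would unroll the discrete LIF recursion $\mathbf{u}[t+1]=\lambda\mathbf{u}[t]+\mathbf{W}\mathbf{s}[t]+\mathbf{F}\mathbf{x}[t]+\mathbf{b}-V_{th}\mathbf{s}[t+1]$ with $\mathbf{u}[0]=\mathbf{0}$, $\mathbf{s}[0]=\mathbf{0}$, expressing $\mathbf{u}[t+1]$ as a $\lambda$-weighted sum. Using the identity $\sum_{k=0}^{t}\lambda^{t-k}=S_{t+1}$ and collecting terms gives $\mathbf{u}[t+1]=\mathbf{W}S_t\mathbf{\hat{a}}[t]+\mathbf{F}S_{t+1}\mathbf{\hat{x}}[t]+\mathbf{b}S_{t+1}-V_{th}S_{t+1}\mathbf{\hat{a}}[t+1]$, and dividing by $V_{th}S_{t+1}$ yields
$$\mathbf{\hat{a}}[t+1]=\frac{1}{V_{th}}\left(\frac{S_t}{S_{t+1}}\mathbf{W}\mathbf{\hat{a}}[t]+\mathbf{F}\mathbf{\hat{x}}[t]+\mathbf{b}-\frac{\mathbf{u}[t+1]}{S_{t+1}}\right),$$
where $\frac{S_t}{S_{t+1}}=\frac{1-\lambda^{t}}{1-\lambda^{t+1}}\to1$ and $1\le S_{t+1}\to\frac{1}{1-\lambda}$. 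Splitting $\mathbf{u}[t+1]=\mathbf{u}^{-}[t+1]+\mathbf{u}^{+}[t+1]$ as in the statement and absorbing the exceeded/clipped part $\mathbf{u}^{-}$ turns the affine update into the clipped-ReLU map (the same reduction already used for Eq.~(\ref{eq.dis2}) in the IF case, valid here because $\mathbf{\hat{a}}[t]\in[0,1]^n$), leaving
$$\mathbf{\hat{a}}[t+1]=\sigma\!\left(\frac{1}{V_{th}}\left(\frac{S_t}{S_{t+1}}\mathbf{W}\mathbf{\hat{a}}[t]+\mathbf{F}\mathbf{\hat{x}}[t]+\mathbf{b}-\frac{\mathbf{u}^{+}[t+1]}{S_{t+1}}\right)\right),$$
with $\left\lVert\frac{\mathbf{u}^{+}[t+1]}{V_{th}S_{t+1}}\right\rVert_2\le\frac{c\sqrt{n}}{V_{th}}$ uniformly in $t$; this uniformly bounded, spike-pattern-dependent term is the ``bounded random error''.

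Then I would run the contraction argument. Since $\sigma$ is $1$-Lipschitz and $\lVert\mathbf{W}\rVert_2\le\gamma V_{th}$ with $\gamma<1$, the limiting map $\Phi(\mathbf{a})=\sigma\!\left(\frac{1}{V_{th}}(\mathbf{W}\mathbf{a}+\mathbf{F}\mathbf{x}^{*}+\mathbf{b})\right)$ is a $\gamma$-contraction and hence has a unique fixed point $\mathbf{a}^{*}$, which is exactly the asserted equation. Writing $d_t=\lVert\mathbf{\hat{a}}[t]-\mathbf{a}^{*}\rVert_2$ and using $1$-Lipschitzness of $\sigma$ together with the boundedness of $\mathbf{\hat{a}}[t]$ and of the fixed matrices $\mathbf{W},\mathbf{F}$, I would obtain a recursion $d_{t+1}\le\gamma d_t+\delta_t+C$, where $C=\frac{c\sqrt{n}}{V_{th}}$ bounds the persistent perturbation and $\delta_t\to0$ collects the vanishing errors from $\mathbf{\hat{x}}[t]\to\mathbf{x}^{*}$ and from $\frac{S_t}{S_{t+1}}\to1$. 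Unrolling this gives $d_t\le\gamma^{t}d_0+\sum_{k=0}^{t-1}\gamma^{k}\delta_{t-1-k}+\frac{C}{1-\gamma}$, and since $\delta_t\to0$ the middle sum vanishes, so $\limsup_{t\to\infty}d_t\le\frac{C}{1-\gamma}$: the weighted average firing rates approach $\mathbf{a}^{*}$ up to a bounded error, as claimed.

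The step I expect to be the main obstacle is making the decomposition $\mathbf{u}[t+1]=\mathbf{u}^{-}[t+1]+\mathbf{u}^{+}[t+1]$ rigorous — i.e.\ verifying that it genuinely reproduces $\sigma$ despite the time-varying coefficient $\frac{S_t}{S_{t+1}}\neq1$, and then cleanly separating the membrane residual into a part that vanishes and a uniformly bounded part with an explicit $c$-proportional bound. Once that bookkeeping is in place, the contraction estimate and the $\limsup$ recursion are routine, and most of the algebra is inherited directly from the proof of Theorem~\ref{thm2}.
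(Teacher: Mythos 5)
Your proposal is correct and follows essentially the same route as the paper: the same unrolling of the LIF recursion into the update $\mathbf{\hat{a}}[t+1]=\sigma\bigl(\frac{1}{V_{th}}(\frac{S_t}{S_{t+1}}\mathbf{W}\mathbf{\hat{a}}[t]+\mathbf{F}\mathbf{\hat{x}}[t]+\mathbf{b}-\frac{\mathbf{u}^+[t+1]}{S_{t+1}})\bigr)$, the same $\mathbf{u}=\mathbf{u}^-+\mathbf{u}^+$ decomposition, and the same observation that the finite limit of the geometric normalizer is what leaves a persistent bounded perturbation. The only cosmetic difference is that the paper (in the continuous analogue it tells the reader to mirror) bounds the fixed-point residual $\lVert\Phi(\mathbf{\hat{a}})-\mathbf{\hat{a}}\rVert$ via successive differences, whereas you bound the distance to $\mathbf{a^*}$ directly through the perturbed-contraction recursion $d_{t+1}\le\gamma d_t+\delta_t+C$; the two are equivalent up to a factor of $\frac{1}{1-\gamma}$.
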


\subsection{Training of Feedback Spiking Neural Networks}

Based on the derivation in Section~\ref{equilibrium derivation}, we can view the forward computation of FSNNs as a black-box solver for the fixed-point equilibrium equations, with some errors caused by finite time steps or the LIF model. Then we assume the (weighted) average firing rates $\mathbf{a}[T]$ after $T$ time steps approximately follow the equations. We will demonstrate how to train FSNNs and its biological plausibility.

\subsubsection{Loss and Gradient Computation}\label{sec:loss}

Suppose that we simulate the SNN by $T$ time steps. Let $\mathbf{a}[T]$ denote the final (weighted) average firing rates. We configure a readout layer after these spiking neurons, which performs as a fully-connected classification layer, with the number of outputs as class numbers. We assume that these neurons will not spike or reset, and do classification based on the accumulated membrane potential. Then the outputs are equivalent as a linear transformation on $\mathbf{a}[T]$, i.e. $\mathbf{o}=\mathbf{W}^o \mathbf{a}[T]$. The loss $L$ is defined on $\mathbf{o}$ and labels $\mathbf{y}$ by commonly used loss functions  $\mathcal{L}(\mathbf{o},\mathbf{y})$, and we leverage the cross-entropy loss.

Let $f_{\theta}$ denote the function in fixed-point equation, e.g. $f_{\theta}(\mathbf{a^*}, \mathbf{x^*})=\sigma\left(\frac{1}{V_{th}}\left(\mathbf{W}\mathbf{a^*}+\mathbf{F}\mathbf{x^*}+\mathbf{b}\right)\right)$. The gradient for parameters can be calculated based on the implicit differentiation as described in Section~\ref{sec:id} by substituting $\mathbf{a^*}$ with $\mathbf{a}[T]$. Then parameters can be optimized based on common gradient descent methods, e.g. SGD~\cite{rumelhart1986learning} and its variants. A pseudocode is presented in Appendix B.

\subsubsection{Biological Plausibility of Implicit Differentiation}\label{bio-plausibility}

While implicit differentiation may seem too abstract for information propagation compared with backpropagation, we will briefly discuss the biological possibility for this calculation 
and its connection to the Hebbian learning rule~\cite{hebb2005organization}. 
Consider the equilibrium state $\mathbf{a^*}$ following $\mathbf{a^*} = f_{\theta}(\mathbf{a^*}, \mathbf{x^*}) = \text{ReLU}\left(\frac{1}{V_{th}}\left(\mathbf{W}\mathbf{a^*}+\mathbf{F}\mathbf{x^*}+\mathbf{b}\right)\right)$. Let $\mathbf{m}=f_{\theta}'(\mathbf{a^*}, \mathbf{x^*})=H\left(\frac{1}{V_{th}}\left(\mathbf{W}\mathbf{a^*}+\mathbf{F}\mathbf{x^*}+\mathbf{b}\right)\right)=H(\mathbf{a^*}), \mathbf{M}=\text{Diag}(\mathbf{m}), \widetilde{\mathbf{W}}=\mathbf{M}\mathbf{W}$, where $H(x)=\left\{\begin{aligned}
& 1,\,x>0\\
& 0,\,x\leq0\\
\end{aligned}\right.$. 
As indicated in Section~\ref{sec:id}, the gradient can be calculated as $\nabla_{\theta} \mathcal{L} = \left(\frac{\partial \mathcal{L}}{\partial\theta}\right)^T= \left(\frac{\partial f_{\theta}(\mathbf{a^*}, \mathbf{x^*})}{\partial\theta}\right)^T \left(I-\frac{1}{V_{th}}\widetilde{\mathbf{W}}^T\right)^{-1} \left(\frac{\partial\mathcal{L}}{\partial \mathbf{a^*}}\right)^T$, and we can leverage a fixed-point update scheme to solve for $\bm{\beta}^*=\left(I-\frac{1}{V_{th}}\widetilde{\mathbf{W}}^T\right)^{-1} \left(\frac{\partial\mathcal{L}}{\partial \mathbf{a^*}}\right)^T$ by iterating $\bm{\beta}^{k+1}=\frac{1}{V_{th}}\widetilde{\mathbf{W}}^T\bm{\beta}^k+\frac{\partial\mathcal{L}}{\partial \mathbf{a^*}}, \,\bm{\beta}^k\rightarrow\bm{\beta}^*$. 
This can be viewed as computing another equilibrium for these neurons: in this stage, neurons receive the inputs $\frac{\partial\mathcal{L}}{\partial \mathbf{a^*}}$ and they use the inverse directions of connections with a mask ($\mathbf{M}$ can be viewed as a mask matrix based on the firing condition in the first stage, which may be realized by some inhibition mechanisms) to compute for the equilibrium $\bm{\beta}^*$. If $\mathbf{W}$ is symmetric and $V_{th}=1$, 
it is similar to the energy-based method equilibrium propagation~\cite{scellier2017equilibrium} to use the same weight as the forward computation for a second equilibrium computation. 
Plugging $\bm{\beta}^*$ into the gradient and calculating $\left(\frac{\partial f_{\theta}(\mathbf{a^*}, \mathbf{x^*})}{\partial\theta}\right)^T$ explicitly, we have:
$\nabla_{\mathbf{W}} \mathcal{L} = \frac{1}{V_{th}}\mathbf{M}\bm{\beta}^* \mathbf{a^*}^T, \nabla_{\mathbf{F}} \mathcal{L} = \frac{1}{V_{th}}\mathbf{M}\bm{\beta}^* \mathbf{x^*}^T.$
It is interesting to find that the change of weight from neuron $j$ to neuron $i$ is proportional to the the equilibrium state of neuron $j$ in the first stage and that of neuron $i$ in the second stage, and is related to whether neuron $i$ fires in the first stage, because 
$\nabla_{\mathbf{W}_{i,j}} \mathcal{L} = \frac{1}{V_{th}}m_i\beta^*_i a^*_j, \nabla_{\mathbf{F}_{i,j}} \mathcal{L} = \frac{1}{V_{th}}m_i\beta^*_i x^*_j$. 
It is to some extent similar to the locally updated Hebbian learning rule $\Delta w_{i,j} \propto x_i x_j$ meaning that neurons wire together if they fire together~\cite{hebb2005organization}, except that we take average firing rates and some temporal information (two stages) into account. Therefore, implicit differentiation calculation is only related to two equilibrium states by the neurons with a mask possibly realized by inhibition mechanisms, and may correspond to modified locally updated rules, which is more biologically plausible than BPTT with surrogate derivatives for SNNs. Please note that `biological plausibility' here is a brief discussion in the context of the above properties, while there may be other aspects of biological implausibility as well.

\subsection{Incorporating Multi-layer Structure into The Feedback Model}

The multi-layer structure is commonly adopted in ANNs due to its stronger non-linearity and representation ability. To enhance the non-linearity of the fixed-point equilibrium equation, we propose to incorporate multi-layer structure into FSNNs as well. We configure $N$ subgroups of neurons as different layers, where the inputs have connections to the first layer, the $(l-1)$-th layer has connections to the $l$-th layer, and the last layer has feedback connections to the first layer. Let $\mathbf{u}^l(t)$ and $\mathbf{s}^l(t)$ denote the $l$-th layer, $\mathbf{x}(t)$ denote the inputs, $\mathbf{W}^1$ denote the feedback connection from the last layer to the first layer, and $\mathbf{F}^l$ denote the weight from the $(l-1)$-th layer (or input) to the $l$-th layer. The discrete update equations of membrane potentials are expressed as:
\begin{equation}
    \left\{
    \begin{aligned}
        &\mathbf{u}^1[t + 1] = \lambda \mathbf{u}^1[t] + \mathbf{W}^1\mathbf{s}^N[t] + \mathbf{F}^1\mathbf{x}[t]+\mathbf{b}^1-V_{th}\mathbf{s}^1[t+1],\\
        &\mathbf{u}^{l+1}[t + 1] = \lambda \mathbf{u}^{l+1}[t] + \mathbf{F}^{l+1}\mathbf{s}^l[t+1]+\mathbf{b}^{l+1}-V_{th}\mathbf{s}^{l+1}[t+1],\quad l=1,2,\cdots, N-1.\\
    \end{aligned}
    \right.
    \label{eq.multilayer}
\end{equation}

An illustration figure is presented in Appendix A. 
With similar definitions of average firing rates $\mathbf{a}^l[t]$ for different layers, and $\mathbf{u}_i^l[t]={\mathbf{u}_i^l}^-[t]+{\mathbf{u}_i^l}^+[t]$, we have the equilibrium state as the following.

\begin{thm}\label{thm3}
If the average inputs converge to an equilibrium point $\mathbf{\overline{x}}[t]\rightarrow \mathbf{x^*}$, and there exists constant $c$ and $\gamma<1$ such that $\vert{\mathbf{u}_i^l}^+[t]\vert\leq c,\forall i,l,t$ and $\lVert \mathbf{W}^1\rVert_2\lVert \mathbf{F}^N\rVert_2\cdots\lVert \mathbf{F}^2\rVert_2\leq \gamma V_{th}^N$, then the average firing rates of multi-layer FSNN with discrete IF model will converge to equilibrium points $\mathbf{a}^l[t]\rightarrow {\mathbf{a}^l}^*$, which satisfy the fixed-point equations ${\mathbf{a}^1}^* = f_1\left(f_N\circ\cdots\circ f_2({\mathbf{a}^1}^*), \mathbf{x^*}\right)$ and ${\mathbf{a}^{l+1}}^*=f_{l+1}({\mathbf{a}^l}^*)$, where $f_1(\mathbf{a}, \mathbf{x})=\sigma\left(\frac{1}{V_{th}}(\mathbf{W}^1\mathbf{a}+\mathbf{F}^1\mathbf{x}+\mathbf{b}^1)\right)$ and $f_{l}(\mathbf{a}) = \sigma\left(\frac{1}{V_{th}}(\mathbf{F}^{l}\mathbf{a}+\mathbf{b}^{l})\right)$.
\end{thm}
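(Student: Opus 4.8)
The plan is to follow the proof of Theorem~\ref{thm2}, handling the $N$ layers one at a time and then folding them into a single fixed-point iteration on the first layer. \textbf{First}, I sum the membrane-potential updates in Eq.(\ref{eq.multilayer}) over $\tau=0,\dots,t$ with $\lambda=1$ (IF model) and $\mathbf{u}^l[0]=\mathbf{s}^l[0]=\mathbf{0}$; using $\sum_{\tau=1}^{t+1}\mathbf{s}^l[\tau]=(t+1)\mathbf{a}^l[t+1]$ and $\sum_{\tau=0}^{t}\mathbf{s}^N[\tau]=t\,\mathbf{a}^N[t]$ this gives
\[
\mathbf{a}^1[t+1] = \tfrac{1}{V_{th}}\!\left(\tfrac{t}{t+1}\mathbf{W}^1\mathbf{a}^N[t] + \mathbf{F}^1\mathbf{\overline{x}}[t] + \mathbf{b}^1 - \tfrac{\mathbf{u}^1[t+1]}{t+1}\right),\qquad
\mathbf{a}^{l+1}[t+1] = \tfrac{1}{V_{th}}\!\left(\mathbf{F}^{l+1}\mathbf{a}^l[t+1] + \mathbf{b}^{l+1} - \tfrac{\mathbf{u}^{l+1}[t+1]}{t+1}\right),
\]
where the one-step feedback delay is why $\mathbf{a}^N[t]$ appears in the layer-$1$ equation while the feedforward terms use the current index $\mathbf{a}^l[t+1]$. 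Decomposing $\mathbf{u}_i^l[t]={\mathbf{u}_i^l}^-[t]+{\mathbf{u}_i^l}^+[t]$ exactly as in the single-group case (the $-$ part being the saturation/exceeded term that converts the affine map into $\sigma$, the $+$ part bounded by $c$ by hypothesis) turns both recursions into $\mathbf{a}^1[t+1]=\sigma(\cdot)$ and $\mathbf{a}^{l+1}[t+1]=\sigma(\cdot)$ with the same right-hand sides minus a perturbation of norm $O(1/t)$ inside $\sigma$.

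\textbf{Second}, within time step $t$ the feedforward layers are computed in order, so $\mathbf{a}^l[t]$ equals $f_l\circ\cdots\circ f_2(\mathbf{a}^1[t])$ up to an error of norm $O(1/t)$ (the accumulated ${\mathbf{u}_i^j}^+[t]/(V_{th}t)$ terms propagated through the Lipschitz maps $f_{j+1},\dots,f_l$). Substituting into the layer-$1$ recursion, the actual iteration can be written $\mathbf{a}^1[t+1]=G(\mathbf{a}^1[t])+\mathbf{e}_t$, where $G(\mathbf{a})=f_1\!\left(f_N\circ\cdots\circ f_2(\mathbf{a}),\mathbf{x^*}\right)$ and $\lVert\mathbf{e}_t\rVert\le\varepsilon_t:=O(1/t)+\tfrac{\lVert\mathbf{F}^1\rVert_2}{V_{th}}\lVert\mathbf{\overline{x}}[t]-\mathbf{x^*}\rVert\to0$ (the $O(1/t)$ collects the $\tfrac{t}{t+1}$ factor on the bounded quantity $\mathbf{W}^1\mathbf{a}^N[t]$, the outer error ${\mathbf{u}^1}^+[t+1]/(V_{th}(t+1))$, and the propagated inner-layer errors; all firing rates lie in $[0,1]$, so every term is uniformly bounded). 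Since $\sigma$ is $1$-Lipschitz and spectral norms are submultiplicative, $f_l(\cdot)$ is $\lVert\mathbf{F}^l\rVert_2/V_{th}$-Lipschitz for $l\ge2$ and $f_1(\cdot,\mathbf{x^*})$ is $\lVert\mathbf{W}^1\rVert_2/V_{th}$-Lipschitz, so $G$ is Lipschitz with constant at most $\lVert\mathbf{W}^1\rVert_2\lVert\mathbf{F}^N\rVert_2\cdots\lVert\mathbf{F}^2\rVert_2/V_{th}^N\le\gamma<1$. On the compact set of $[0,1]$-valued vectors (into which $\sigma$ maps) $G$ is a contraction self-map, hence by the Banach fixed-point theorem it has a unique fixed point ${\mathbf{a}^1}^*$, which satisfies ${\mathbf{a}^1}^*=f_1(f_N\circ\cdots\circ f_2({\mathbf{a}^1}^*),\mathbf{x^*})$; setting ${\mathbf{a}^{l+1}}^*=f_{l+1}({\mathbf{a}^l}^*)$ gives the remaining equations.

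\textbf{Third}, from $\mathbf{a}^1[t+1]=G(\mathbf{a}^1[t])+\mathbf{e}_t$ I get $\lVert\mathbf{a}^1[t+1]-{\mathbf{a}^1}^*\rVert\le\gamma\lVert\mathbf{a}^1[t]-{\mathbf{a}^1}^*\rVert+\varepsilon_t$, and the standard lemma on asymptotically autonomous contractions (a nonnegative sequence with $a_{t+1}\le\gamma a_t+\varepsilon_t$, $\gamma<1$, $\varepsilon_t\to0$, tends to $0$) — the same one used for Theorem~\ref{thm2} — yields $\mathbf{a}^1[t]\to{\mathbf{a}^1}^*$. Then $\mathbf{a}^l[t]$, which equals $f_l\circ\cdots\circ f_2(\mathbf{a}^1[t])$ up to an $O(1/t)$ error, converges to $f_l\circ\cdots\circ f_2({\mathbf{a}^1}^*)={\mathbf{a}^l}^*$ by continuity.

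\textbf{Main obstacle.} Nothing conceptually new beyond Theorem~\ref{thm2} is needed; the delicate bookkeeping is in the second and third steps — assembling the perturbation bound $\varepsilon_t$ uniformly across all $N$ layers and checking that each layer's $O(1/t)$ error, although amplified by the product of the later layers' Lipschitz constants, still vanishes (that product is finite even though it need not be $<1$; only the full product around the feedback loop is constrained to be $\le\gamma V_{th}^N$), plus invoking the asymptotically autonomous contraction lemma. The potential decomposition $\mathbf{u}_i^l={\mathbf{u}_i^l}^-+{\mathbf{u}_i^l}^+$ and the boundedness of ${\mathbf{u}_i^l}^+$ for the intermediate layers are identical to the single-group argument.
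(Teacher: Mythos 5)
Your proposal is correct and follows essentially the same route as the paper's Appendix D proof: the same $\mathbf{u}^-+\mathbf{u}^+$ decomposition, the same folding of the per-layer maps into a single perturbed iteration on $\mathbf{a}^1[t]$ whose exact part has Lipschitz constant $\lVert \mathbf{W}^1\rVert_2\lVert \mathbf{F}^N\rVert_2\cdots\lVert \mathbf{F}^2\rVert_2/V_{th}^N\leq\gamma<1$, and the same vanishing-perturbation bookkeeping for the $O(1/t)$ and input-convergence terms. The only cosmetic difference is that you establish the fixed point first via Banach and then bound $\lVert\mathbf{a}^1[t]-{\mathbf{a}^1}^*\rVert$, whereas the paper bounds successive differences $\lVert\mathbf{a}^1[t+1]-\mathbf{a}^1[t]\rVert$ and invokes Cauchy's criterion before identifying the limit.
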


The proof can be found in Appendix D. There is also a similar proposition for the LIF model, please refer to Appendix E for details. 
We will do classification based on the (weighted) average firing rate of the last layer and calculate the implicit differentiation for the equation on $\mathbf{a}^N[T]$. The loss, solution for implicit differentiation, and optimization methods are the same as those in Section~\ref{sec:loss}.

\section{Experiments}\label{sec:exp}

In this section, we conduct extensive experiments to demonstrate the superior performance of our proposed method. 
Please refer to Appendix F for implementation details, including restrictions on the spectral norm and batch normalization, as well as training parameters. Since few previous methods leverage feedback architectures, we compare the results of our IDE method for FSNNs with most feedforward SNNs and report network structures\footnote{The notations 
are: `64C5' means a convolution with 64 output channels and kernel size 5, `s' after `64C5' means convolution with stride 2 
while `u' after that means a transposed convolution to upscale $2\times$, `P2' means average pooling with size 2, `400' means fully-connected to 400 neurons, and `F' means feedback layers. } as well as the number of neurons and parameters during computation (calculated according to the papers or released codes) for comparison. 

\subsection{MNIST and Fashion-MNIST}

We first evaluate our method on simple static image datasets including MNIST~\cite{lecun1998gradient} and Fashion-MNIST~\cite{xiao2017fashion}, and compare the results with other directly trained SNNs~\cite{lee2016training,wu2018spatio,shrestha2018slayer,jin2018hybrid,zhang2019spike,zhang2020temporal} or similar ANNs. Inputs are the same images with binary or real values at all time steps, which can be regarded as input currents~\cite{zhang2020temporal}. We leverage single-layer FSNNs, and adopt convolutional layers for MNIST while using fully-connected layers for Fashion-MNIST following \cite{zhang2019spike}. As shown in Table~\ref{mnist}, our models achieve comparable or better results with fewer neurons and parameters in a relatively small number of time steps, compared with other direct SNN training methods on feedforward or feedback architectures. Especially, our model achieves superior results on Fashion-MNIST with the similar structure in only 5 time steps. The LIF model performs slightly better than the IF model, probably because it leverages temporal information by encoding weighted average firing rates.

\begin{table} [ht]
	\centering
	\small
	\tabcolsep=1mm
	\caption{Performance on MNIST and Fashion-MNIST. Results are based on 5 runs of experiments.}
	\begin{tabular}{ccccccc}
	    \multicolumn{7}{c}{\textbf{MNIST}}\\
		\toprule[1pt]
		Method & Network structure & Time steps & Mean$\pm$Std & Best & Neurons & Params \\
		\midrule[0.5pt]
		BP~\cite{lee2016training} & 20C5-P2-50C5-P2-200 & >200 & / & 99.31\% & 33K & 518K\\
		STBP~\cite{wu2018spatio} & 15C5-P2-40C5-P2-300 & 30 & / & 99.42\% & 26K & 607K\\
		SLAYER~\cite{shrestha2018slayer} & 12C5-P2-64C5-P2 & 300 & 99.36\%$\pm$0.05\% & 99.41\% & 28K & 51K\\
		HM2BP~\cite{jin2018hybrid} & 15C5-P2-40C5-P2-300 & 400 & 99.42\%$\pm$0.11\% & 99.49\% & 26K & 607K\\
		ST-RSBP~\cite{zhang2019spike} & 15C5-P2-40C5-P2-300 & 400 & \textbf{99.57\%$\pm$0.04\%} & \textbf{99.62\%} & 26K & 607K\\
		TSSL-BP~\cite{zhang2020temporal} & 15C5-P2-40C5-P2-300 & 5 & 99.50\%$\pm$0.02\% & 99.53\% & 26K & 607K\\
		\midrule[0.5pt]
		\textbf{IDE-IF (ours)} & 64C5s (F64C5) & 30 & 99.49\%$\pm$0.04\% & 99.55\% & 13K & 229K\\
		\textbf{IDE-LIF (ours)} & 64C5s (F64C5) & 30 & 99.53\%$\pm$0.04\% & 99.59\% & 13K & 229K\\
		\bottomrule[1pt]
	\end{tabular}
	\begin{tabular}{ccccccc}
	    \multicolumn{7}{c}{\textbf{Fashion-MNIST}}\\
		\toprule[1pt]
		Method & Network structure & Time steps & Mean$\pm$Std & Best & Neurons & Params \\
		\midrule[0.5pt]
		ANN~\cite{zhang2019spike} & 512-512 & / & / & 89.01\% & 1.8K & 670K\\
		HM2BP~\cite{zhang2019spike} & 400-400 & 400 & / & 88.99\% & 1.6K & 478K\\
		TSSL-BP~\cite{zhang2020temporal} & 400-400 & 5 & 89.75\%$\pm$0.03\% & 89.80\% & 1.6K & 478K\\
		ST-RSBP~\cite{zhang2019spike} & 400 (F400) & 400 & 90.00\%$\pm$0.14\% & 90.13\% & 1.2K & 478K\\
		\midrule[0.5pt]
		\textbf{IDE-IF (ours)} & 400 (F400) & \textbf{5} & \textbf{90.04\%$\pm$0.09\%} & \textbf{90.14\%} & 1.2K & 478K\\
		\textbf{IDE-LIF (ours)} & 400 (F400) & \textbf{5} & \textbf{90.07\%$\pm$0.10\%} & \textbf{90.25\%} & 1.2K & 478K\\
		\bottomrule[1pt]
	\end{tabular}
	\label{mnist}
\end{table}

\subsection{N-MNIST}

We also evaluate our method on the neuromorphic dataset N-MNIST~\cite{orchard2015converting}, whose inputs are spikes collected by dynamic vision sensors. We follow the same data pre-possessing as \cite{zhang2020temporal} and take 30 time steps, and we can view the (weighted) average inputs gradually converge to that at the last time step. Table~\ref{nmnist} demonstrates the comparison results of our models and other directly trained models~\cite{jin2018hybrid,shrestha2018slayer,zhang2020temporal,wu2019direct}. It shows that our method can achieve satisfactory performance on neuromorphic data as well. Especially, only 30 time steps are required by our method for satisfactory performance.

\begin{table} [ht]
	\centering
	\small
	\tabcolsep=1mm
	\caption{Performance on N-MNIST. Results are based on 5 runs of experiments.}
	\begin{threeparttable}
	\begin{tabular}{ccccccc}
		\toprule[1pt]
		Method & Network structure & Time steps & Mean$\pm$Std & Best & Neurons & Params \\
		\midrule[0.5pt]
		HM2BP~\cite{jin2018hybrid} & 400-400 & 600 & 98.88\%$\pm$0.02\% & 98.88\% & 3K & 1.1M\\
		SLAYER~\cite{shrestha2018slayer} & 500-500 & 300 & 98.89\%$\pm$0.06\% & 98.95\% & 3K & 1.4M\\
		SLAYER~\cite{shrestha2018slayer} & 12C5-P2-64C5-P2 & 300 & 99.20\%$\pm$0.02\% & 99.22\% & 40K & 61K\\
		TSSL-BP~\cite{zhang2020temporal} & 12C5-P2-64C5-P2 & 30 & 99.23\%$\pm$0.05\% & 99.28\% & 40K & 61K\\
		STBP w/o NeuNorm~\cite{wu2019direct} & CNN\tnote{1} & 60 & / & 99.44\% & 414K & 17.3M\\
		\midrule[0.5pt]
		\textbf{IDE-IF (ours)} & 64C5s (F64C5) & 30 & 99.30\%$\pm$0.04\% & 99.35\% & 21K & 291K\\
		\textbf{IDE-LIF (ours)} & 64C5s (F64C5) & \textbf{30} & \textbf{99.42\%$\pm$0.04\%} & \textbf{99.47\%} & 21K & 291K\\
		\bottomrule[1pt]
	\end{tabular}
	\begin{tablenotes}
       \scriptsize
       \item[1] 128C3-128C3-P2-128C3-256C3-P2-1024
     \end{tablenotes}
	\end{threeparttable}
	\label{nmnist}
\end{table}

\subsection{CIFAR-10 and CIFAR-100}

Then we evaluate our method on more complex CIFAR-10 and CIFAR-100 datasets~\cite{krizhevsky2009learning}. We leverage multi-layer FSNNs with structures modified from AlexNet and CIFARNet proposed in \cite{wu2019direct}, as indicated in the footnote of Table~\ref{cifar}. We compare our model with SNNs converted from ANNs~\cite{sengupta2019going,deng2021optimal,rathi2019enabling,yan2021near} and directly trained SNNs~\cite{wu2019direct,zhang2020temporal,lee2020enabling,wu2021training}. For CIFAR-100, no result of directly trained SNN is reported, so we only compare with the converted ones. Table~\ref{cifar} demonstrates the superior results of our directly trained models with fewer neurons and parameters in a small number of time steps. Especially, our model can outperform the state-of-the-art SNN performance on CIFAR-100 with only 30 time steps, and achieves 1.59\% accuracy improvement when 100 time steps are adopted. Please refer to Appendix G for more comparison results between IF and LIF models.

\begin{table} [ht]
	\centering
	\small
	\tabcolsep=1mm
	\caption{Performance on CIFAR-10 and CIFAR-100. Results are based on 5 runs of experiments.}
	\begin{center}
	\begin{threeparttable}
	\begin{tabular}{ccccccc}
	    \multicolumn{7}{c}{\textbf{CIFAR-10}}\\
		\toprule[1pt]
		Method & Network structure & Time steps & Mean$\pm$Std & Best & Neurons & Params \\
		\midrule[0.5pt]
		ANN-SNN~\cite{deng2021optimal} & CIFARNet & 400-600 & / & 90.61\% & 726K & 45M\\
		ANN-SNN~\cite{sengupta2019going} & VGG-16 & 2500 & / & 91.55\% & 311K & 15M\\
		ANN-SNN~\cite{deng2021optimal} & VGG-16 & 400-600 & / & 92.26\% & 318K & 40M\\
		Hybrid Training~\cite{rathi2019enabling} & VGG-16 & 100 & / & 91.13\% & 318K & 40M\\
		\midrule[0.5pt]
		STBP~\cite{wu2019direct} & AlexNet & 12 & / & 85.24\% & 595K & 21M\\
		TSSL-BP~\cite{zhang2020temporal} & AlexNet & 5 & 88.98\%$\pm$0.27\% & 89.22\% & 595K & 21M\\
		STBP~\cite{wu2019direct} & CIFARNet & 12 & / & 90.53\% & 726K & 45M\\
		TSSL-BP~\cite{zhang2020temporal} & CIFARNet & 5 & / & 91.41\% & 726K & 45M\\
		Surrogate gradient~\cite{lee2020enabling} & VGG-9 & 100 & / & 90.45\% & 274K & 5.9M\\
		ASF-BP~\cite{wu2021training} & VGG-7 & 400 & / & 91.35\% & >240K & >30M\\
		\midrule[0.5pt]
		\textbf{IDE-LIF (ours)} & AlexNet-F & 30 & 91.74\%$\pm$0.09\% & 91.92\% & 159K & 3.7M\\
		\textbf{IDE-LIF (ours)} & AlexNet-F & 100 & \textbf{92.03\%$\pm$0.07\%} & \textbf{92.15\%} & \textbf{159K} & \textbf{3.7M}\\
		\textbf{IDE-LIF (ours)} & CIFARNet-F & 30 & 92.08\%$\pm$0.14\% & 92.23\% & 232K & 11.8M\\
		\textbf{IDE-LIF (ours)} & CIFARNet-F & 100 & \textbf{92.52\%$\pm$0.17\%} & \textbf{92.82\%} & \textbf{232K} & \textbf{11.8M}\\
		\bottomrule[1pt]
	\end{tabular}
	\begin{tablenotes}
       \scriptsize
       \item[1] AlexNet~\cite{wu2019direct}: 96C3-256C3-P2-384C3-P2-384C3-256C3-1024-1024
       \item[2] AlexNet-F: 96C3s-256C3-384C3s-384C3-256C3 (F96C3u)
       \item[3] CIFARNet~\cite{wu2019direct}: 128C3-256C3-P2-512C3-P2-1024C3-512C3-1024-512
       \item[4] CIFARNet-F: 128C3s-256C3-512C3s-1024C3-512C3 (F128C3u)
    \end{tablenotes}
	\end{threeparttable}
    \begin{tabular}{ccccccc}
	    \multicolumn{7}{c}{\textbf{CIFAR-100}}\\
		\toprule[1pt]
		Method & Network structure & Time steps & Mean$\pm$Std & Best & Neurons & Params \\
		\midrule[0.5pt]
		ANN~\cite{sengupta2019going} & VGG-16 & / & / & 71.22\% & 311K & 15M\\
		ANN-SNN~\cite{sengupta2019going} & VGG-16 & 2500 & / & 70.77\% & 311K & 15M\\
		ANN-SNN~\cite{deng2021optimal} & VGG-16 & 400-600 & / & 70.55\% & 318K & 40M\\
		ANN-SNN~\cite{yan2021near} & VGG-* & 300 & / & 71.84\% & 540K & 9.7M\\
		\midrule[0.5pt]
		\textbf{IDE-IF (ours)} & CIFARNet-F & \textbf{30} & \textbf{71.56\%$\pm$0.31\%} & \textbf{72.10\%} & 232K & 14.8M\\
		\textbf{IDE-IF (ours)} & AlexNet-F & \textbf{100} & \textbf{72.02\%$\pm$0.16\%} & \textbf{72.23\%} & \textbf{159K} & \textbf{5.2M}\\
		\textbf{IDE-IF (ours)} & CIFARNet-F & \textbf{100} & \textbf{73.07\%$\pm$0.21\%} & \textbf{73.43\%} & 232K & 14.8M\\
		\bottomrule[1pt]
	\end{tabular}
	\end{center}
	\label{cifar}
\end{table}

\subsection{Convergence to the Equilibrium}

To verify the convergence of FSNNs to equilibrium states, we plot the difference norm on the fixed-point equation at each time step, i.e. $\lVert f_{\theta}(\mathbf{a}[t])-\mathbf{a}[t]\rVert$, where $x=f_{\theta}(x)$ is the fixed-point equation and $\mathbf{a}[t]$ is the (weighted) average firing rate at time step $t$. Figure~\ref{fig:convergence} demonstrates the convergence of different models. It is almost the same among different samples. Since the numerical precision of firing rates is only $\frac{1}{t}$, there would be a certain convergence error due to the finite time steps. And for the LIF model, there could be random errors compared with the IF model, as indicated in Section~\ref{sec:method}. The results conform to the theorems as the difference norm gradually decreases, i.e. firing rates converge to the equilibrium following the equation. It also indicates that exact precision is not necessary for satisfactory performance. 
Networks with fewer neurons converge faster, so a smaller number of time steps is needed, explaining why only 5 time steps are enough in the Fashion-MNIST experiment. 
For results on more datasets and different time steps, please refer to Appendix G.

\begin{figure}
    \centering
    \subfigure[MNIST: 64C5s (F64C5)]{
    \includegraphics[scale=0.272]{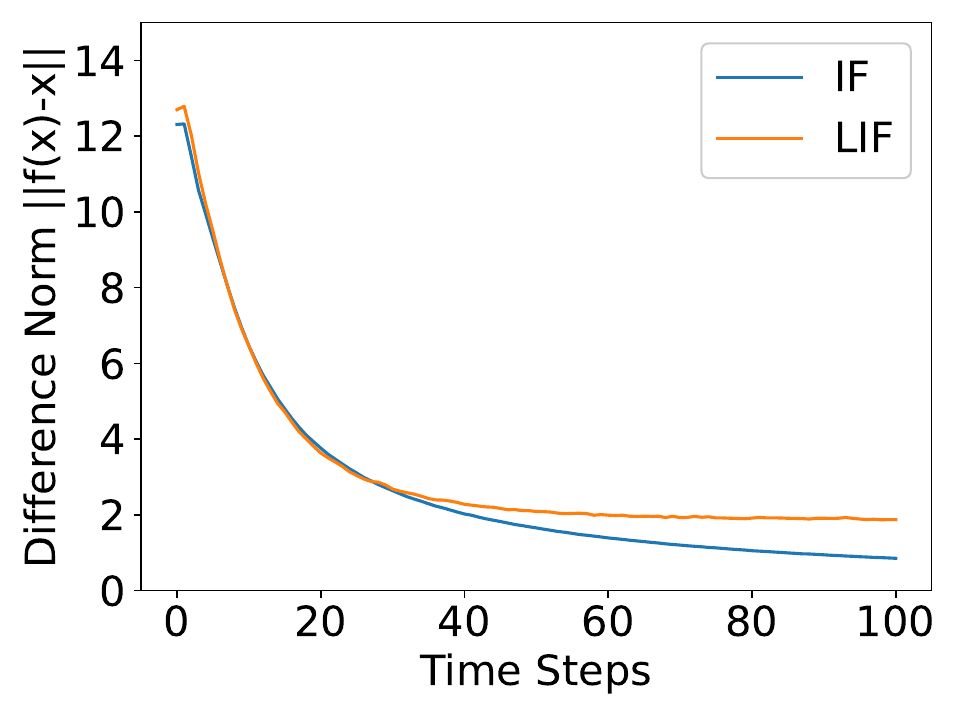}
    \label{convergence_mnist}
    }
    \subfigure[Fashion-MNIST: 400 (F400)]{
    \includegraphics[scale=0.272]{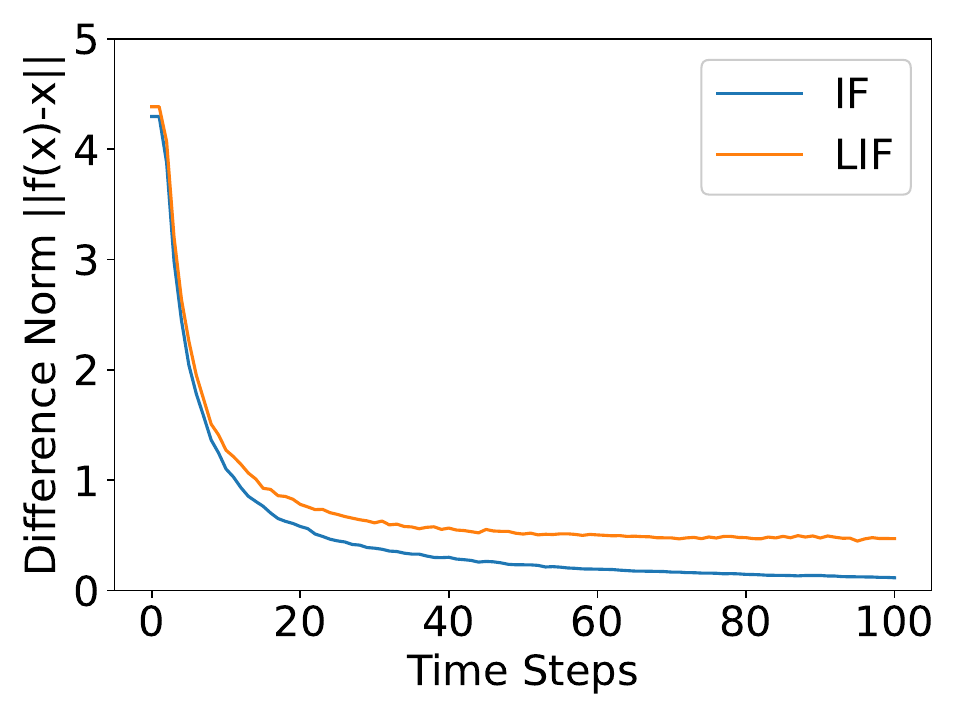}
    \label{convergence_fashionmnist}
    }
    \subfigure[CIFAR-10: CIFARNet-F]{
    \includegraphics[scale=0.272]{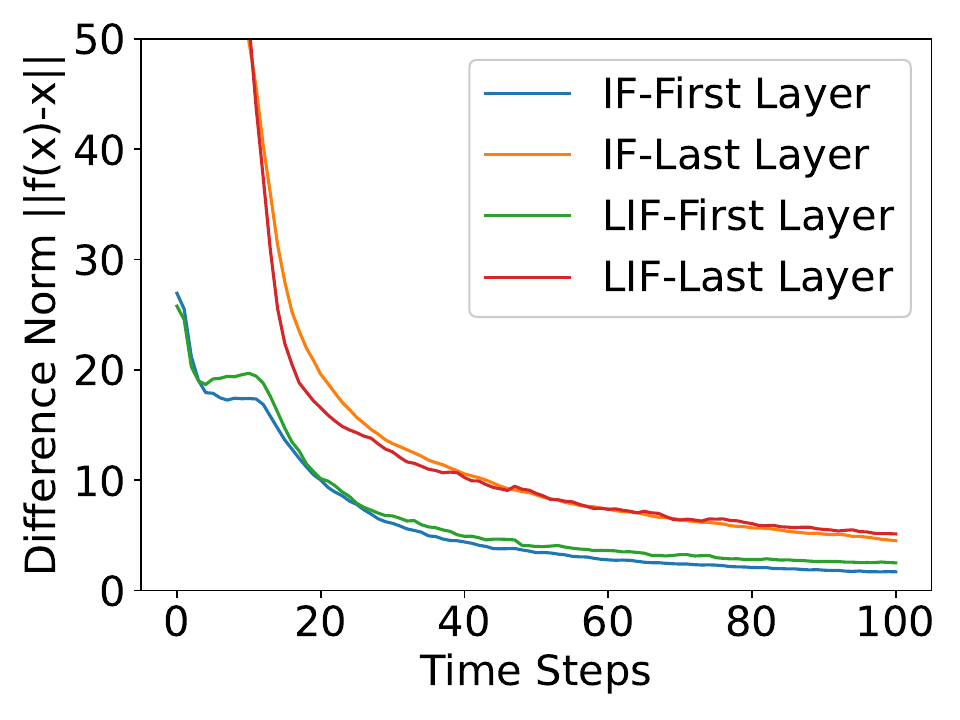}
    \label{convergence_cifar}
    }
    \caption{Convergence to the equilibrium of different models on a random sample in 100 time steps.}
    \label{fig:convergence}
\end{figure}

\subsection{Training Memory Costs}

As described in the Introduction, an important advantage of our method is that we can avoid the large memory costs, from which the methods that backpropagate along the computational graph would suffer. To quantify this ease of training, we compare the GPU memory costs of our method and the representative STBP method~\cite{wu2018spatio,wu2019direct}. The architecture and training settings are the same, and the results are shown in Table~\ref{memory costs}. It well illustrates the smaller memory costs of our method, which is also agnostic to time steps. Meanwhile, our method could achieve higher performance.

\subsection{Firing Sparsity}

As for the efficient neuromorphic computation, the firing rate is an important statistic since the energy consumption is proportional to the number of spikes. We calculate the average firing rate of trained models, and compare the IF and LIF model trained by our IDE method, as well as the LIF model trained by STBP method~\cite{wu2018spatio,wu2019direct} with the same structure. The results in Table~\ref{firing rate} demonstrate the firing sparsity of our model, as the average firing rate is only around or less than 0.7\%. And it shows that the LIF model has a slightly sparser response compared with the IF model. We note that TSSL-BP~\cite{zhang2020temporal} also reported the statistics about firing rate. According to their results, their trained model on CIFAR-10 has a roughly total 9.86\% firing rate within 5 time steps. So it is interesting to find that our models have fewer spikes than theirs, even if we have more time steps (30 vs. 5), not to mention that our models have fewer neurons. The results also show that the model trained by our method has sparser spikes compared with STBP, demonstrating the superiority of our method.

\begin{minipage}{0.47\linewidth}
\newcommand{\tabincell}[2]{\begin{tabular}{@{}#1@{}}#2\end{tabular}}
	\centering
	\small
	\tabcolsep=0.5mm
	\captionof{table}{Comparison of training memory costs and accuracy between training methods. The model is trained on CIFAR-10 with AlexNet-F structure and LIF model.}
	\begin{threeparttable}
	\begin{tabular}{cccc}
		\toprule[1pt]
		Method & Time steps & Accuracy & GPU memory\\
		\midrule[0.5pt]
		IDE (ours) & 30 & 91.74\%$\pm$0.09\% & 2.8G\\
		STBP\tnote{*} & 30 & 87.18\% & 11G\\
		IDE (ours) & 100 & 92.03\%$\pm$0.07\% & 2.8G\\
		STBP\tnote{*} & 100 & / & \tabincell{c}{out of memory\\($\approx$36G)}\\
		\bottomrule[1pt]
	\end{tabular}
	\begin{tablenotes}
       \scriptsize
       \item[*] Our implementation
     \end{tablenotes}
	\end{threeparttable}
	\label{memory costs}
\end{minipage}
\hspace{10mm}
\begin{minipage}{0.45\linewidth}
	\centering
	\small
	\tabcolsep=2mm
	\captionof{table}{The average firing rate of trained models. The model is trained on CIFAR-10 with AlexNet-F structure and 30 time steps. }
	\begin{threeparttable}
	\begin{tabular}{cccc}
		\toprule[1pt]
		Layer & IDE-IF & IDE-LIF & STBP-LIF\tnote{*}\\
		\midrule[0.5pt]
		Layer 1 & 0.0172 & 0.0166 & 0.0190\\
		Layer 2 & 0.0041 & 0.0039 & 0.0082\\
		Layer 3 & 0.0025 & 0.0024 & 0.0113\\
		Layer 4 & 0.0008 & 0.0008 & 0.0055\\
		Layer 5 & 0.0200 & 0.0177 & 0.0108\\
		\midrule[0.5pt]
		Total & 0.0070 & \textbf{0.0066} & 0.0102\\
		\bottomrule[1pt]
	\end{tabular}
	\begin{tablenotes}
       \scriptsize
       \item[*] Our implementation
     \end{tablenotes}
	\end{threeparttable}
	\label{firing rate}
\end{minipage}

\section{Conclusion}

In this work, we propose a novel training method for feedback spiking neural networks based on implicit differentiation on the equilibrium state. We first derive the equilibrium states of (weighted) average firing rates for the IF and the LIF models of FSNNs under both continuous and discrete views. Then we 
propose to optimize parameters of FSNNs only based on the implicit differentiation on the underlying fixed-point equation. This enables the backward procedure to be decoupled from the forward computational graph and therefore avoids the common training problems for SNNs, such as non-differentiability and large memory costs. 
Meanwhile, we briefly discuss the biological plausibility for the calculation of implicit differentiation, which only requires computing another equilibrium and is related to the locally updated Hebbian learning rule. 
Extensive experiments demonstrate the superior results of our method and models with fewer neurons and parameters in a small number of time steps, and the spikes are sparser in our trained models as well.

\section*{Acknowledgement} 
Z. Lin was supported by the NSF China under Grants 61625301 and 61731018, Project 2020BD006 supported by PKU-Baidu Fund, and Zhejiang Lab (grant no. 2019KB0AB02). Yisen Wang is partially supported by the National Natural Science Foundation of China under Grant 62006153, and Project 2020BD006 supported by PKU-Baidu Fund.

\small
\bibliographystyle{plain}
\bibliography{arxiv}

\normalsize
\newpage
\appendix

\section{Illustration Figure of Network Structures}

Figure~\ref{fig:illustrate} illustrates our feedback models with single-layer and multi-layer structure as indicated in Sections 4.1 and 4.3.

\begin{figure}[h]
    \centering
    \includegraphics[width=0.5\textwidth]{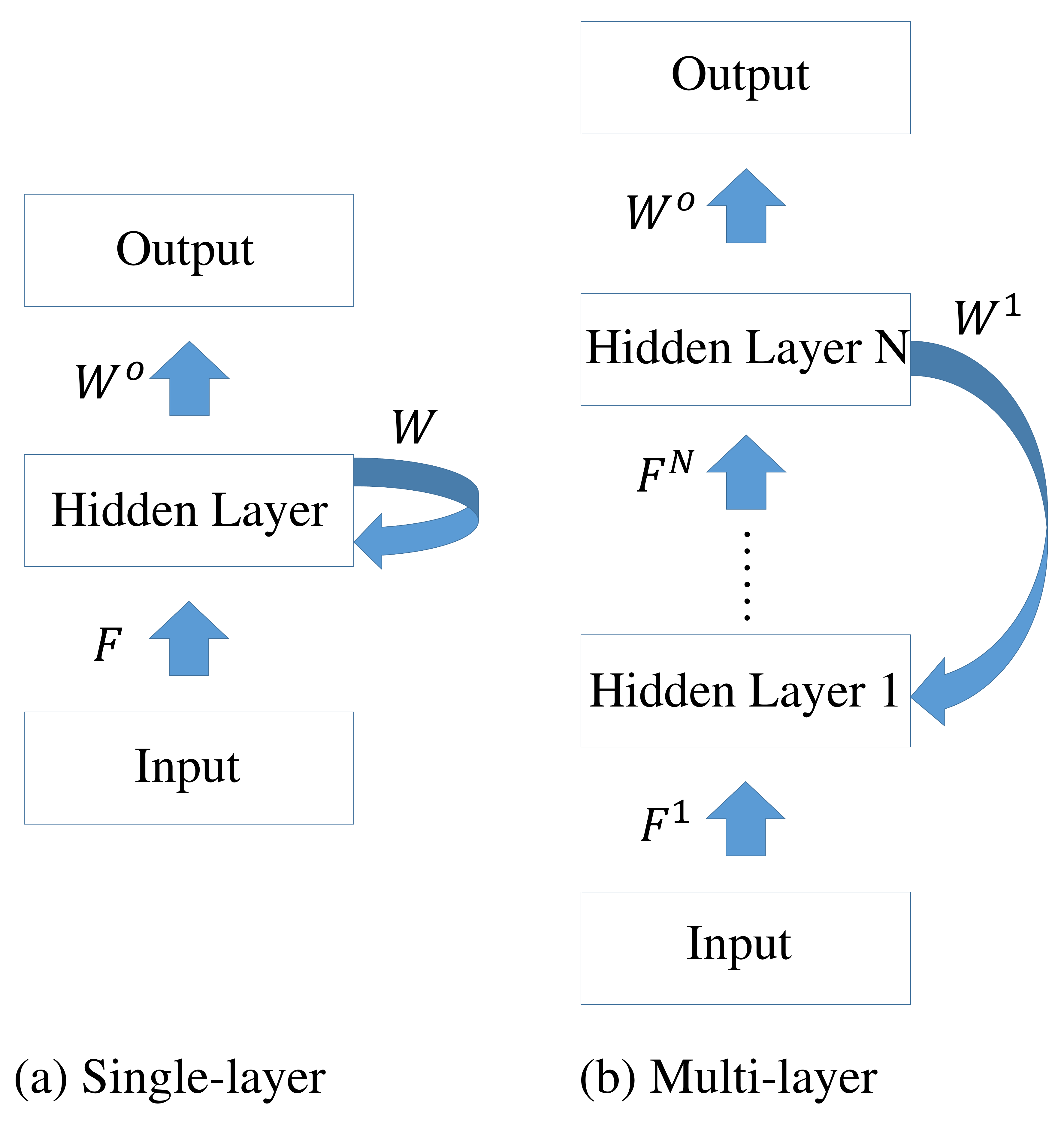}
    \caption{Illustration of feedback models with single-layer and multi-layer structure.}
    \label{fig:illustrate}
\end{figure}

\section{Pseudocode for the IDE Training Algorithm}

We present the pseudocode of one iteration of IDE training in Algorithm~\ref{algorithm: ide} to better illustrate our training method.

\begin{algorithm}[h]
    \caption{One iteration of IDE training.}
    \hspace*{0.02in} {\bf Input:}
    Network parameters $\theta$; Input data $x$; Label $y$; Time steps $T$; Other hyperparameters;\\
    \hspace*{0.02in} {\bf Output:}
    Trained network parameters $\theta$.\\
    \hspace*{0.02in} {\bf Forward:}
    \begin{algorithmic}[1]
    \State Simulate the SNN by T time steps with input $x$ based on Eq. (2) and calculate the final (weighted) average firing rate $a[T]$;
    \State Calculate the output $o$ and the loss $L$ based on $o$ and $y$.
    \end{algorithmic}
    \hspace*{0.02in} {\bf Backward:}
    \begin{algorithmic}[1]
    \State Specify the fixed-point equation $a=f_{\theta}(a)$ of the equilibrium state (define $g_{\theta}(a)=f_{\theta}(a) - a$);
    \State Calculate the gradients based on implicit differentiation:
    \State \hspace*{0.1in} (1) Solve the equation $\left(J_{g_{\theta}}^T\vert_{a[T]}\right)x^T+\left(\frac{\partial \mathcal{L}}{\partial a[T]}\right)^T=0$ by root-finding methods;
    \State \hspace*{0.1in} (2) Calculate gradients $ \frac{\partial \mathcal{L}}{\partial \theta} = -\frac{\partial \mathcal{L}}{\partial a[T]} \left(J_{g_{\theta}}^{-1}\vert_{a[T]}\right) \frac{\partial f_{\theta}(a[T])}{\partial \theta}$ based on the solution and $\frac{\partial f_{\theta}(a[T])}{\partial \theta}$;
    \State Update $\theta$ based on the gradient-based optimizer.
    \end{algorithmic}
    \label{algorithm: ide}
\end{algorithm}

\section{Proof of Theorem~\ref{thm1sup} and Theorem~\ref{thm2sup}}

We first prove Theorem~\ref{thm1sup}. Then Theorem~\ref{thm2sup} is similarly proved.

\newtheorem{thmsup}{\bf Theorem}
\begin{thmsup}\label{thm1sup}
If the average inputs converge to an equilibrium point $\mathbf{\overline{x}}(t)\rightarrow \mathbf{x^*}$, and there exists constant $c$ and $\gamma<1$ such that $\vert\mathbf{u}^+_i[t]\vert\leq c,\forall i,t$ and $\lVert \mathbf{W} \rVert_2 \leq \gamma V_{th}$, then the average firing rates of FSNN with continuous IF model will converge to an equilibrium point $\mathbf{a}(t)\rightarrow \mathbf{a^*}$, which satisfies the fixed-point equation $\mathbf{a^*} = \text{ReLU}\left(\frac{1}{V_{th}}\left(\mathbf{W}\mathbf{a^*}+\mathbf{F}\mathbf{x^*}+\mathbf{b}\right)\right)$.
\end{thmsup}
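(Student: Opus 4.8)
The plan is to produce the limit $\mathbf{a^*}$ as the unique fixed point of a contraction, and then to control the deviation $\mathbf{a}(t)-\mathbf{a^*}$ by a contraction-plus-vanishing-perturbation recursion run along the feedback-delay lattice. First I would establish existence and uniqueness of the equilibrium: the map $\Phi(\mathbf{a}) := \text{ReLU}\!\left(\tfrac{1}{V_{th}}(\mathbf{W}\mathbf{a}+\mathbf{F}\mathbf{x^*}+\mathbf{b})\right)$ satisfies, using that the element-wise ReLU is $1$-Lipschitz for $\lVert\cdot\rVert_2$ and that $\tfrac1{V_{th}}\lVert\mathbf{W}\rVert_2\le\gamma<1$,
\[ \lVert \Phi(\mathbf{a})-\Phi(\mathbf{a}')\rVert_2 \le \tfrac{1}{V_{th}}\lVert \mathbf{W}\rVert_2\,\lVert \mathbf{a}-\mathbf{a}'\rVert_2 \le \gamma\,\lVert \mathbf{a}-\mathbf{a}'\rVert_2, \]
so $\Phi$ is a contraction on $\mathbb{R}^{n}$ ($n$ the number of neurons) and Banach's fixed-point theorem yields the unique $\mathbf{a^*}=\Phi(\mathbf{a^*})$, which is precisely the asserted fixed-point equation.

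Next I would subtract $\mathbf{a^*}=\Phi(\mathbf{a^*})$ from Eq.(\ref{eq.con2}) and rewrite $\tfrac{t-\Delta t_d}{t}\mathbf{W}\mathbf{a}(t-\Delta t_d)-\mathbf{W}\mathbf{a^*} = \tfrac{t-\Delta t_d}{t}\mathbf{W}\!\left(\mathbf{a}(t-\Delta t_d)-\mathbf{a^*}\right) - \tfrac{\Delta t_d}{t}\mathbf{W}\mathbf{a^*}$. Applying $1$-Lipschitzness of ReLU once more, together with $\tfrac{t-\Delta t_d}{t}\le 1$ and $\lVert\mathbf{u}^+(t)\rVert_2\le\sqrt{n}\,c$, this gives for all $t\ge\Delta t_d$
\[ \lVert \mathbf{a}(t)-\mathbf{a^*}\rVert_2 \le \gamma\,\lVert \mathbf{a}(t-\Delta t_d)-\mathbf{a^*}\rVert_2 + \epsilon(t), \]
\[ \epsilon(t) := \frac{\gamma\,\Delta t_d}{t}\lVert \mathbf{a^*}\rVert_2 + \frac{\lVert\mathbf{F}\rVert_2}{V_{th}}\lVert \overline{\mathbf{x}}(t)-\mathbf{x^*}\rVert_2 + \frac{\sqrt{n}\,c}{V_{th}\,t}, \]
and $\epsilon(t)\to 0$ as $t\to\infty$ by the hypothesis $\overline{\mathbf{x}}(t)\to\mathbf{x^*}$ together with $|\mathbf{u}^+_i(t)|\le c$.

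Finally I would conclude from this recursion. Because the delay is fixed, the inequality only couples times in the same residue class modulo $\Delta t_d$, so I would fix a phase $r\in(0,\Delta t_d]$, set $d_m:=\lVert \mathbf{a}(r+m\Delta t_d)-\mathbf{a^*}\rVert_2$ and $\epsilon_m:=\epsilon(r+m\Delta t_d)$, and note $d_m\le\gamma d_{m-1}+\epsilon_m$ with $\epsilon_m\to0$ and $d_0<\infty$ (the average firing rate is finite at any fixed positive time). Unrolling gives $d_m\le\gamma^{m}d_0+\sum_{j=0}^{m-1}\gamma^{j}\epsilon_{m-j}$; the first term vanishes and the second is the convolution of a geometric sequence with a null sequence, hence also vanishes (split the sum at a fixed index to handle the two regimes). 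Thus $d_m\to0$ for every phase, and since the phases partition $(0,\infty)$ this is exactly $\mathbf{a}(t)\to\mathbf{a^*}$. An analogous argument handles the discrete-view statement after replacing ReLU by the clamp $\sigma$, which is also $1$-Lipschitz.

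The routine parts are the contraction argument and the scalar recursion lemma; I expect the main obstacle to be the middle step — verifying that the three genuinely different perturbations (the time-varying coefficient $\tfrac{t-\Delta t_d}{t}$, the still-transient average input $\overline{\mathbf{x}}(t)$, and the leftover $\mathbf{u}^+(t)/t$ from the part of the membrane potential that was not folded into the ReLU) really do assemble into one error term tending to zero, while the contraction factor stays strictly below $1$. This is exactly where the three hypotheses $\lVert\mathbf{W}\rVert_2\le\gamma V_{th}$, $|\mathbf{u}^+_i(t)|\le c$, and $\overline{\mathbf{x}}(t)\to\mathbf{x^*}$ are each consumed, and where the decomposition $\mathbf{u}=\mathbf{u}^-+\mathbf{u}^+$ underlying Eq.(\ref{eq.con2}) does its work.
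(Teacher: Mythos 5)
Your proposal is correct, and it follows the same underlying strategy as the paper's proof (a contraction estimate with vanishing perturbations, run along each residue class of the delay $\Delta t_d$), but it is organized around a different pivot. The paper bounds the \emph{successive differences} $\lVert \mathbf{a}_{\epsilon_t}^{i+1}-\mathbf{a}_{\epsilon_t}^{i}\rVert$, shows they satisfy $d_{i+1}\le \gamma d_i + \frac{\epsilon(1-\gamma)}{2}$ for large $i$, invokes Cauchy's convergence test to get a limit for each phase, and only then identifies that limit through the fixed-point equation, finally using uniqueness (from $\lVert\mathbf{W}\rVert_2\le\gamma V_{th}$) to glue the phases together. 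You instead establish $\mathbf{a^*}$ up front via Banach's fixed-point theorem and bound the \emph{distance to the fixed point} $\lVert\mathbf{a}(t)-\mathbf{a^*}\rVert$ directly. Your version buys two small simplifications: the leftover term $\frac{\Delta t_d}{t}\mathbf{W}\mathbf{a^*}$ involves only the fixed vector $\mathbf{a^*}$, so you do not need the separate lemma the paper proves to show $\lVert\mathbf{a}_{\epsilon_t}^{i}\rVert$ stays bounded (needed there to kill the $\frac{\Delta t_d}{t}\mathbf{W}\mathbf{a}_{\epsilon_t}^{i}$ term), and the limit identification is automatic since you converge to a point you have already characterized. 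The paper's version has the minor advantage of not presupposing the fixed point exists, though it must prove uniqueness at the end anyway to conclude that all phases share the same limit, so both arguments consume exactly the same hypotheses. Your closing lemma (geometric convolution of a null sequence vanishes) and the paper's iterated inequality $\gamma^{i-M}d_M+\frac{\epsilon}{2}$ are interchangeable bookkeeping.
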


\begin{proof}

$\forall\, 0\leq\epsilon_t < \Delta t_d$, we construct the sequence $\{\mathbf{a}_{\epsilon_t}^i\}_{i=0}^{\infty}$ where $\mathbf{a}_{\epsilon_t}^i=\mathbf{a}\left(t_{\epsilon_t}^i\right), t_{\epsilon_t}^i = i\Delta t_d+\epsilon_t$. Then the equation~(\ref{eq.con2sup}) turns into the iterative equation~(\ref{eq.con3}) as the following:
\begin{equation}
    \mathbf{a}(t) = \text{ReLU}\left(\frac{1}{V_{th}}\left(\frac{t-\Delta t_d}{t}\mathbf{W}\mathbf{a}(t-\Delta t_d)+\mathbf{F}\mathbf{\overline{x}}(t)+\mathbf{b}\right)\right)-\frac{1}{V_{th}}\frac{\mathbf{u}^+(t)}{t},
    \label{eq.con2sup}
\end{equation}
\begin{equation}
    \mathbf{a}_{\epsilon_t}^{i+1} = \text{ReLU}\left(\frac{1}{V_{th}}\left(\frac{t_{\epsilon_t}^{i+1}-\Delta t_d}{t_{\epsilon_t}^i}\mathbf{W}\mathbf{a}_{\epsilon_t}^i+\mathbf{F}\mathbf{\overline{x}}(t_{\epsilon_t}^{i+1})+\mathbf{b}\right)\right)-\frac{1}{V_{th}}\frac{\mathbf{u}^+(t_{\epsilon_t}^{i+1})}{t_{\epsilon_t}^{i+1}}.
    \label{eq.con3}
\end{equation}
We prove the sequence $\{\mathbf{a}_{\epsilon_t}^i\}_{i=0}^{\infty}$ converges. Consider $\lVert \mathbf{a}_{\epsilon_t}^{i+1}-\mathbf{a}_{\epsilon_t}^i\rVert$, it satisfies:
\begin{equation}
\begin{aligned}
&\left\lVert \mathbf{a}_{\epsilon_t}^{i+1}-\mathbf{a}_{\epsilon_t}^i\right\rVert\\
=\quad&\left\lVert\, \left(\text{ReLU}\left(\frac{1}{V_{th}}\left(\frac{t_{\epsilon_t}^{i+1}-\Delta t_d}{t_{\epsilon_t}^{i+1}}\mathbf{W}\mathbf{a}_{\epsilon_t}^i+\mathbf{F}\mathbf{\overline{x}}(t_{\epsilon_t}^{i+1})+\mathbf{b}\right)\right)-\frac{1}{V_{th}}\frac{\mathbf{u}^+(t_{\epsilon_t}^{i+1})}{t_{\epsilon_t}^{i+1}}\right)\right.\\ 
&\left.- \left(\text{ReLU}\left(\frac{1}{V_{th}}\left(\frac{t_{\epsilon_t}^{i}-\Delta t_d}{t_{\epsilon_t}^{i}}\mathbf{W}\mathbf{a}_{\epsilon_t}^{i-1}+\mathbf{F}\mathbf{\overline{x}}(t_{\epsilon_t}^{i})+\mathbf{b}\right)\right)-\frac{1}{V_{th}}\frac{\mathbf{u}^+(t_{\epsilon_t}^{i})}{t_{\epsilon_t}^{i}}\right) \right\rVert\\
\leq\quad&\left\lVert\, \text{ReLU}\left(\frac{1}{V_{th}}\left(\mathbf{W}\mathbf{a}_{\epsilon_t}^i+\mathbf{F}\mathbf{x^*}+\mathbf{b}\right)\right) - \text{ReLU}\left(\frac{1}{V_{th}}\left(\mathbf{W}\mathbf{a}_{\epsilon_t}^{i-1}+\mathbf{F}\mathbf{x^*}+\mathbf{b}\right)\right) \right\rVert\\
&+ \left\lVert\, \text{ReLU}\left(\frac{1}{V_{th}}\left(\frac{t_{\epsilon_t}^{i+1}-\Delta t_d}{t_{\epsilon_t}^{i+1}}\mathbf{W}\mathbf{a}_{\epsilon_t}^i+\mathbf{F}\mathbf{\overline{x}}(t_{\epsilon_t}^{i+1})+\mathbf{b}\right)\right)-\frac{1}{V_{th}}\frac{\mathbf{u}^+(t_{\epsilon_t}^{i+1})}{t_{\epsilon_t}^{i+1}} - \text{ReLU}\left(\frac{1}{V_{th}}\left(\mathbf{W}\mathbf{a}_{\epsilon_t}^i+\mathbf{F}\mathbf{x^*}+\mathbf{b}\right)\right) \right\rVert\\ 
&+ \left\lVert\, \text{ReLU}\left(\frac{1}{V_{th}}\left(\frac{t_{\epsilon_t}^{i}-\Delta t_d}{t_{\epsilon_t}^{i}}\mathbf{W}\mathbf{a}_{\epsilon_t}^{i-1}+\mathbf{F}\mathbf{\overline{x}}(t_{\epsilon_t}^{i})+\mathbf{b}\right)\right)-\frac{1}{V_{th}}\frac{\mathbf{u}^+(t_{\epsilon_t}^{i})}{t_{\epsilon_t}^{i}} - \text{ReLU}\left(\frac{1}{V_{th}}\left(\mathbf{W}\mathbf{a}_{\epsilon_t}^{i-1}+\mathbf{F}\mathbf{x^*}+\mathbf{b}\right)\right) \right\rVert\\
\leq\quad&\left\lVert\, \text{ReLU}\left(\frac{1}{V_{th}}\left(\mathbf{W}\mathbf{a}_{\epsilon_t}^i+\mathbf{F}\mathbf{x^*}+\mathbf{b}\right)\right) - \text{ReLU}\left(\frac{1}{V_{th}}\left(\mathbf{W}\mathbf{a}_{\epsilon_t}^{i-1}+\mathbf{F}\mathbf{x^*}+\mathbf{b}\right)\right) \right\rVert\\
&+ \frac{1}{V_{th}}\left(\left\lVert \frac{\Delta t_d}{t_{\epsilon_t}^{i+1}}\mathbf{W}\mathbf{a}_{\epsilon_t}^i \right\rVert + \left\lVert \mathbf{F}\left(\mathbf{\overline{x}}(t_{\epsilon_t}^{i+1})-\mathbf{x^*}\right) \right\rVert + \left\lVert \frac{\mathbf{u}^+(t_{\epsilon_t}^{i+1})}{t_{\epsilon_t}^{i+1}} \right\rVert +  \left\lVert \frac{\Delta t_d}{t_{\epsilon_t}^i}\mathbf{W}\mathbf{a}_{\epsilon_t}^{i-1} \right\rVert + \left\lVert \mathbf{F}\left(\mathbf{\overline{x}}(t_{\epsilon_t}^{i})-\mathbf{x^*}\right) \right\rVert + \left\lVert \frac{\mathbf{u}^+(t_{\epsilon_t}^{i})}{t_{\epsilon_t}^{i}} \right\rVert \right).
\end{aligned}
\end{equation}

As $\lVert \mathbf{W} \lVert_2 \leq \gamma V_{th}, \gamma<1, \mathbf{\overline{x}}(t)\rightarrow \mathbf{x^*}$, and $\vert\mathbf{u}^+_i(t)\vert\leq c,\forall i,t$, we have $\left\lVert \mathbf{a}_{\epsilon_t}^{i+1} \right\rVert\leq \gamma \left\lVert \mathbf{a}_{\epsilon_t}^i \right\rVert+\frac{1}{V_{th}}\left(\left\lVert \mathbf{F}\mathbf{\overline{x}}(t_{\epsilon_t}^{i+1}) \right\rVert+\left\lVert \mathbf{b}\right\rVert+\left\lVert\frac{\mathbf{u}^+(t_{\epsilon_t}^{i+1})}{t_{\epsilon_t}^{i+1}}\right\rVert\right)$, and therefore $\left\lVert \mathbf{a}_{\epsilon_t}^i \right\rVert$ is bounded.

Since $t_{\epsilon_t}^{i}\rightarrow \infty, \mathbf{\overline{x}}(t)\rightarrow \mathbf{x^*}$, and $\vert\mathbf{u}^+_i(t)\vert\leq c,\forall i,t$, then $\forall \epsilon>0, \exists M$ such that when $i>M$, we have: \\
\begin{align}
    &\frac{1}{V_{th}}\left(\left\lVert \frac{\Delta t_d}{t_{\epsilon_t}^{i+1}}\mathbf{W}\mathbf{a}_{\epsilon_t}^i \right\rVert + \left\lVert \mathbf{F}\left(\mathbf{\overline{x}}(t_{\epsilon_t}^{i+1})-\mathbf{x^*}\right) \right\rVert + \left\lVert \frac{\mathbf{u}^+(t_{\epsilon_t}^{i+1})}{t_{\epsilon_t}^{i+1}} \right\rVert +  \left\lVert \frac{\Delta t_d}{t_{\epsilon_t}^i}\mathbf{W}\mathbf{a}_{\epsilon_t}^{i-1} \right\rVert + \left\lVert \mathbf{F}\left(\mathbf{\overline{x}}(t_{\epsilon_t}^{i})-\mathbf{x^*}\right) \right\rVert\right. \notag\\
    &\left. + \left\lVert \frac{\mathbf{u}^+(t_{\epsilon_t}^{i})}{t_{\epsilon_t}^{i}} \right\rVert \right) \leq \frac{\epsilon(1-\gamma)}{2}.
\end{align}

And since $\lVert \mathbf{W} \lVert_2 \leq \gamma V_{th}$, we have: \\
\begin{align}
    \left\lVert\, \text{ReLU}\left(\frac{1}{V_{th}}\left(\mathbf{W}\mathbf{a}_{\epsilon_t}^i+\mathbf{F}\mathbf{x^*}+\mathbf{b}\right)\right) - \text{ReLU}\left(\frac{1}{V_{th}}\left(\mathbf{W}\mathbf{a}_{\epsilon_t}^{i-1}+\mathbf{F}\mathbf{x^*}+\mathbf{b}\right)\right) \right\rVert \leq \gamma\left\lVert \mathbf{a}_{\epsilon_t}^i-\mathbf{a}_{\epsilon_t}^{i-1} \right\rVert.
\end{align}

Therefore, when $i>M$ it holds that:
\begin{equation}
    \left\lVert \mathbf{a}_{\epsilon_t}^{i+1}-\mathbf{a}_{\epsilon_t}^i\right\rVert \leq \gamma\left\lVert \mathbf{a}_{\epsilon_t}^i-\mathbf{a}_{\epsilon_t}^{i-1} \right\rVert + \frac{\epsilon(1-\gamma)}{2}.
\end{equation}

By iterating the above inequality, we have $ \lVert \mathbf{a}_{\epsilon_t}^{i+1}-\mathbf{a}_{\epsilon_t}^i\rVert \leq \gamma^{i-M}\lVert \mathbf{a}_{\epsilon_t}^{M+1}-\mathbf{a}_{\epsilon_t}^{M} \rVert + \frac{\epsilon(1-\gamma)}{2}\left(1+\gamma+\cdots+\gamma^{i-M-1}\right)<\gamma^{i-M}\lVert \mathbf{a}_{\epsilon_t}^{M+1}-\mathbf{a}_{\epsilon_t}^{M} \rVert + \frac{\epsilon}{2}.$ There exists $M'$ such that when $i>M+M'$, $\gamma^{i-M}\lVert \mathbf{a}_{\epsilon_t}^{M+1}-\mathbf{a}_{\epsilon_t}^{M} \rVert \leq \frac{\epsilon}{2}$, and therefore $\lVert \mathbf{a}_{\epsilon_t}^{i+1}-\mathbf{a}_{\epsilon_t}^i\rVert < \epsilon$.
According to Cauchy's convergence test, the sequence $\{\mathbf{a}_{\epsilon_t}^i\}_{i=0}^{\infty}$ converges to $\mathbf{a^*_{\epsilon_t}}$. Considering the limit, it satisfies $\mathbf{a^*_{\epsilon_t}} = \text{ReLU}\left(\frac{1}{V_{th}}\left(\mathbf{W}\mathbf{a^*_{\epsilon_t}}+\mathbf{F}\mathbf{x^*}+\mathbf{b}\right)\right)$.

The solution of $\mathbf{a}$ for the equation $\mathbf{a} = \text{ReLU}\left(\frac{1}{V_{th}}\left(\mathbf{W}\mathbf{a}+\mathbf{F}\mathbf{x^*}+\mathbf{b}\right)\right)$ is unique, since $\lVert \mathbf{W} \lVert_2 \leq \gamma V_{th}, \gamma<1$. So $\forall \epsilon_t$, the sequence $\{\mathbf{a}_{\epsilon_t}^i\}_{i=0}^{\infty}$ converges to the same point. Therefore, the average firing rates $\mathbf{a}(t)$ of IF model will converge to an equilibrium point $\mathbf{a}(t)\rightarrow \mathbf{a^*}$, which satisfies the fixed-point equation $\mathbf{a^*} = \text{ReLU}\left(\frac{1}{V_{th}}\left(\mathbf{W}\mathbf{a^*}+\mathbf{F}\mathbf{x^*}+\mathbf{b}\right)\right)$.

\end{proof}

Theorem~\ref{thm2sup} can be similarly proved as the above proof for sequence convergence, by substituting the ReLU function with $\sigma(x)=\left\{\begin{aligned}
    1,\quad &x>1\\
    x,\quad &0\leq x\leq1\\
    0,\quad &x<0\\
\end{aligned}\right.$. We omit repetitive details here.

\begin{thmsup}\label{thm2sup}
If the average inputs converge to an equilibrium point $\mathbf{\overline{x}}[t]\rightarrow \mathbf{x^*}$, and there exists constant $c$ and $\gamma<1$ such that $\vert\mathbf{u}^+_i(t)\vert\leq c,\forall i,t$ and $\lVert \mathbf{W} \rVert_2 \leq \gamma V_{th}$, then the average firing rates of FSNN with discrete IF model will converge to an equilibrium point $\mathbf{a}[t]\rightarrow \mathbf{a^*}$, which satisfies the fixed-point equation $\mathbf{a^*} = \sigma\left(\frac{1}{V_{th}}\left(\mathbf{W}\mathbf{a^*}+\mathbf{F}\mathbf{x^*}+\mathbf{b}\right)\right)$.
\end{thmsup}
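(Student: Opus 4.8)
The plan is to imitate the argument used for Theorem~\ref{thm1sup}, but to exploit the fact that the discrete recursion~(\ref{eq.dis2}) is already a genuine one-step iteration, so there is no need to build the auxiliary subsequences $\{\mathbf{a}_{\epsilon_t}^i\}$ as in the continuous case: I can work directly with the sequence $\{\mathbf{a}[t]\}_{t\geq 1}$. First I would record that $\mathbf{a}[t]$ is uniformly bounded — indeed, as noted after Eq.~(\ref{eq.dis1}), each component of $\mathbf{a}[t]=\tfrac1t\sum_{\tau=1}^t\mathbf{s}[\tau]$ lies in $[0,1]$, so $\lVert\mathbf{a}[t]\rVert$ is bounded independently of $t$ (this can also be re-derived from~(\ref{eq.dis2}) using $\lVert\mathbf{W}\rVert_2\leq\gamma V_{th}$ with $\gamma<1$, the boundedness of $\mathbf{\overline{x}}[t]$, and $\vert\mathbf{u}^+_i[t]\vert\leq c$, exactly as in the continuous proof).

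Next I would estimate $\lVert\mathbf{a}[t+1]-\mathbf{a}[t]\rVert$. The clamp $\sigma$ is piecewise linear with slopes in $\{0,1\}$, hence $1$-Lipschitz, so subtracting~(\ref{eq.dis2}) at $t$ and $t-1$ gives
\[
\lVert\mathbf{a}[t+1]-\mathbf{a}[t]\rVert \leq \frac{1}{V_{th}}\left\lVert \tfrac{t}{t+1}\mathbf{W}\mathbf{a}[t]-\tfrac{t-1}{t}\mathbf{W}\mathbf{a}[t-1]+\mathbf{F}\bigl(\mathbf{\overline{x}}[t]-\mathbf{\overline{x}}[t-1]\bigr)-\tfrac{\mathbf{u}^+[t+1]}{t+1}+\tfrac{\mathbf{u}^+[t]}{t}\right\rVert .
\]
The key bookkeeping step is to peel the genuine contraction out of the time-varying prefactor by writing $\tfrac{t}{t+1}\mathbf{W}\mathbf{a}[t]-\tfrac{t-1}{t}\mathbf{W}\mathbf{a}[t-1]=\mathbf{W}(\mathbf{a}[t]-\mathbf{a}[t-1])-\tfrac{1}{t+1}\mathbf{W}\mathbf{a}[t]+\tfrac{1}{t}\mathbf{W}\mathbf{a}[t-1]$. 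Applying $\lVert\mathbf{W}\rVert_2\leq\gamma V_{th}$ to the first term yields $\gamma\lVert\mathbf{a}[t]-\mathbf{a}[t-1]\rVert$; the remaining five terms — the two $O(1/t)$ leftovers from the peeling (bounded because $\mathbf{a}$ is bounded, Step 1), $\lVert\mathbf{F}(\mathbf{\overline{x}}[t]-\mathbf{\overline{x}}[t-1])\rVert\to 0$ since $\mathbf{\overline{x}}[t]\to\mathbf{x^*}$ forces consecutive differences to vanish, and the two $O(c/t)$ terms from $\mathbf{u}^+$ — assemble into an error $\delta_t$ with $\delta_t\to 0$. Hence $\lVert\mathbf{a}[t+1]-\mathbf{a}[t]\rVert\leq\gamma\lVert\mathbf{a}[t]-\mathbf{a}[t-1]\rVert+\delta_t$.

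From here the conclusion follows exactly as in the proof of Theorem~\ref{thm1sup}: given $\epsilon>0$, choose $M$ with $\delta_t\leq\epsilon(1-\gamma)/2$ for all $t>M$; iterating the recursion gives $\lVert\mathbf{a}[t+1]-\mathbf{a}[t]\rVert\leq\gamma^{t-M}\lVert\mathbf{a}[M+1]-\mathbf{a}[M]\rVert+\tfrac{\epsilon}{2}(1+\gamma+\cdots)<\gamma^{t-M}\lVert\mathbf{a}[M+1]-\mathbf{a}[M]\rVert+\tfrac{\epsilon}{2}$, which is below $\epsilon$ once $t$ is large; by Cauchy's criterion $\{\mathbf{a}[t]\}$ converges to some $\mathbf{a^*}$. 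Letting $t\to\infty$ in~(\ref{eq.dis2}) and using $\mathbf{\overline{x}}[t]\to\mathbf{x^*}$, $\tfrac{t}{t+1}\to 1$, $\tfrac{\mathbf{u}^+[t+1]}{t+1}\to 0$, and continuity of $\sigma$, the limit satisfies $\mathbf{a^*}=\sigma\bigl(\tfrac{1}{V_{th}}(\mathbf{W}\mathbf{a^*}+\mathbf{F}\mathbf{x^*}+\mathbf{b})\bigr)$; uniqueness of this fixed point holds because $\mathbf{a}\mapsto\sigma\bigl(\tfrac{1}{V_{th}}(\mathbf{W}\mathbf{a}+\mathbf{F}\mathbf{x^*}+\mathbf{b})\bigr)$ is a $\gamma$-contraction. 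The only place needing care — the ``main obstacle,'' though a mild one — is the coefficient peeling: one must correctly split $\mathbf{W}(\mathbf{a}[t]-\mathbf{a}[t-1])$ off the $\tfrac{t}{t+1}$ prefactor to expose the $\gamma$-contraction and relegate the $O(1/t)$ remainders (legitimate only because of the Step~1 boundedness) into the vanishing error $\delta_t$; the rest is a direct transcription of the continuous proof with $\mathrm{ReLU}$ replaced by the likewise $1$-Lipschitz clamp $\sigma$.
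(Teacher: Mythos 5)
Your proposal is correct and follows essentially the same route as the paper: the paper proves the discrete IF case by transcribing the continuous argument (dropping the $\epsilon_t$-subsequence construction, which is only needed for the continuous delay) and replacing $\mathrm{ReLU}$ with the likewise $1$-Lipschitz clamp $\sigma$, arriving at the same recursion $\lVert\mathbf{a}[t+1]-\mathbf{a}[t]\rVert\leq\gamma\lVert\mathbf{a}[t]-\mathbf{a}[t-1]\rVert+\delta_t$ and the same Cauchy/contraction conclusion. Your "coefficient peeling" inside the norm is just a cleaner bookkeeping of the paper's add-and-subtract of the idealized iterate $\sigma\bigl(\tfrac{1}{V_{th}}(\mathbf{W}\mathbf{a}[t]+\mathbf{F}\mathbf{x^*}+\mathbf{b})\bigr)$; the substance is identical.
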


\section{Proof of Theorem~\ref{thm3sup}}

\begin{thmsup}\label{thm3sup}
If the average inputs converge to an equilibrium point $\mathbf{\overline{x}}[t]\rightarrow \mathbf{x^*}$, and there exists constant $c$ and $\gamma<1$ such that $\vert{\mathbf{u}_i^l}^+[t]\vert\leq c,\forall i,l,t$ and $\lVert \mathbf{W}^1\rVert_2\lVert \mathbf{F}^N\rVert_2\cdots\lVert \mathbf{F}^2\rVert_2\leq \gamma V_{th}^N$, then the average firing rates of multi-layer FSNN with discrete IF model will converge to equilibrium points $\mathbf{a}^l[t]\rightarrow {\mathbf{a}^l}^*$, which satisfy the fixed-point equations ${\mathbf{a}^1}^* = f_1\left(f_N\circ\cdots\circ f_2({\mathbf{a}^1}^*), \mathbf{x^*}\right)$ and ${\mathbf{a}^{l+1}}^*=f_{l+1}({\mathbf{a}^l}^*)$, where $f_1(\mathbf{a}, \mathbf{x})=\sigma\left(\frac{1}{V_{th}}(\mathbf{W}^1\mathbf{a}+\mathbf{F}^1\mathbf{x}+\mathbf{b}^1)\right), f_{l+1}(\mathbf{a}) = \sigma\left(\frac{1}{V_{th}}(\mathbf{F}^{l+1}\mathbf{a}+\mathbf{b}^{l+1})\right)$.
\end{thmsup}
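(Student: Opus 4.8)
The plan is to reduce the multi-layer case to the single-layer analysis of Theorem~\ref{thm2sup} by collapsing layers $2,\dots,N$ into the composed map $f_N\circ\cdots\circ f_2$, so that the feedback loop is governed by the single map $g(\mathbf{a}):=f_1\!\left(f_N\circ\cdots\circ f_2(\mathbf{a}),\mathbf{x^*}\right)$ acting on the first-layer firing rates. Two things must be established: (i) $g$ is a contraction under the stated spectral-norm condition, so it has a unique fixed point ${\mathbf{a}^1}^*$; and (ii) the actual discrete dynamics of $\mathbf{a}^1[t]$ coincide with the iteration $\mathbf{a}^1[t+1]=g(\mathbf{a}^1[t])$ perturbed by an error that vanishes as $t\rightarrow\infty$. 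A Cauchy-sequence argument in the spirit of the proof of Theorem~\ref{thm1sup} then gives $\mathbf{a}^1[t]\rightarrow{\mathbf{a}^1}^*$, and the remaining layers follow by continuity together with the definition ${\mathbf{a}^{l+1}}^*:=f_{l+1}({\mathbf{a}^l}^*)$.

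First I would derive the perturbed recurrences. Summing the membrane-potential updates in Eq.(\ref{eq.multilayer}) over $\tau=1,\dots,t+1$, exactly as Eq.(\ref{eq.dis1}) and Eq.(\ref{eq.dis2}) are obtained in the single-layer case, and using the decomposition $\mathbf{u}_i^l[t]={\mathbf{u}_i^l}^-[t]+{\mathbf{u}_i^l}^+[t]$, one gets
\[
\mathbf{a}^1[t+1]=\sigma\!\left(\tfrac{1}{V_{th}}\!\left(\tfrac{t}{t+1}\mathbf{W}^1\mathbf{a}^N[t]+\mathbf{F}^1\mathbf{\overline{x}}[t]+\mathbf{b}^1-\tfrac{{\mathbf{u}^1}^+[t+1]}{t+1}\right)\right),
\]
\[
\mathbf{a}^{l+1}[t+1]=\sigma\!\left(\tfrac{1}{V_{th}}\!\left(\mathbf{F}^{l+1}\mathbf{a}^l[t+1]+\mathbf{b}^{l+1}-\tfrac{{\mathbf{u}^{l+1}}^+[t+1]}{t+1}\right)\right),\qquad l=1,\dots,N-1,
\]
where I use that the non-feedback connections fire within the same time step, so $\mathbf{a}^{l+1}[t+1]$ is driven by $\mathbf{a}^l[t+1]$. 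Unlike the continuous IF case, boundedness is automatic: every $\mathbf{a}^l[t]$ is entrywise in $[0,1]$, since at most $t$ spikes occur in $t$ steps. Because $\sigma$ is $1$-Lipschitz, each right-hand side equals $f_{l+1}(\mathbf{a}^l[t+1])$ (resp.\ $f_1(\mathbf{a}^N[t],\mathbf{x^*})$) up to an additive error $\bm{\delta}^l[t+1]$ whose norm is a sum of terms that are either $O(1/t)$---the $\tfrac{t}{t+1}$ discrepancy and the terms ${\mathbf{u}}^+[t+1]/(t+1)$, bounded because $\mathbf{a}^N[t]$ is bounded and $|{\mathbf{u}_i^l}^+[t]|\leq c$---or equal to $\lVert\mathbf{F}^1(\mathbf{\overline{x}}[t]-\mathbf{x^*})\rVert/V_{th}$; hence $\bm{\delta}^l[t+1]\rightarrow 0$.

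Next I would compose and close the loop. Writing $\mathbf{a}^N[t]=f_N\circ\cdots\circ f_2(\mathbf{a}^1[t])+\bm{\eta}[t]$, the error $\bm{\eta}[t]$ is an accumulation of $\bm{\delta}^2[t],\dots,\bm{\delta}^N[t]$ weighted by partial products of the factors $\lVert\mathbf{F}^k\rVert_2/V_{th}$ (again by $1$-Lipschitzness of $\sigma$), so $\bm{\eta}[t]\rightarrow 0$; substituting into the first recurrence yields $\mathbf{a}^1[t+1]=g(\mathbf{a}^1[t])+\bm{\epsilon}[t]$ with $\lVert\bm{\epsilon}[t]\rVert\leq\tfrac{\lVert\mathbf{W}^1\rVert_2}{V_{th}}\lVert\bm{\eta}[t]\rVert+\lVert\bm{\delta}^1[t+1]\rVert\rightarrow 0$. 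Since each $f_l$ is $\lVert\mathbf{F}^l\rVert_2/V_{th}$-Lipschitz and $f_1(\cdot,\mathbf{x^*})$ is $\lVert\mathbf{W}^1\rVert_2/V_{th}$-Lipschitz, $g$ is Lipschitz with constant $\lVert\mathbf{W}^1\rVert_2\lVert\mathbf{F}^N\rVert_2\cdots\lVert\mathbf{F}^2\rVert_2/V_{th}^N\leq\gamma<1$, so by the Banach fixed-point theorem it has a unique fixed point ${\mathbf{a}^1}^*$. The bound $\lVert\mathbf{a}^1[t+1]-{\mathbf{a}^1}^*\rVert\leq\gamma\lVert\mathbf{a}^1[t]-{\mathbf{a}^1}^*\rVert+\lVert\bm{\epsilon}[t]\rVert$ with $\bm{\epsilon}[t]\rightarrow 0$ then forces $\mathbf{a}^1[t]\rightarrow{\mathbf{a}^1}^*$, via the same argument with $\epsilon$ and $M$ as in the proof of Theorem~\ref{thm1sup}. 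Finally, continuity of $f_{l+1}$ together with $\bm{\delta}^{l+1}[t]\rightarrow 0$ propagates convergence to ${\mathbf{a}^{l+1}}^*$ for every layer, and the fixed-point equations in the statement hold in the limit.

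The step I expect to be the main obstacle is the bookkeeping in the composition: because layers $2,\dots,N$ update within the same time step, one pass around the feedback loop chains $N-1$ layer maps, so I must check that the $N$ separate errors (the $O(1/t)$ terms and the input-convergence term), after being amplified by the bounded, sub-$\gamma$ products of Lipschitz constants, still sum to something tending to $0$ uniformly, and that the $\tfrac{t}{t+1}$ prefactor is absorbed cleanly. Everything else is a direct transcription of the single-layer argument---a contraction plus an asymptotically vanishing perturbation---so no genuinely new idea is needed beyond Theorems~\ref{thm1sup} and~\ref{thm2sup}.
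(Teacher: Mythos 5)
Your proposal is correct and follows essentially the same route as the paper's proof: collapse layers $2,\dots,N$ into the composed map, show the composition is $\gamma$-Lipschitz via the product of spectral norms, and treat the $\tfrac{t}{t+1}$ factor, the ${\mathbf{u}^l}^+/t$ terms, and the input-convergence term as a perturbation that vanishes after being propagated through the layer chain. The only (harmless) difference is that you invoke Banach to get ${\mathbf{a}^1}^*$ up front and bound $\lVert\mathbf{a}^1[t+1]-{\mathbf{a}^1}^*\rVert$ directly, whereas the paper bounds consecutive differences $\lVert\mathbf{a}^1[t+1]-\mathbf{a}^1[t]\rVert$ and concludes by Cauchy's convergence test.
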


\begin{proof}

When the multi-layer structure is considered, with similar definitions of average firing rates $\mathbf{a}^l[t]$ for different layers and the separation  $\mathbf{u}_i^l[t]={\mathbf{u}_i^l}^-[t]+{\mathbf{u}_i^l}^+[t]$, we have the equations:
\begin{equation}
\left\{
\begin{aligned}
    &\mathbf{a}^1[t+1] = \sigma\left(\frac{1}{V_{th}}\left(\frac{t}{t+1}\mathbf{W}^1\mathbf{a}^N[t]+\mathbf{F}^1\mathbf{\overline{x}}[t]+\mathbf{b}^1\right)\right)-\frac{1}{V_{th}}\frac{{\mathbf{u}^1}^+[t+1]}{t+1},\\
    &\mathbf{a}^{l+1}[t+1] = \sigma\left(\frac{1}{V_{th}}\left(\mathbf{F}^{l+1}\mathbf{a}^{l}[t+1]+\mathbf{b}^{l+1}\right)\right)-\frac{1}{V_{th}}\frac{{\mathbf{u}^{l+1}}^+[t+1]}{t+1},\quad l=1,\cdots,N-1.\\
\end{aligned}
\right.
\end{equation}

Let $f_1^{t+1}(\mathbf{a}, \mathbf{x}, \mathbf{u}^+)=\sigma\left(\frac{1}{V_{th}}\left(\frac{t}{t+1}\mathbf{W}^1\mathbf{a}+\mathbf{F}^1\mathbf{x}+\mathbf{b}^1\right)\right)-\frac{1}{V_{th}}\frac{\mathbf{u}^+}{t+1}, \\f_{l+1}^{t}(\mathbf{a}, \mathbf{u}^+)=\sigma\left(\frac{1}{V_{th}}\left(\mathbf{F}^{l+1}\mathbf{a}+\mathbf{b}^{l+1}\right)\right)-\frac{1}{V_{th}}\frac{\mathbf{u}^+}{t}, \\f_1(\mathbf{a}, \mathbf{x})=\sigma\left(\frac{1}{V_{th}}(\mathbf{W}^1\mathbf{a}+\mathbf{F}^1\mathbf{x}+\mathbf{b}^1)\right), f_{l+1}(\mathbf{a}) = \sigma\left(\frac{1}{V_{th}}(\mathbf{F}^{l+1}\mathbf{a}+\mathbf{b}^{l+1})\right).$ 

Then $\mathbf{a}^1[t+1] = f_1^{t+1}\left(f_N^{t}\left(\cdots f_2^{t}\left(\mathbf{a}^1[t], {\mathbf{u}^2}^+[t]\right) \cdots, {\mathbf{u}^N}^+[t]\right), \mathbf{\overline{x}[t]}, {\mathbf{u}^1}^+[t+1]\right)$.

We have:
\begin{equation}
\begin{aligned}
& \left\lVert \mathbf{a}^1[t+1] - \mathbf{a}^1[t] \right\rVert\\
=\quad & \left\lVert f_1^{t+1}\left(f_N^{t}\left(\cdots f_2^{t}\left(\mathbf{a}^1[t], {\mathbf{u}^2}^+[t]\right) \cdots, {\mathbf{u}^N}^+[t]\right), \mathbf{\overline{x}[t]}, {\mathbf{u}^1}^+[t+1]\right)\right.\\
& \left.- f_1^{t}\left(f_N^{t-1}\left(\cdots f_2^{t-1}\left(\mathbf{a}^1[t-1], {\mathbf{u}^2}^+[t-1]\right) \cdots, {\mathbf{u}^N}^+[t-1]\right), \mathbf{\overline{x}[t-1]}, {\mathbf{u}^1}^+[t]\right) \right\rVert\\ 
\leq\quad & \left\lVert f_1\left(f_N\left(\cdots f_2\left(\mathbf{a}^1[t] \right) \cdots \right), \mathbf{x^*}\right) - f_1\left(f_N\left(\cdots f_2\left(\mathbf{a}^1[t-1] \right) \cdots \right), \mathbf{x^*}\right) \right\rVert\\
& + \left\lVert  f_1^{t+1}\left(f_N^{t}\left(\cdots f_2^{t}\left(\mathbf{a}^1[t], {\mathbf{u}^2}+[t]\right) \cdots, {\mathbf{u}^N}^+[t]\right), \mathbf{\overline{x}[t]}, {\mathbf{u}^1}^+[t+1]\right) - f_1\left(f_N\left(\cdots f_2\left(\mathbf{a}^1[t] \right) \cdots \right), \mathbf{x^*}\right) \right\rVert\\
& + \left\lVert f_1^{t}\left(f_N^{t-1}\left(\cdots f_2^{t-1}\left(\mathbf{a}^1[t-1], {\mathbf{u}^2}^+[t-1]\right) \cdots, {\mathbf{u}^N}^+[t-1]\right), \mathbf{\overline{x}[t-1]}, {\mathbf{u}^1}^+[t]\right)\right.\\
&\left.\quad - f_1\left(f_N\left(\cdots f_2\left(\mathbf{a}^1[t-1] \right) \cdots \right), \mathbf{x^*}\right) \right\rVert\\ 
\leq\quad & \left\lVert f_1\left(f_N\left(\cdots f_2\left(\mathbf{a}^1[t] \right) \cdots \right), \mathbf{x^*}\right) - f_1\left(f_N\left(\cdots f_2\left(\mathbf{a}^1[t-1] \right) \cdots \right), \mathbf{x^*}\right) \right\rVert\\
& + \frac{1}{V_{th}}\left(\left\lVert \frac{1}{t+1}\mathbf{W}^1f_N^{t}\left(\cdots f_2^{t}\left(\mathbf{a}^1[t], {\mathbf{u}^2}^+[t]\right) \cdots, {\mathbf{u}^N}^+[t]\right) \right\rVert\right.\\
& \left.\quad + \underbrace{\left\lVert \mathbf{W}^1\left(f_N^{t}\left(\cdots f_2^{t}\left(\mathbf{a}^1[t], {\mathbf{u}^2}^+[t]\right) \cdots, {\mathbf{u}^N}^+[t]\right) - f_N\left(\cdots f_2\left(\mathbf{a}^1[t] \right) \cdots \right) \right) \right\rVert}_{A} \right.\\
& \left.\quad + \left\lVert \mathbf{F}\left(\mathbf{\overline{x}}[t]-\mathbf{x^*}\right) \right\rVert + \left\lVert \frac{{\mathbf{u}^1}^+[t+1]}{t+1} \right\rVert \right.\\
& \left.\quad +  \left\lVert \frac{1}{t}\mathbf{W}^1f_N^{t-1}\left(\cdots f_2^{t-1}\left(\mathbf{a}^1[t-1], {\mathbf{u}^2}^+[t-1]\right) \cdots, {\mathbf{u}^N}^+[t-1]\right) \right\rVert\right.\\
& \left.\quad + \underbrace{\left\lVert \mathbf{W}^1\left(f_N^{t-1}\left(\cdots f_2^{t-1}\left(\mathbf{a}^1[t-1], {\mathbf{u}^2}^+[t-1]\right) \cdots, {\mathbf{u}^N}^+[t-1]\right) - f_N\left(\cdots f_2\left(\mathbf{a}^1[t-1] \right) \cdots \right) \right) \right\rVert}_{B} \right.\\
& \left.\quad + \left\lVert \mathbf{F}\left(\mathbf{\overline{x}}[t-1]-\mathbf{x^*}\right) \right\rVert + \left\lVert \frac{{\mathbf{u}^1}^+[t]}{t} \right\rVert \right).\\ 
\end{aligned}
\end{equation}

For the term $A$ and $B$, they are bounded by:
\begin{equation}
\begin{aligned}
A\leq\quad & \frac{1}{V_{th}}\left(\left\lVert \mathbf{W}^1\mathbf{F}^N \left(f_{N-1}^t\left(\cdots f_2^t\left(\mathbf{a}^1[t], {\mathbf{u}^2}^+[t]  \right)\cdots, {\mathbf{u}^{N-1}}^+ \right) - f_{N-1}\left(\cdots f_2\left(\mathbf{a}^1[t]\right)  \cdots\right) \right) \right\rVert + \left\lVert \mathbf{W}^1 \frac{{\mathbf{u}^N}^+[t]}{t} \right\rVert \right)\\
\leq\quad & \cdots\cdots\\
\leq\quad & \frac{1}{V_{th}}\left\lVert \mathbf{W}^1 \frac{{\mathbf{u}^N}^+[t]}{t} \right\rVert + \cdots + \frac{1}{V_{th}^{N-1}} \left\lVert \mathbf{W}^1\mathbf{F}^N\cdots \mathbf{F}^3\frac{{\mathbf{u}^2}^+[t]}{t} \right\rVert,
\end{aligned}
\end{equation}
and $B$ has the same form as $A$ by substituting $t$ with $t-1$.

Since $\lVert \mathbf{W}^1\rVert_2\lVert \mathbf{F}^N\rVert_2\cdots\lVert \mathbf{F}^2\rVert_2\leq \gamma V_{th}^N$, we have:
\begin{equation}
\begin{aligned}
    & \left\lVert f_1\left(f_N\left(\cdots f_2\left(\mathbf{a}^1[t] \right) \cdots \right), \mathbf{x^*}\right) - f_1\left(f_N\left(\cdots f_2\left(\mathbf{a}^1[t-1] \right) \cdots \right), \mathbf{x^*}\right) \right\rVert\\
    \leq\quad & \left\lVert \frac{1}{V_{th}}\mathbf{W}^1\left(f_N\left(\cdots f_2\left(\mathbf{a}^1[t] \right) \cdots \right) - f_N\left(\cdots f_2\left(\mathbf{a}^1[t-1] \right) \cdots \right)\right) \right\rVert\\
    \leq\quad & \cdots\cdots\\
    \leq\quad & \left\lVert \frac{1}{V_{th}^N} \mathbf{W}^1\mathbf{F}^N\cdots \mathbf{F}^2 \left(\mathbf{a}^1[t] - \mathbf{a}^1[t-1] \right) \right\rVert\\
    \leq\quad & \gamma \left\lVert \mathbf{a}^1[t] - \mathbf{a}^1[t-1] \right\rVert.
\end{aligned}
\end{equation}

And since $t\rightarrow\infty, \mathbf{\overline{x}}[t]\rightarrow \mathbf{x^*}, \vert{\mathbf{u}_i^l}^+[t]\vert\leq c,\forall i,l,t$, then $\forall\epsilon>0, \exists M$ such that when $t>M$, we have:
\begin{equation}
    \left\lVert \mathbf{a}^1[t+1] - \mathbf{a}^1[t] \right\rVert \leq \gamma \left\lVert \mathbf{a}^1[t] - \mathbf{a}^1[t-1] \right\rVert + \frac{\epsilon(1-\gamma)}{2}.
\end{equation}

Then $\lVert \mathbf{a}^1[t+1] - \mathbf{a}^1[t] \rVert < \gamma^{t-M}\lVert \mathbf{a}^1[M+1] - \mathbf{a}^1[M] \rVert+\frac{\epsilon}{2}$, and there exists $M'$ such that when $t > M + M'$, $\lVert \mathbf{a}^1[t+1] - \mathbf{a}^1[t] \rVert < \epsilon$. According to Cauchy's convergence test, $\mathbf{a}^1[t]$ converges to $\mathbf{a}^{1^*}$, which satisfies ${\mathbf{a}^1}^* = f_1\left(f_N\circ\cdots\circ f_2({\mathbf{a}^1}^*), \mathbf{x^*}\right)$. Considering the limit, $\mathbf{a}^{l+1}[t]$ converges to $\mathbf{a}^{{l+1}^*}$, which satisfies ${\mathbf{a}^{l+1}}^*=f_{l+1}({\mathbf{a}^l}^*)$.

\end{proof}

\section{Derivation for the LIF Model}

\subsection{Continuous View}

We follow the same notations as Section 4.1.1 and redefine $\mathbf{W}, \mathbf{F}, \mathbf{b}$ by absorbing $\tau_m$ into them. The dynamics of membrane potentials are expressed as:
\begin{equation}
    \frac{\mathrm{d}\mathbf{u}}{\mathrm{d}t}=-\frac{1}{\tau_m}\mathbf{u}+\mathbf{W}\mathbf{s}(t-\Delta t_d) + \mathbf{F}\mathbf{x}(t) + \mathbf{b} - V_{th}\mathbf{s}(t).
\end{equation}

Through integration, we have:
\begin{equation}
    \mathbf{u}(t) = \mathbf{W}\int_0^{t-\Delta t_d}\kappa(t-\Delta t_d - \tau)\mathbf{s}(\tau)\mathrm{d}\tau + \mathbf{F}\int_0^{t}\kappa(t-\tau)\mathbf{x}(\tau)\mathrm{d}\tau + t\mathbf{b} - V_{th} \int_0^{t}\kappa(t-\tau)\mathbf{s}(\tau)\mathrm{d}\tau,
\end{equation}
where $\kappa(\tau)=\exp(-\frac{\tau}{\tau_m})$ is the response kernel of the LIF model. Define the weighted average firing rate as $\mathbf{\hat{a}}(t)=\frac{\int_0^t \kappa(t-\tau)\mathbf{s}(\tau)\mathrm{d}\tau}{\int_0^t \kappa(t-\tau)\mathrm{d}\tau}$, and the weighted average inputs as $\mathbf{\hat{x}}(t)=\frac{\int_0^t \kappa(t-\tau)\mathbf{x}(\tau)\mathrm{d}\tau}{\int_0^t \kappa(t-\tau)\mathrm{d}\tau}$. Then we have the equation:
\begin{equation}
    \mathbf{\hat{a}}(t) = \frac{1}{V_{th}}\left(\frac{\int_0^{t-\Delta t_d} \kappa(\tau)\mathrm{d}\tau}{\int_0^{t} \kappa(\tau)\mathrm{d}\tau}\mathbf{W}\mathbf{\hat{a}}(t-\Delta t_d)+\mathbf{F}\mathbf{\hat{x}}(t)+\mathbf{b}-\frac{\mathbf{u}(t)}{\int_0^{t} \kappa(\tau)\mathrm{d}\tau}\right).
\end{equation}

Similarly, we can divide $\mathbf{u}(t)$ into two parts $\mathbf{u}_i(t)={\mathbf{u}}^-_i(t)+{\mathbf{u}}^+_i(t)$, and we have the equation with the element-wise ReLU function and a bounded $\mathbf{u}^+(t)$:
\begin{equation}
    \mathbf{\hat{a}}(t) = \text{ReLU}\left(\frac{1}{V_{th}}\left(\frac{\int_0^{t-\Delta t_d} \kappa(\tau)\mathrm{d}\tau}{\int_0^{t} \kappa(\tau)\mathrm{d}\tau}\mathbf{W}\mathbf{\hat{a}}(t-\Delta t_d)+\mathbf{F}\mathbf{\hat{x}}(t)+\mathbf{b}\right)\right)-\frac{1}{V_{th}}\frac{\mathbf{u}^+(t)}{\int_0^{t} \kappa(\tau)\mathrm{d}\tau}.
\end{equation}

As $\int_0^{t} \kappa(\tau)\mathrm{d}\tau=\tau_m\left(1-\exp(-\frac{t}{\tau_m})\right)\rightarrow \tau_m$, compared with the IF model, there could be random error caused by $\frac{\mathbf{u}^+(t)}{\int_0^{t} \kappa(\tau)\mathrm{d}\tau}$ which are not eliminated with time $t\rightarrow \infty$. Therefore when the weighted average inputs converge to an equilibrium point $\mathbf{\hat{x}}(t)\rightarrow \mathbf{x^*}$, the LIF model only gradually approximates an equilibrium with some random error, and the equilibrium state $\mathbf{a^*}$ still follows the equation $\mathbf{a^*} = \text{ReLU}\left(\frac{1}{V_{th}}\left(\mathbf{W}\mathbf{a^*}+\mathbf{F}\mathbf{x^*}+\mathbf{b}\right)\right)$. The error is:

\begin{equation}
\begin{aligned}
    e(t) =\quad &\left\lVert\text{ReLU}\left(\frac{1}{V_{th}}\left(\mathbf{W}\mathbf{\hat{a}}(t)+\mathbf{F}\mathbf{x^*}+\mathbf{b}\right)\right) - \mathbf{\hat{a}}(t) \right\rVert\\
    =\quad &\left\lVert\text{ReLU}\left(\frac{1}{V_{th}}\left(\mathbf{W}\mathbf{\hat{a}}(t)+\mathbf{F}\mathbf{x^*}+\mathbf{b}\right)\right) - \text{ReLU}\left(\frac{1}{V_{th}}\left(\frac{\int_0^{t-\Delta t_d} \kappa(\tau)\mathrm{d}\tau}{\int_0^{t} \kappa(\tau)\mathrm{d}\tau}\mathbf{W}\mathbf{\hat{a}}(t-\Delta t_d)+\mathbf{F}\mathbf{\hat{x}}(t)+\mathbf{b}\right)\right)\right.\\
    &\left.+\frac{1}{V_{th}}\frac{\mathbf{u}^+(t)}{\int_0^{t} \kappa(\tau)\mathrm{d}\tau} \right\rVert\\
    \leq\quad &\frac{1}{V_{th}}\left(\left\lVert \mathbf{W}\left(\mathbf{\hat{a}}(t)-\mathbf{\hat{a}}(t-\Delta t_d)\right) \right\rVert + \left\lVert \frac{\int_{t-\Delta t_d}^t \kappa(\tau)\mathrm{d}\tau}{\int_0^{t} \kappa(\tau)\mathrm{d}\tau} \mathbf{W}\mathbf{\hat{a}}(t-\Delta t_d) \right\rVert + \left\lVert \mathbf{F}\left(\mathbf{\hat{x}}(t)-\mathbf{x^*}\right) \right\rVert + \left\lVert \frac{\mathbf{u}^+(t)}{\int_0^{t} \kappa(\tau)\mathrm{d}\tau} \right\rVert \right).\\
\end{aligned}
\label{eq.error}
\end{equation}

When there exists a constant $\gamma<1$ such that $\lVert \mathbf{W} \rVert_2 \leq \gamma V_{th}$, we have:
\begin{equation}
\begin{aligned}
    &\left\lVert \mathbf{\hat{a}}(t)-\mathbf{\hat{a}}(t-\Delta t_d) \right\rVert \\
    \leq\quad & \left\lVert \text{ReLU}\left(\frac{1}{V_{th}}\left(\mathbf{W}\mathbf{\hat{a}}(t-\Delta t_d)+\mathbf{F}\mathbf{x^*}+\mathbf{b}\right)\right) - \text{ReLU}\left(\frac{1}{V_{th}}\left(\mathbf{W}\mathbf{\hat{a}}(t-2\Delta t_d)+\mathbf{F}\mathbf{x^*}+\mathbf{b}\right)\right) \right\rVert\\
    & + \frac{1}{V_{th}}\left(\left\lVert \frac{\int_{t-\Delta t_d}^t \kappa(\tau)\mathrm{d}\tau}{\int_{0}^t \kappa(\tau)\mathrm{d}\tau}\mathbf{W}\mathbf{\hat{a}}(t-\Delta t_d) \right\rVert + \left\lVert \mathbf{F}\left(\mathbf{\hat{x}}(t)-\mathbf{x^*}\right) \right\rVert + \left\lVert \frac{\mathbf{u}^+(t)}{\int_{0}^t \kappa(\tau)\mathrm{d}\tau} \right\rVert\right.\\
    &\left.\quad +  \left\lVert \frac{\int_{t-2\Delta t_d}^{t-\Delta t_d} \kappa(\tau)\mathrm{d}\tau}{\int_{0}^{t-\Delta t_d} \kappa(\tau)\mathrm{d}\tau}\mathbf{W}\mathbf{\hat{a}}(t-2\Delta t_d) \right\rVert + \left\lVert \mathbf{F}\left(\mathbf{\hat{x}}(t-\Delta t_d)-\mathbf{x^*}\right) \right\rVert + \left\lVert \frac{\mathbf{u}^+(t-\Delta t_d)}{\int_{0}^{t-\Delta t_d} \kappa(\tau)\mathrm{d}\tau} \right\rVert \right)\\
    \leq\quad & \gamma\left\lVert \mathbf{\hat{a}}(t-\Delta t_d)-\mathbf{\hat{a}}(t-2\Delta t_d) \right\rVert\\
    & + \frac{1}{V_{th}}\left(\left\lVert \frac{\int_{t-\Delta t_d}^t \kappa(\tau)\mathrm{d}\tau}{\int_{0}^t \kappa(\tau)\mathrm{d}\tau}\mathbf{W}\mathbf{\hat{a}}(t-\Delta t_d) \right\rVert + \left\lVert \mathbf{F}\left(\mathbf{\hat{x}}(t)-\mathbf{x^*}\right) \right\rVert + \left\lVert \frac{\mathbf{u}^+(t)}{\int_{0}^t \kappa(\tau)\mathrm{d}\tau} \right\rVert\right.\\
    &\left.\quad +  \left\lVert \frac{\int_{t-2\Delta t_d}^{t-\Delta t_d} \kappa(\tau)\mathrm{d}\tau}{\int_{0}^{t-\Delta t_d} \kappa(\tau)\mathrm{d}\tau}\mathbf{W}\mathbf{\hat{a}}(t-2\Delta t_d) \right\rVert + \left\lVert \mathbf{F}\left(\mathbf{\hat{x}}(t-\Delta t_d)-\mathbf{x^*}\right) \right\rVert + \left\lVert \frac{\mathbf{u}^+(t-\Delta t_d)}{\int_{0}^{t-\Delta t_d} \kappa(\tau)\mathrm{d}\tau} \right\rVert \right),\\
\end{aligned}
\end{equation}
and
\begin{equation}
    \frac{1}{V_{th}}\lVert \mathbf{W}\left(\mathbf{\hat{a}}(t)-\mathbf{\hat{a}}(t-\Delta t_d)\right) \rVert \leq \gamma \lVert \mathbf{\hat{a}}(t)-\mathbf{\hat{a}}(t-\Delta t_d) \rVert.
    \label{eq.term1}
\end{equation}

Since $\mathbf{u}^+(t)$ is bounded by a constant $c$, $\frac{\int_{t-\Delta t_d}^t \kappa(\tau)\mathrm{d}\tau}{\int_{0}^t \kappa(\tau)\mathrm{d}\tau}\rightarrow 0$ and $\mathbf{\hat{x}}(t)\rightarrow \mathbf{x^*}$, there exists a constant $c'$ and $M$ such that when $t>M$, the following holds:
\begin{equation}
    \lVert \mathbf{\hat{a}}(t)-\mathbf{\hat{a}}(t-\Delta t_d) \rVert \leq \gamma\lVert \mathbf{\hat{a}}(t-\Delta t_d)-\mathbf{\hat{a}}(t-2\Delta t_d) \rVert + c'.
\end{equation}

Thus $\lVert \mathbf{\hat{a}}(t)-\mathbf{\hat{a}}(t-\Delta t_d) \rVert$ is bounded by $\frac{c'}{1-\gamma}$ when $t$ is large enough. Plugging this and Eq.~(\ref{eq.term1}) into Eq.~(\ref{eq.error}), we get that the random error is bounded by a constant related with $c, V_{th}, \tau_m, \gamma$. This leads to Proposition~\ref{pro1sup}.

\newtheorem{prosup}{\bf Proposition}
\begin{prosup}\label{pro1sup}
If the weighted average inputs converge to an equilibrium point $\mathbf{\hat{x}}(t)\rightarrow \mathbf{x^*}$, and there exists constant $c$ and $\gamma<1$ such that $\vert\mathbf{u}^+_i(t)\vert\leq c,\forall i,t$ and $\lVert \mathbf{W} \rVert_2 \leq \gamma V_{th}$, then the weighted average firing rates $\mathbf{\hat{a}}(t)$ of FSNN with continuous LIF model gradually approximate an equilibrium point $mathbf{a^*}$ with bounded random errors, which satisfies $\mathbf{a^*} = \text{ReLU}\left(\frac{1}{V_{th}}\left(\mathbf{W}\mathbf{a^*}+\mathbf{F}\mathbf{x^*}+\mathbf{b}\right)\right)$.
\end{prosup}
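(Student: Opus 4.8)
The plan is to turn the chain of estimates in the derivation above into a rigorous argument with explicit constants. First I would integrate the LIF membrane equation against the response kernel $\kappa(\tau)=\exp(-\tau/\tau_m)$ to obtain the closed form for $\mathbf{u}(t)$, divide through by $\int_0^t\kappa(\tau)\,\mathrm{d}\tau$, and rearrange to reach the recursion for the weighted average firing rate $\mathbf{\hat{a}}(t)$ in terms of $\mathbf{\hat{a}}(t-\Delta t_d)$, $\mathbf{\hat{x}}(t)$, and $\mathbf{u}(t)$. Splitting $\mathbf{u}_i(t)=\mathbf{u}^-_i(t)+\mathbf{u}^+_i(t)$ into the accumulated negative part (which is exactly what produces the $\text{ReLU}$ clipping) and the bounded positive remainder then gives a perturbed fixed-point recursion. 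The crucial structural point, distinguishing this from the IF case, is that the normalizer $\int_0^t\kappa(\tau)\,\mathrm{d}\tau=\tau_m(1-e^{-t/\tau_m})\to\tau_m$ stays bounded, so the term $\mathbf{u}^+(t)\big/\int_0^t\kappa(\tau)\,\mathrm{d}\tau$ does not vanish but is controlled in norm by a constant multiple of $c/\tau_m$; this is precisely the ``bounded random error'' that prevents exact convergence.

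Next I would record existence and uniqueness of the equilibrium: the map $\mathbf{a}\mapsto\text{ReLU}\!\left(\frac{1}{V_{th}}(\mathbf{W}\mathbf{a}+\mathbf{F}\mathbf{x^*}+\mathbf{b})\right)$ is the composition of the $1$-Lipschitz $\text{ReLU}$ with a linear map of operator norm $\lVert\mathbf{W}\rVert_2/V_{th}\le\gamma<1$, hence a $\gamma$-contraction, so $\mathbf{a^*}$ exists and is unique by Banach's theorem. The quantity to control is then $e(t)=\bigl\lVert\text{ReLU}\bigl(\tfrac{1}{V_{th}}(\mathbf{W}\mathbf{\hat{a}}(t)+\mathbf{F}\mathbf{x^*}+\mathbf{b})\bigr)-\mathbf{\hat{a}}(t)\bigr\rVert$; since $\mathbf{\hat{a}}(t)$ is itself the output of the perturbed recursion, $e(t)$ equals the gap between the ideal map evaluated at $\mathbf{\hat{a}}(t)$ and the perturbed map evaluated at $\mathbf{\hat{a}}(t-\Delta t_d)$, which (after adding and subtracting the ideal map at $\mathbf{\hat{a}}(t-\Delta t_d)$) splits into $\gamma\lVert\mathbf{\hat{a}}(t)-\mathbf{\hat{a}}(t-\Delta t_d)\rVert$ plus perturbation terms — the ones that vanish (the ratio $\frac{\int_{t-\Delta t_d}^t\kappa(\tau)\,\mathrm{d}\tau}{\int_0^t\kappa(\tau)\,\mathrm{d}\tau}\to0$ and $\lVert\mathbf{F}(\mathbf{\hat{x}}(t)-\mathbf{x^*})\rVert\to0$) and the one of size $O(c/\tau_m)$ that persists.

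The step I expect to be the main obstacle is bounding the drift $d(t):=\lVert\mathbf{\hat{a}}(t)-\mathbf{\hat{a}}(t-\Delta t_d)\rVert$, because $\mathbf{\hat{a}}(t)$ is not Cauchy — it only approximates the equilibrium — so one cannot invoke convergence to make successive iterates close. Instead I would apply the same contraction-plus-perturbation decomposition one step earlier to obtain an inequality $d(t)\le\gamma\,d(t-\Delta t_d)+c'$ valid for all $t$ beyond some $M$, using boundedness of $\mathbf{u}^+$, $\mathbf{\hat{x}}(t)\to\mathbf{x^*}$, and the same vanishing ratio; iterating this along the progression $t,\,t-\Delta t_d,\,t-2\Delta t_d,\dots$ gives $\limsup_t d(t)\le c'/(1-\gamma)$. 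A minor prerequisite is that $\lVert\mathbf{\hat{a}}(t)\rVert$ itself be bounded, which follows from the same recursion since $\gamma<1$. Substituting the bound on $d(t)$ back yields $e(t)\le C$ for large $t$ with $C=C(c,V_{th},\tau_m,\gamma)$, and the contraction property finally gives $\lVert\mathbf{\hat{a}}(t)-\mathbf{a^*}\rVert\le e(t)+\gamma\lVert\mathbf{\hat{a}}(t)-\mathbf{a^*}\rVert$, hence $\lVert\mathbf{\hat{a}}(t)-\mathbf{a^*}\rVert\le C/(1-\gamma)$ — the asserted approximation of the equilibrium with bounded error.
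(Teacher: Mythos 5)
Your proposal is correct and follows essentially the same route as the paper's derivation: the same kernel-weighted integration and recursion, the same $\mathbf{u}=\mathbf{u}^-+\mathbf{u}^+$ split, the same observation that the normalizer $\int_0^t\kappa(\tau)\,\mathrm{d}\tau\to\tau_m$ keeps the $\mathbf{u}^+$ term from vanishing, the same error functional $e(t)$, and the same drift bound $d(t)\le\gamma\,d(t-\Delta t_d)+c'$ iterated to $c'/(1-\gamma)$. Your two additions — invoking Banach's theorem for existence/uniqueness of $\mathbf{a^*}$ and converting the bound on $e(t)$ into a bound on $\lVert\mathbf{\hat{a}}(t)-\mathbf{a^*}\rVert$ via the contraction — are small but welcome completions of what the paper leaves implicit.
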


Although there would be random error, the remaining membrane potential of the LIF model will gradually decrease if there is no positive input, which means the random error tend to be eliminated. We can still view the FSNN with LIF model as approximately solving the fixed-point equilibrium equation.

\subsection{Discrete Perspective}

The discrete update equation of membrane potentials under LIF model is:
\begin{equation}
    \mathbf{u}[t+1] = \lambda\mathbf{u}[t] + \mathbf{W}\mathbf{s}[t] + \mathbf{F}\mathbf{x}[t] + b - V_{th}\mathbf{s}[t+1].
\end{equation}

Define the weighted average firing rate as $\mathbf{\hat{a}}[t]=\frac{\sum_{\tau=1}^t \lambda^{t-\tau}\mathbf{s}[\tau]}{\sum_{\tau=1}^t \lambda^{t-\tau}}$ and the weighted average inputs as $\mathbf{\hat{x}}[t]=\frac{\sum_{\tau=0}^t \lambda^{t-\tau}\mathbf{x}[\tau]}{\sum_{\tau=0}^t \lambda^{t-\tau}}$, then through summation and consideration of the division of $\mathbf{u}[t+1]$, we have:
\begin{equation}
    \mathbf{\hat{a}}[t+1] = \frac{1}{V_{th}}\left(\frac{\sum_{i=0}^{t-1} \lambda^i}{\sum_{i=0}^{t} \lambda^i}\mathbf{W}\mathbf{\hat{a}}[t]+\mathbf{F}\mathbf{\hat{x}}[t]+\mathbf{b}-\frac{\mathbf{u}[t+1]}{\sum_{i=0}^{t} \lambda^i}\right),
\end{equation}
\begin{equation}
    \mathbf{\hat{a}}[t+1] = \sigma\left(\frac{1}{V_{th}}\left(\frac{\sum_{i=0}^{t-1} \lambda^i}{\sum_{i=0}^{t} \lambda^i}\mathbf{W}\mathbf{\hat{a}}[t]+\mathbf{F}\mathbf{\hat{x}}[t]+\mathbf{b}\right)\right)-\frac{1}{V_{th}}\frac{\mathbf{u}^+[t+1]}{\sum_{i=0}^{t} \lambda^i},
\end{equation}
where $\sigma(x)=\left\{\begin{aligned}
    1,\quad &x>1\\
    x,\quad &0\leq x\leq1\\
    0,\quad &x<0\\
\end{aligned}\right.$.

Proposition~\ref{pro2sup} is similarly derived as Proposition~\ref{pro1sup} by substituting the ReLU function with $\sigma$. We omit repetitive details here.

\begin{prosup}\label{pro2sup}
If the weighted average inputs converge to an equilibrium point $\mathbf{\hat{x}}[t]\rightarrow \mathbf{x^*}$, and there exists constant $c$ and $\gamma<1$ such that $\vert\mathbf{u}^+_i[t]\vert\leq c,\forall i,t$ and $\lVert \mathbf{W} \rVert_2 \leq \gamma V_{th}$, then the weighted average firing rates $\mathbf{\hat{a}}[t]$ of FSNN with discrete LIF model gradually approximate an equilibrium point $\mathbf{a^*}$ with bounded random errors, which satisfies $\mathbf{a^*} = \sigma\left(\frac{1}{V_{th}}\left(\mathbf{W}\mathbf{a^*}+\mathbf{F}\mathbf{x^*}+\mathbf{b}\right)\right)$.
\end{prosup}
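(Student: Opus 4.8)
The plan is to follow the argument for Proposition~\ref{pro1sup} almost verbatim, with the clamping map $\sigma$ in place of $\text{ReLU}$. The only property of the nonlinearity that is actually used is that it is $1$-Lipschitz, and $\sigma=\min(\cdot,1)\circ\max(\cdot,0)$ is indeed $1$-Lipschitz, so composing it with $\tfrac{1}{V_{th}}\mathbf{W}$ (whose spectral norm is $\le\gamma$) yields a $\gamma$-contraction. First I would record the elementary identities $\sum_{i=0}^{t}\lambda^i=\frac{1-\lambda^{t+1}}{1-\lambda}\to\frac{1}{1-\lambda}$ and $1-\frac{\sum_{i=0}^{t-1}\lambda^i}{\sum_{i=0}^{t}\lambda^i}=\frac{\lambda^{t}}{\sum_{i=0}^{t}\lambda^i}\to 0$. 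The structural point --- exactly as in the continuous LIF case --- is that the denominator $\sum_{i=0}^{t}\lambda^i$ stays bounded, so unlike the IF case the term $\frac{\mathbf{u}^+[t+1]}{\sum_{i=0}^{t}\lambda^i}$ is \emph{not} annihilated as $t\to\infty$; this is the source of the irreducible ``random error.''

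Starting from the update equation for $\mathbf{\hat{a}}[t+1]$ already derived in this appendix, I would define the residual $e[t]=\left\lVert\sigma\!\left(\frac{1}{V_{th}}(\mathbf{W}\mathbf{\hat{a}}[t]+\mathbf{F}\mathbf{x^*}+\mathbf{b})\right)-\mathbf{\hat{a}}[t]\right\rVert$ and bound it by writing $\mathbf{\hat{a}}[t]$ via its own update equation and subtracting. Using $1$-Lipschitzness of $\sigma$ and $\lVert\mathbf{W}\rVert_2\le\gamma V_{th}$, this controls $e[t]$ by $\left\lVert\tfrac{1}{V_{th}}\mathbf{W}(\mathbf{\hat{a}}[t]-\mathbf{\hat{a}}[t-1])\right\rVert$ plus three perturbation terms (up to a one-step index shift): the coefficient-gap term $\left\lVert\tfrac{\lambda^{t}}{V_{th}\sum_{i=0}^{t}\lambda^i}\mathbf{W}\mathbf{\hat{a}}[t-1]\right\rVert$, the input-convergence term $\tfrac{1}{V_{th}}\lVert\mathbf{F}(\mathbf{\hat{x}}[t]-\mathbf{x^*})\rVert$, and the error term $\tfrac{1}{V_{th}}\left\lVert\tfrac{\mathbf{u}^+[t+1]}{\sum_{i=0}^{t}\lambda^i}\right\rVert$. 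Since $\sigma$ forces $\mathbf{\hat{a}}[t]\in[0,1]^n$ componentwise, $\mathbf{\hat{a}}[t]$ is automatically norm-bounded; the first perturbation term therefore $\to0$, the second $\to0$ by $\mathbf{\hat{x}}[t]\to\mathbf{x^*}$, and the third is bounded by a constant multiple of $c$ for all large $t$.

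The crux is to show $\lVert\mathbf{\hat{a}}[t]-\mathbf{\hat{a}}[t-1]\rVert$ stays bounded. I would subtract the update equation at step $t$ from that at step $t-1$, insert and remove the idealized contraction $\sigma\!\left(\frac{1}{V_{th}}(\mathbf{W}\,\cdot\,+\mathbf{F}\mathbf{x^*}+\mathbf{b})\right)$ applied at $\mathbf{\hat{a}}[t-1]$ and at $\mathbf{\hat{a}}[t-2]$, and use the $\gamma$-contraction to get, for all $t$ past some $M$, a recursion $\lVert\mathbf{\hat{a}}[t]-\mathbf{\hat{a}}[t-1]\rVert\le\gamma\lVert\mathbf{\hat{a}}[t-1]-\mathbf{\hat{a}}[t-2]\rVert+c'$, where $c'$ absorbs the now uniformly small coefficient-gap and input-convergence terms together with the bounded $\mathbf{u}^+$ contribution. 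Iterating gives $\limsup_t\lVert\mathbf{\hat{a}}[t]-\mathbf{\hat{a}}[t-1]\rVert\le c'/(1-\gamma)$. Feeding this back into the bound for $e[t]$ yields $\limsup_t e[t]\le$ a constant depending only on $c,V_{th},\tau_m,\gamma$, which is exactly the assertion that $\mathbf{\hat{a}}[t]$ approximates the equilibrium with bounded random error; the equilibrium $\mathbf{a^*}$ solving $\mathbf{a^*}=\sigma\!\left(\frac{1}{V_{th}}(\mathbf{W}\mathbf{a^*}+\mathbf{F}\mathbf{x^*}+\mathbf{b})\right)$ exists and is unique by the Banach fixed-point theorem, since that map is a $\gamma$-contraction. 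The main obstacle I anticipate is purely bookkeeping: keeping the time-varying geometric-series coefficients, the one-step feedback delay, and the two nested add-and-subtract steps aligned so the clean $\gamma$-contraction is exposed --- conceptually the proof is identical to that of Proposition~\ref{pro1sup}, with sums $\sum\lambda^i$ replacing integrals $\int\kappa$.
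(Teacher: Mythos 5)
Your proposal matches the paper's own treatment: the paper derives the discrete LIF update equation with the geometric-series coefficients and then states that the proposition follows exactly as in the continuous LIF case (Proposition~\ref{pro1sup}) by substituting $\sigma$ for ReLU, which is precisely the residual-bound plus bounded-increment recursion $\lVert\mathbf{\hat{a}}[t]-\mathbf{\hat{a}}[t-1]\rVert\le\gamma\lVert\mathbf{\hat{a}}[t-1]-\mathbf{\hat{a}}[t-2]\rVert+c'$ argument you lay out. Your additional remarks (1-Lipschitzness of $\sigma$, boundedness of $\mathbf{\hat{a}}[t]$ from the clamping, and uniqueness of $\mathbf{a^*}$ via the Banach fixed-point theorem) are correct and consistent with the paper's reasoning.
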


\subsection{Multi-layer Structure}

When the multi-layer structure is considered, the discrete update equation of membrane potentials are expressed as:
\begin{equation}
    \left\{
    \begin{aligned}
        &\mathbf{u}^1[t + 1] = \lambda \mathbf{u}^1[t] + \mathbf{W}^1\mathbf{s}^N[t] + \mathbf{F}^1\mathbf{x}[t]+\mathbf{b}^1-V_{th}\mathbf{s}^1[t+1],\\
        &\mathbf{u}^{l+1}[t + 1] = \lambda \mathbf{u}^{l+1}[t] + \mathbf{F}^{l+1}\mathbf{s}^l[t+1]+\mathbf{b}^{l+1}-V_{th}\mathbf{s}^{l+1}[t+1],\quad l=1,2,\cdots, N-1\\
    \end{aligned}
    \right.
    \label{eq.multilayersup}
\end{equation}

Define the weighted average firing rates of different layers as $\mathbf{\hat{a}}^l[t]=\frac{\sum_{\tau=1}^t \lambda^{t-\tau}\mathbf{s}^l[\tau]}{\sum_{\tau=1}^t \lambda^{t-\tau}}$ and the weighted average inputs as $\mathbf{\hat{x}}[t]=\frac{\sum_{\tau=0}^t \lambda^{t-\tau}\mathbf{x}[\tau]}{\sum_{\tau=0}^t \lambda^{t-\tau}}$, then through summation and consideration of the division of $\mathbf{u}^l[t+1]$, we have:
\begin{equation}
\left\{
\begin{aligned}
    &\mathbf{\hat{a}}^1[t+1] = \sigma\left(\frac{1}{V_{th}}\left(\frac{\sum_{i=0}^{t-1} \lambda^i}{\sum_{i=0}^{t} \lambda^i}\mathbf{W}^1\mathbf{\hat{a}}^N[t]+\mathbf{F}^1\mathbf{\hat{x}}[t]+\mathbf{b}^1-\frac{{\mathbf{u}^1}^+[t+1]}{\sum_{i=0}^{t} \lambda^i}\right)\right),\\
    &\mathbf{\hat{a}}^{l+1}[t+1] = \sigma\left(\frac{1}{V_{th}}\left(\mathbf{F}^{l+1}\mathbf{\hat{a}}^{l}[t+1]+\mathbf{b}^{l+1}-\frac{{\mathbf{u}^{l+1}}^+[t+1]}{\sum_{i=0}^{t} \lambda^i}\right)\right),\quad l=1,\cdots,N-1.\\
\end{aligned}
\right.
\end{equation}

With similar techniques in the proof of Theorem~\ref{thm3sup} and Proposition~\ref{pro1sup}, we can derive the Proposition~\ref{pro3} as the following. We omit repetitive details here.

\begin{prosup}\label{pro3}
If the weighted average inputs converge to an equilibrium point $\mathbf{\hat{x}}[t]\rightarrow \mathbf{x^*}$, and there exists constant $c$ and $\gamma<1$ such that $\vert{\mathbf{u}_i^l}^+[t]\vert\leq c,\forall i,l,t$ and $\lVert \mathbf{W}^1\rVert_2\lVert \mathbf{F}^N\rVert_2\cdots\lVert \mathbf{F}^2\rVert_2\leq \gamma V_{th}^N$, then the weighted average firing rates $\mathbf{\hat{a}}^l[t]$ of multi-layer FSNN with discrete LIF model gradually approximate equilibrium points ${\mathbf{a}^l}^*$ with bounded random errors, which satisfy  ${\mathbf{a}^1}^* = f_1\left(f_N\circ\cdots\circ f_2({\mathbf{a}^1}^*), \mathbf{x^*}\right)$ and ${\mathbf{a}^{l+1}}^*=f_{l+1}({\mathbf{a}^l}^*)$, where $f_1(\mathbf{a}, \mathbf{x})=\sigma\left(\frac{1}{V_{th}}(\mathbf{W}^1\mathbf{a}+\mathbf{F}^1\mathbf{x}+\mathbf{b}^1)\right), f_{l+1}(\mathbf{a}) = \sigma\left(\frac{1}{V_{th}}(\mathbf{F}^{l+1}\mathbf{a}+\mathbf{b}^{l+1})\right)$.
\end{prosup}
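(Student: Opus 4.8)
The plan is to reuse the contraction-plus-perturbation machinery of the proof of Theorem~\ref{thm3sup} (the multi-layer discrete IF case) while importing the bounded-random-error bookkeeping from Proposition~\ref{pro1sup} (the continuous LIF case). The one structural change is that for the LIF model the relevant denominators are $\sum_{i=0}^{t}\lambda^i=\frac{1-\lambda^{t+1}}{1-\lambda}\to\frac{1}{1-\lambda}$, a finite constant, rather than $t\to\infty$; hence the correction terms $\frac{{\mathbf{u}^{l}}^{+}[t+1]}{\sum_{i=0}^{t}\lambda^i}$ stay of size $O(c)$ instead of vanishing, which is exactly what turns ``convergence to ${\mathbf{a}^{l}}^{*}$'' into ``approximation of ${\mathbf{a}^{l}}^{*}$ with bounded error''. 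Note also that in the discrete view each $\mathbf{\hat{a}}^{l}[t]$ is a convex combination of binary spikes and $\sigma$ maps into $[0,1]$, so the a priori boundedness step needed in Theorem~\ref{thm1sup} is here automatic.

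First I would compose the first-layer recursion from Appendix~E.3: writing $f_{1}^{t+1}$ and $f_{l+1}^{t}$ for the time-dependent versions of $f_{1}$ and $f_{l+1}$ that carry the coefficient $\frac{\sum_{i=0}^{t-1}\lambda^i}{\sum_{i=0}^{t}\lambda^i}$ on the $\mathbf{W}^{1}$ term and the $-\frac{{\mathbf{u}}^{+}}{\sum_{i=0}^{t}\lambda^i}$ offset, we have $\mathbf{\hat{a}}^{1}[t+1]=f_{1}^{t+1}\!\big(f_{N}^{t}\circ\cdots\circ f_{2}^{t}(\mathbf{\hat{a}}^{1}[t]),\mathbf{\hat{x}}[t]\big)$. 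As in the proof of Theorem~\ref{thm3sup} I would bound $\lVert\mathbf{\hat{a}}^{1}[t+1]-\mathbf{\hat{a}}^{1}[t]\rVert$ by (i) the idealized term $\lVert f_{1}(f_{N}\circ\cdots\circ f_{2}(\mathbf{\hat{a}}^{1}[t]),\mathbf{x^*})-f_{1}(f_{N}\circ\cdots\circ f_{2}(\mathbf{\hat{a}}^{1}[t-1]),\mathbf{x^*})\rVert\le\gamma\lVert\mathbf{\hat{a}}^{1}[t]-\mathbf{\hat{a}}^{1}[t-1]\rVert$, using $1$-Lipschitzness of $\sigma$ together with $\lVert\mathbf{W}^{1}\rVert_{2}\lVert\mathbf{F}^{N}\rVert_{2}\cdots\lVert\mathbf{F}^{2}\rVert_{2}\le\gamma V_{th}^{N}$; plus (ii) two mismatch blocks comparing $f^{t}$ (resp.\ $f^{t-1}$) to $f$, each of which telescopes through the $N$ layers into a sum of the form $\frac{1}{V_{th}}\lVert\mathbf{W}^{1}\tfrac{{\mathbf{u}^{N}}^{+}}{\sum\lambda^i}\rVert+\cdots+\frac{1}{V_{th}^{N-1}}\lVert\mathbf{W}^{1}\mathbf{F}^{N}\cdots\mathbf{F}^{3}\tfrac{{\mathbf{u}^{2}}^{+}}{\sum\lambda^i}\rVert$ together with $\lVert\mathbf{F}^{1}(\mathbf{\hat{x}}[t]-\mathbf{x^*})\rVert$, $\lVert\tfrac{{\mathbf{u}^{1}}^{+}[t+1]}{\sum\lambda^i}\rVert$, and the coefficient-gap term $\big(1-\tfrac{\sum_{i=0}^{t-1}\lambda^i}{\sum_{i=0}^{t}\lambda^i}\big)\lVert\mathbf{W}^{1}\mathbf{\hat{a}}^{N}[t]\rVert$. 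The coefficient gap equals $\tfrac{\lambda^{t}(1-\lambda)}{1-\lambda^{t+1}}\to0$ and $\lVert\mathbf{\hat{x}}[t]-\mathbf{x^*}\rVert\to0$, while the $\mathbf{u}^{+}$ terms stay $O(c)$; hence for all large $t$ one gets a recursion $\lVert\mathbf{\hat{a}}^{1}[t+1]-\mathbf{\hat{a}}^{1}[t]\rVert\le\gamma\lVert\mathbf{\hat{a}}^{1}[t]-\mathbf{\hat{a}}^{1}[t-1]\rVert+c'$ with $c'$ a constant in $c,V_{th},\lambda,\gamma$ and the weight norms, so $\limsup_{t}\lVert\mathbf{\hat{a}}^{1}[t+1]-\mathbf{\hat{a}}^{1}[t]\rVert\le\tfrac{c'}{1-\gamma}$, exactly as in Proposition~\ref{pro1sup}.

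Next, to quantify the residual error on the fixed-point equation itself, I would set $e(t)=\lVert f_{1}(f_{N}\circ\cdots\circ f_{2}(\mathbf{\hat{a}}^{1}[t]),\mathbf{x^*})-\mathbf{\hat{a}}^{1}[t]\rVert$, rewrite $\mathbf{\hat{a}}^{1}[t]=f_{1}^{t}\!\big(f_{N}^{t-1}\circ\cdots\circ f_{2}^{t-1}(\mathbf{\hat{a}}^{1}[t-1]),\mathbf{\hat{x}}[t-1]\big)$, and bound $e(t)$ by $\gamma\lVert\mathbf{\hat{a}}^{1}[t]-\mathbf{\hat{a}}^{1}[t-1]\rVert$ plus the same $f^{t-1}$-versus-$f$ mismatch blocks; feeding in the $\tfrac{c'}{1-\gamma}$ bound yields $\limsup_{t}e(t)\le$ a constant depending only on $c,V_{th},\lambda,\gamma$ and the weight norms, i.e. $\mathbf{\hat{a}}^{1}[t]$ satisfies ${\mathbf{a}^{1}}^{*}=f_{1}(f_{N}\circ\cdots\circ f_{2}({\mathbf{a}^{1}}^{*}),\mathbf{x^*})$ up to bounded random error. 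Finally, each later layer obeys $\mathbf{\hat{a}}^{l+1}[t+1]=\sigma\!\big(\tfrac{1}{V_{th}}(\mathbf{F}^{l+1}\mathbf{\hat{a}}^{l}[t+1]+\mathbf{b}^{l+1}-\tfrac{{\mathbf{u}^{l+1}}^{+}[t+1]}{\sum\lambda^i})\big)$, which differs from $f_{l+1}(\mathbf{\hat{a}}^{l}[t+1])$ only by the $O(c)$ perturbation $\tfrac{1}{V_{th}}\lVert\tfrac{{\mathbf{u}^{l+1}}^{+}}{\sum\lambda^i}\rVert$, so pushing the first-layer error forward through the $\tfrac{1}{V_{th}}\lVert\mathbf{F}^{l+1}\rVert_{2}$-Lipschitz maps $f_{2},\dots,f_{N}$ gives bounded random errors for every ${\mathbf{a}^{l}}^{*}$, which satisfies ${\mathbf{a}^{l+1}}^{*}=f_{l+1}({\mathbf{a}^{l}}^{*})$. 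The main obstacle I anticipate is the step (ii) bookkeeping: carrying the $N$-fold telescoping of the mismatch blocks (the analogues of the terms $A$ and $B$ in the proof of Theorem~\ref{thm3sup}) while keeping straight which factors must be sent to $0$ — the coefficient gap and $\lVert\mathbf{\hat{x}}[t]-\mathbf{x^*}\rVert$ — and which must be retained as nonzero $O(c)$ constants — the $\tfrac{{\mathbf{u}^{l}}^{+}}{\sum_{i=0}^{t}\lambda^i}$ terms; it is precisely this asymmetry (denominator $\to\tfrac{1}{1-\lambda}$, not $\to\infty$) that separates Proposition~\ref{pro3} from Theorem~\ref{thm3sup}.
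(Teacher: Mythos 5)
Your proposal is correct and follows exactly the route the paper itself indicates for Proposition~\ref{pro3}, whose proof is omitted there as "similar techniques in the proof of Theorem~\ref{thm3sup} and Proposition~\ref{pro1sup}": the $N$-fold telescoping contraction from the multi-layer IF case combined with the LIF-specific observation that the denominators $\sum_{i=0}^{t}\lambda^i\to\frac{1}{1-\lambda}$ stay finite, so the ${\mathbf{u}^{l}}^{+}$ corrections persist as $O(c)$ terms and yield a bounded residual $\limsup_t e(t)\le\frac{c'}{1-\gamma}\cdot\text{const}$ rather than convergence. The details you fill in (the coefficient gap $\frac{\lambda^{t}(1-\lambda)}{1-\lambda^{t+1}}\to 0$, the automatic boundedness of $\mathbf{\hat{a}}^{l}[t]\in[0,1]^n$, and pushing the first-layer error through the Lipschitz maps $f_{l+1}$) are all consistent with the paper's argument.
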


\section{Implementation Details}

In this section, we describe the details for training our model. We will first introduce some operations in our model including restriction on the spectral norm and batch normalization, and then elaborate the training settings for the experiments.

\subsection{Restriction on Spectral Norm}

As indicated in the theorems and propositions, a sufficient condition for the convergence of FSNN is $\lVert \mathbf{W}\rVert_2\leq\gamma V_{th}$ or $\lVert \mathbf{W}^1\rVert_2\lVert \mathbf{F}^N\rVert_2\cdots\lVert \mathbf{F}^2\rVert_2\leq \gamma V_{th}^N$, where $\gamma<1$. To ensure the convergence of the forward SNN computation and stabilize training, we propose to restrict the spectral norm of the feedback connection weight matrix. Specifically, we re-parameterize $\mathbf{W}$ as:
\begin{equation}
    \mathbf{W} = \alpha \frac{\mathbf{W}}{\lVert \mathbf{W}\rVert_2},
\end{equation}
where $\alpha$ is a learnable parameter and will be clipped in the range of $[-c, c]$ ($c$ is a constant), and the spectral norm $\lVert \mathbf{W}\rVert_2$ is similarly computed as the implementation of Spectral Normalization~\cite{miyato2018spectral}. In experiments, we will set $V_{th}=2$ and $c=1$, and for the multi-layer structure, we only restrict the spectral norm of feedback connection weight $\mathbf{W}^1$. It works well in practice and the convergence is illustrated in Section 5.4 and Section \ref{appsec:convergence}.

\subsection{Batch Normalization}

Batch normalization (BN)~\cite{ioffe2015batch} is a commonly adopted technique in ANNs, which accelerates the training by reducing the internal covariate shift and improves performance as well. For a $d$-dimensional data $x=\left(x^{(1)}\cdots x^{(d)}\right)$, BN normalizes and transforms the data as:
\begin{equation}
    \hat{x}^{(k)} = \gamma^{(k)}\frac{x^{(k)}-\mathrm{E}[x^{(k)}]}{\sqrt{\mathrm{Var}[x^{(k)}]}}+\beta^{(k)},
\end{equation}
where $\mathrm{E}[x^{(k)}]$ and $\mathrm{Var}[x^{(k)}]$ are statistics over the training data set, and $\gamma^{(k)}, \beta^{(k)}$ are learnable parameters.

Note that when the statistics are fixed, BN is a simple linear transformation, and BN after a linear layer can be absorbed into the parameters of this layer. For example, for the linear operation $y=Wx+b$ (suppose $y$ is one-dimensional for simplicity), let $e, v$ denote the expectation and variance of $y$, then $\hat{y}=\text{BN}(y)$ is equivalent as a new linear operation $\hat{y}=\widetilde{W}x+\widetilde{b}$, where $\widetilde{W}=\frac{\gamma}{\sqrt{v}}W, \widetilde{b}=b-\frac{\gamma e}{v}+\beta$. Therefore, adding BN with fixed statistics after a convolution or fully-connected layer will not influence the properties of SNNs and the conclusions for equilibrium convergence.

We add BN after each linear operation except the feedback layer, in the context of the fixed-point equilibrium equation. For example, for the single-layer FSNN whose equation is $\mathbf{a^*} = \text{ReLU}\left(\frac{1}{V_{th}}\left(\mathbf{W}\mathbf{a^*}+\mathbf{F}\mathbf{x^*}+\mathbf{b}\right)\right)$, we add BN after $\mathbf{F}\mathbf{x^*}$; and for the multi-layer FSNN, we add BN after $\mathbf{F}^1\mathbf{x^*}$ and $\mathbf{F}^{l+1}\mathbf{a}^{l^*}$.

During forward SNN computation, the statistics of BN operations are fixed, i.e. we set BN into the 'eval' mode which uses the previously calculated statistics; and during backward gradient calculation, since it is decoupled from the forward computation (that means we will construct an additional computational graph for it), we can follow the common setting of BN to leverage the mini-batch estimated statistics and the overall statistics are updated, i.e. we set BN into the 'train' mode in this computational graph. The statistics are for the (weighted) average inputs or firing rates. Since the estimation of statistics for the forward SNN computation may be inaccurate in the first several iterations, we will use a warmup for the learning rate to alleviate this problem.

\subsection{Training Settings}

\subsubsection{Datasets}

We conduct experiments on MNIST~\cite{lecun1998gradient}, Fashion-MNIST~\cite{xiao2017fashion}, N-MNIST~\cite{orchard2015converting}, CIFAR-10 and CIFAR-100~\cite{krizhevsky2009learning}.

\paragraph{MNIST} MNIST is a dataset of handwritten digits with 10 classes, which is composed of 60,000 training samples and 10,000 testing samples. Each sample is a $28\times28$ grayscale image. We normalize the inputs based on the global mean and standard deviation, and convert the pixel value into a real-valued input current at every time step. No data augmentation is applied.

The licence of MNIST is the MIT License. The MNIST database is constructed from NIST's Special Database 3 and Special Database 1 which contain binary images of handwritten digits~\cite{lecun1998gradient}. The data does not contain personally identifiable information or offensive content since it only consists of handwritten digits.

\paragraph{Fashion-MNIST} Fashion-MNIST is a dataset similar to MNIST and contains $28\times28$ grayscale images of clothing items. We use the same preprocessing as MNIST.

The licence of Fashion-MNIST is the MIT License. The data of Fahion-MNIST is collected from the photographs of fashion products on the assortment on Zalando's website~\cite{lecun1998gradient}. The data does not contain personally identifiable information or offensive content since it only consists of 10 kinds of fashion products.

\paragraph{N-MNIST} N-MNIST is a neuromorphic dataset that is converted from MNIST by a Dynamic Version Sensor (DVS). It consists of spike trains triggered by the intensity change of pixels when DVS scans the static MNIST images along given directions. Since the intensity can either increase or decrease, there are two channels corresponding to ON- and OFF-event spikes. And the pixel dimension is expanded to $34\times34$ due to the relative shift of images. Therefore, each sample is a spike train pattern with the size of $34\times34\times2\times T$, where $T$ is the temporal length. The original data record $300ms$ with the resolution of $1\mu s$. We follow the prepossessing of \cite{zhang2020temporal} to reduce the time resolution by accumulating the spike train within every $3ms$, and we will use the first 30 time steps.

The license of N-MNIST is the Creative Commons Attribution-ShareAlike 4.0 license. The data is converted from MNIST and does not contain personally identifiable information or offensive content.

\paragraph{CIFAR-10} CIFAR-10 is a dataset of color images with 10 classes of objects, which is composed of 50,000 training samples and 10,000 testing samples. Each sample is a $32\times32\times3$ color image. We normalize the inputs based on the global mean and standard deviation, and apply random cropping and horizontal flipping for data augmentation. The input pixel value is converted to a real-valued input current at every time step as well.

\paragraph{CIFAR-100} CIFAR-100 is a dataset similar to CIFAR-10 except that there are 100 classes of objects. It also consists of 50,000 training samples and 10,000 testing samples. We use the same preprocessing as CIFAR-10.

The license of CIFAR-10 and CIFAR-100 is the MIT License. The data are labeled subsets of the 80 million tiny images datasets (collected from the web), which are labeled by students~\cite{krizhevsky2009learning}. The data does not contain personally identifiable information or offensive content, which is checked by the classes and image samples.

\subsubsection{Training Hyperparameters}

For all our SNN models, we set $V_{th}=2$. For the LIF model, we set $\lambda=0.95$ for MNIST, Fashion-MNIST and N-MNIST, while $\lambda=0.99$ for CIFAR-10 and CIFAR-100.

We train all our models by SGD with momentum for 100 epochs. We set the momentum as 0.9, the batch size as 128, and the initial learning rate as 0.05. For MNIST, Fashion-MNIST, and N-MNIST, the learning rate is decayed by 0.1 every 30 epochs, while for CIFAR-10 and CIFAR-100, it is decayed by 0.1 at the 50th and 75th epoch. We also apply linear warmup for the learning rate in the first 400 iterations for CIFAR-10 and CIFAR-100. We set the weight decay as $5\times 10^{-4}$, and apply the variational dropout as in \cite{bai2019deep,bai2020multiscale} with dropout rate as 0.2. For MNIST, Fashion-MNIST, CIFAR-10, and CIFAR-100, we solve the implicit differentiation by the Broyden's method proposed in \cite{bai2020multiscale} with the threshold as 30. For N-MNIST, we solve the implicit differentiation by the fixed-point update scheme indicated in Section 3.2 for 30 iterations, and the update scheme is modified as $x^T=\frac{1}{2}\left(x^T + (J_{f_{\theta}}^T\vert_{a^*})x^T+\left(\frac{\partial \mathcal{L}(a^*)}{\partial a^*}\right)^T\right)$ for acceleration. The initialization of parameters follows~\cite{wu2018spatio}, which first samples the weight parameters from the standard uniform distribution and then normalize them for each output dimension. All experiments are repeated five times and we report the mean, standard deviation, and the best results.

The code implementation is based on the PyTorch framework~\cite{paszke2019pytorch}, and experiments are carried out on one NVIDIA GeForce GTX 1080 GPU or one NVIDIA GeForce RTX 3090 GPU.

\section{Additional Experiment Results}

\subsection{Comparison between IF and LIF Model on CIFAR-10 and CIFAR-100}

In this subsection, we supplement the comparison results of IF and LIF model on CIFAR-10 and CIFAR-100, as shown in Table~\ref{cifar_supp}. It shows that the LIF model has similar performance compared with the IF model, and slightly outperforms the IF model in most cases, especially when the number of time steps is small. This also accords with the results on MNIST, Fashion-MNIST, and N-MNIST. The possible reason is that the LIF model leverages temporal information of spike trains by encoding weighted average firing rates. While each spike contributes equally to the average firing rate of the IF model and thus the precision of firing rates is only $\frac{1}{T}$, the weight for a spike of the LIF model is different at time steps (the weight is $\lambda^{T-t}$), and therefore the weighted average firing rates could encode more information with the same amount of time steps. When there is a relatively small number of time steps, the convergence errors of IF and LIF model would be similar and will not significantly affect the results. So the LIF model with temporal information may perform slightly better.

\begin{table} [ht]
	\centering
	\small
	\tabcolsep=1mm
	\caption{Comparison results of IF and LIF Model on CIFAR-10 and CIFAR-100.}
	\begin{center}
	\begin{threeparttable}
	\begin{tabular}{ccccc}
	    \multicolumn{5}{c}{\textbf{CIFAR-10}}\\
		\toprule[1pt]
		Network structure & Time steps & Model & Mean$\pm$Std & Best \\
		\midrule[0.5pt]
		\multirow{2}{*}{AlexNet-F} & \multirow{2}{*}{30} & IF & 91.73\%$\pm$0.13\% & 91.85\%\\
		& & LIF & 91.74\%$\pm$0.09\% & 91.92\%\\
		\midrule[0.5pt]
		\multirow{2}{*}{AlexNet-F} & \multirow{2}{*}{100} & IF & 92.25\%$\pm$0.27\% & 92.53\%\\
		& & LIF & 92.03\%$\pm$0.07\% & 92.15\%\\
		\midrule[0.5pt]
		\multirow{2}{*}{CIFARNet-F} & \multirow{2}{*}{30} & IF & 91.94\%$\pm$0.14\% & 92.12\%\\
		& & LIF & 92.08\%$\pm$0.15\% & 92.23\%\\
		\midrule[0.5pt]
		\multirow{2}{*}{CIFARNet-F} & \multirow{2}{*}{100} & IF & 92.33\%$\pm$0.15\% & 92.57\%\\
		& & LIF & 92.52\%$\pm$0.17\% & 92.82\%\\
		\bottomrule[1pt]
	\end{tabular}
	\begin{tablenotes}
       \scriptsize
       \item[1] AlexNet-F: 96C3s-256C3-384C3s-384C3-256C3 (F96C3u)
       \item[2] CIFARNet-F: 128C3s-256C3-512C3s-1024C3-512C3 (F128C3u)
    \end{tablenotes}
    \vspace{2mm}
	\end{threeparttable}
    \begin{tabular}{ccccc}
	    \multicolumn{5}{c}{\textbf{CIFAR-100}}\\
		\toprule[1pt]
		Network structure & Time steps & Model & Mean$\pm$Std & Best\\
		\midrule[0.5pt]
		\multirow{2}{*}{CIFARNet-F} & \multirow{2}{*}{30} & IF & 71.56\%$\pm$0.31\% & 72.10\%\\
		& & LIF & 71.72\%$\pm$0.22\% & 72.03\%\\
		\midrule[0.5pt]
		\multirow{2}{*}{CIFARNet-F} & \multirow{2}{*}{100} & IF & 73.07\%$\pm$0.21\% & 73.43\%\\
		& & LIF & 72.98\%$\pm$0.13\% & 73.12\%\\
		\bottomrule[1pt]
	\end{tabular}
	\end{center}
	\label{cifar_supp}
\end{table}

\begin{figure}
    \centering
    \subfigure[MNIST: 64C5s (F64C5)]{
    \includegraphics[scale=0.4]{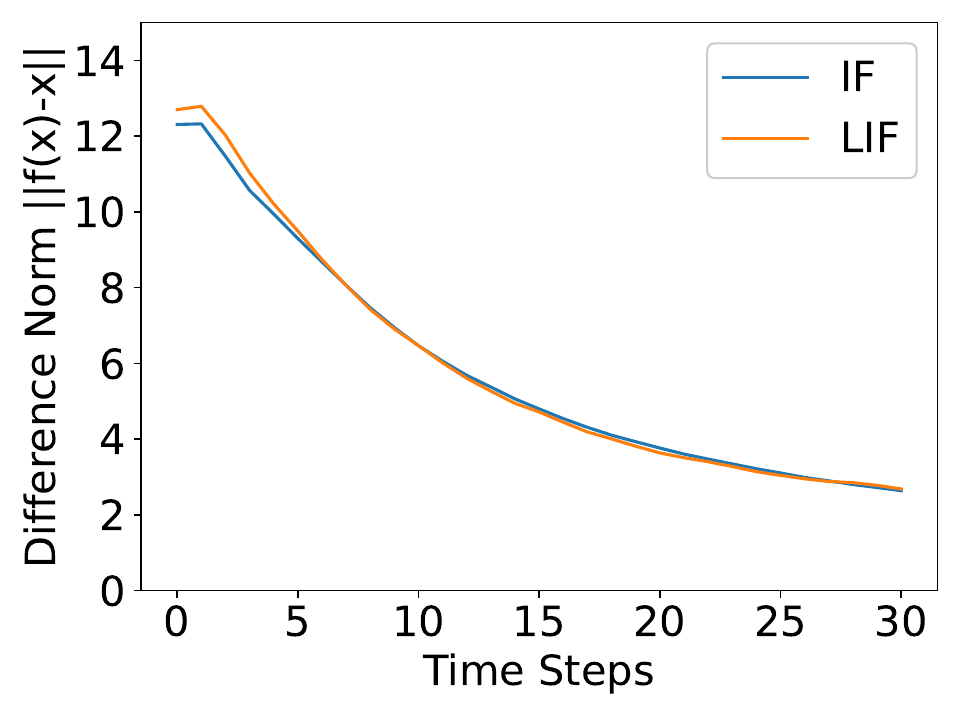}
    \label{convergence_mnistsup}
    }
    \subfigure[N-MNIST: 64C5s (F64C5)]{
    \includegraphics[scale=0.4]{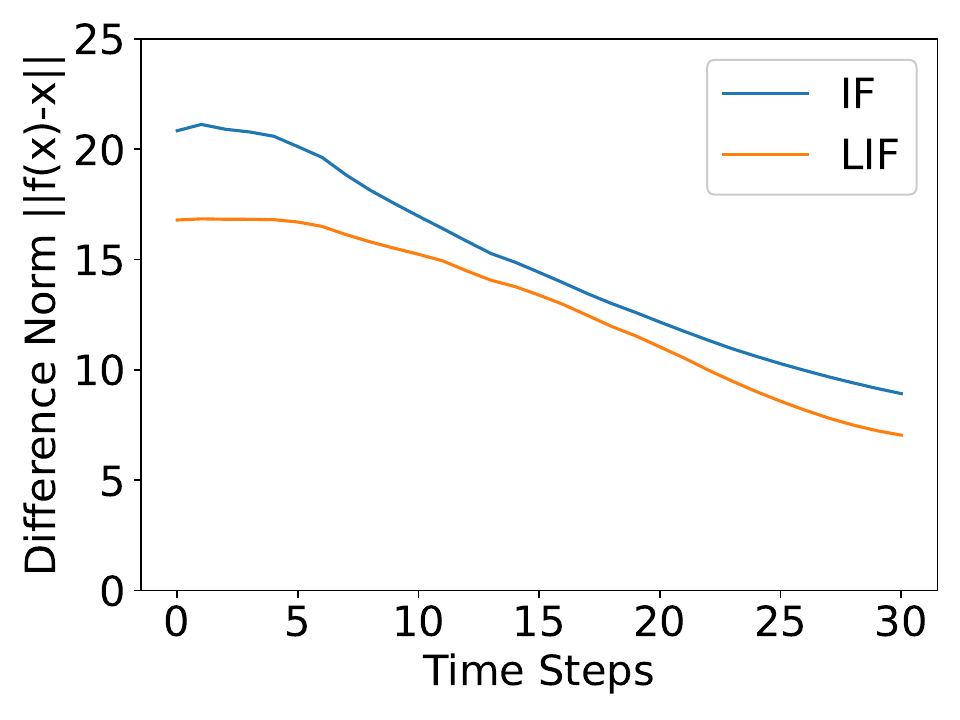}
    \label{convergence_nmnist}
    }
    \\
    \subfigure[CIFAR-10: CIFARNet-F]{
    \includegraphics[scale=0.4]{figures/cifar_large_all_100.pdf}
    \label{convergence_cifar10}
    }
    \subfigure[CIFAR-100: CIFARNet-F]{
    \includegraphics[scale=0.4]{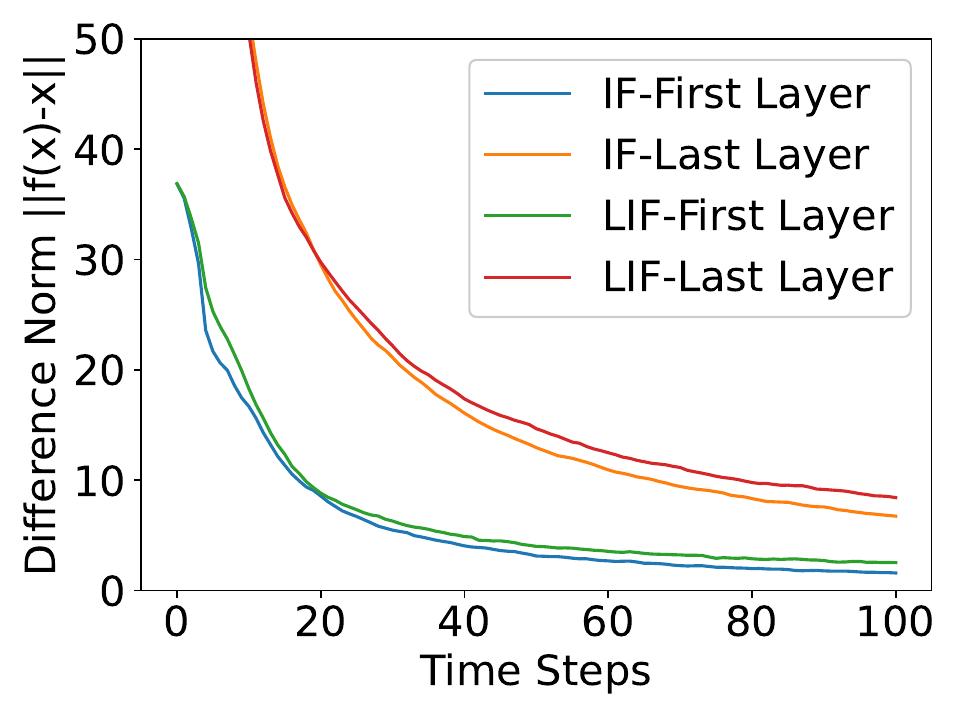}
    \label{convergence_cifar100}
    }
    \\
    \subfigure[CIFAR-10: CIFARNet-F]{
    \includegraphics[scale=0.4]{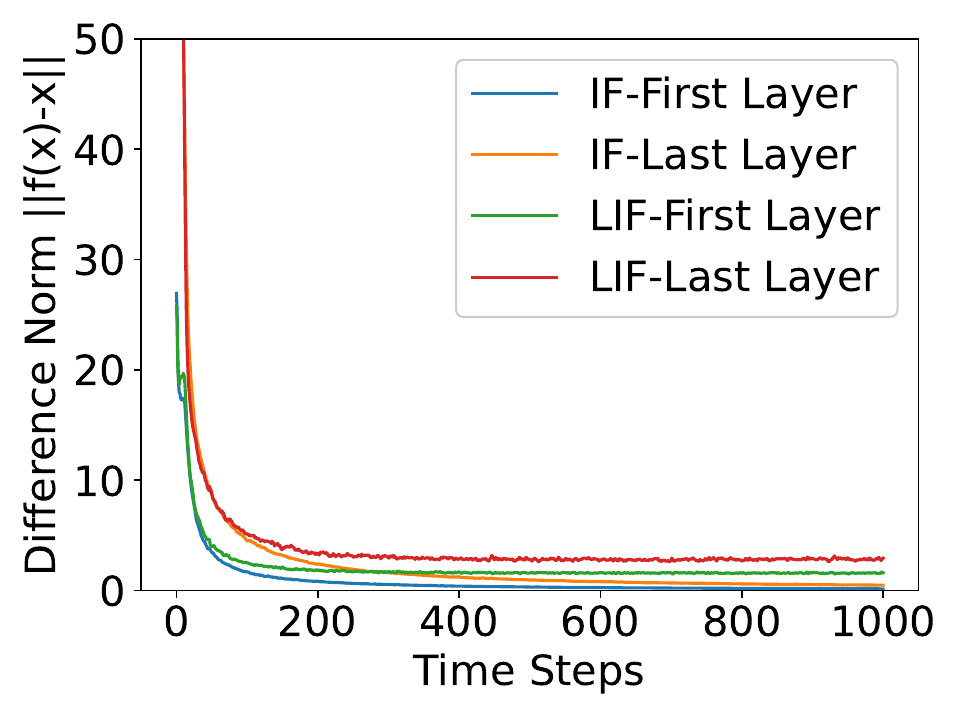}
    \label{convergence_cifar10_1000}
    }
    \subfigure[CIFAR-100: CIFARNet-F]{
    \includegraphics[scale=0.4]{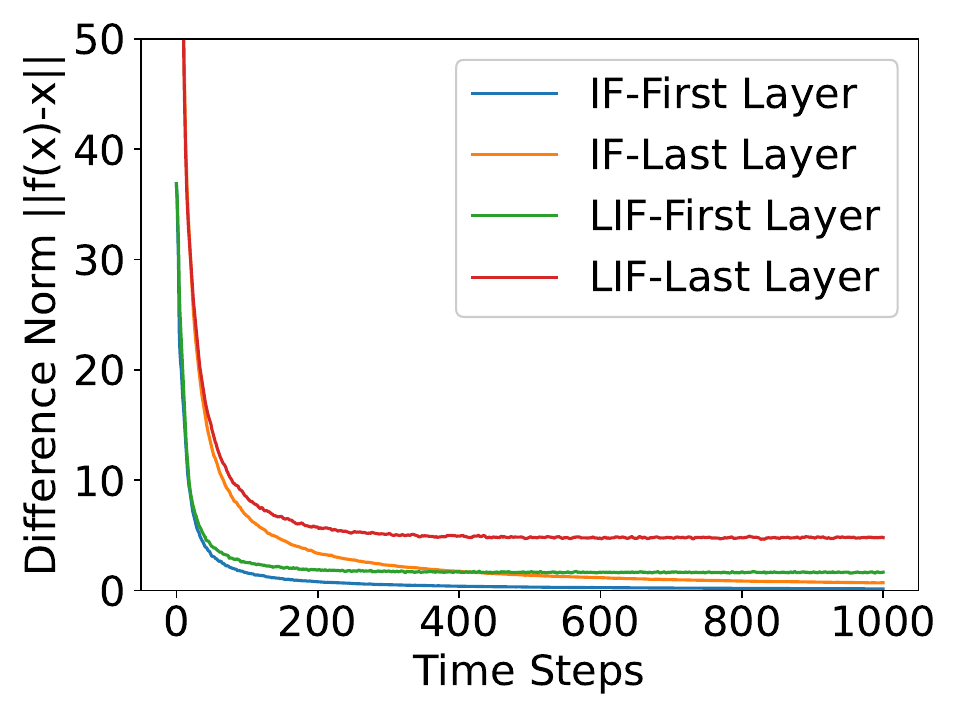}
    \label{convergence_cifar100_1000}
    }
    \caption{Convergence to the equilibrium of different models.}
    \label{fig:convergence_supp}
\end{figure}

\subsection{Convergence to Equilibrium}\label{appsec:convergence}

In this subsection, we supplement the results of convergence to the equilibrium state for more datasets and different scales of time steps. Figure~\ref{fig:convergence_supp} illustrates the convergence information on MNIST and N-MNIST with the same network structure under 30 time steps, as well as the convergence information on CIFAR-10 and CIFAR-100 with the same network structure under 100 and 1000 time steps. Since the precision of firing rates under relatively few time steps is limited, there would be errors caused by the precision. And the more the time steps are, the less the error should be. The difference norm decreases with time steps under all settings, demonstrating the convergence to the equilibrium state with the fixed-point equation. For N-MNIST, since the inputs are neuromorphic spikes rather than static images at each time step (and there lacks valid information in the first few time steps), the convergence is slower than on MNIST. Despite this, the (weighted) average firing rates do gradually approach the equilibrium as the difference norm decreases, and the training based on the implicit differentiation can work well as shown in the accuracy results. For the multi-layer structure, the convergence error of the last layer would be larger than the first layer. For the LIF model, since there would be random errors as indicated in the propositions, the convergence error would be larger than the IF model at most time. Nevertheless, when the number of time steps is small, the difference is not apparent. When the number of time steps comes to 1000, it shows that the error of IF model would continuously decrease, while the error of LIF model may stay in a bounded range, which should be caused by the random error. Despite these convergence errors, the accuracy results demonstrate that the exact precision is not necessary for effective training based on the implicit differentiation, and we can actually achieve satisfactory results with a small number of time steps.

\section{Influence of Time Steps}

In this subsection, we further study the influence of time steps, i.e. how good the convergence to equilibrium states needs to be for effective prediction and training. We first study the performance of a pretrained model under different time steps. The results are in Table~\ref{influence of time step pretrain}. It shows that the accuracy will gradually decrease as the time step decreases, and when the time step is 5, the classification totally fails. We also briefly analyze the total average firing rate under these conditions as in Table~\ref{influence of time step pretrain}, and it shows that the firing rate is very low when the time step is small. So the accuracy drop may also be partly due to the insufficient spikes.

\begin{table} [ht]
	\centering
	\small
	\tabcolsep=0.5mm
	\caption{Performance of a pretrained model under different time steps. The model is trained on CIFAR-10 with AlexNet-F structure and LIF model, and the original time step is 30.}
	\begin{tabular}{ccc}
		\toprule[1pt]
		Time steps & Accuracy & Total average firing rate\\
		\midrule[0.5pt]
		5 & 10.67\% & 0.0016\\
		10 & 74.39\% & 0.0046\\
		15 & 87.07\% & 0.0059\\
		20 & 90.16\% & 0.0063\\
		25 & 91.33\% & 0.0065\\
		30 & 91.82\% & 0.0066\\
		\bottomrule[1pt]
	\end{tabular}
	\label{influence of time step pretrain}
\end{table}

Then to study the influence of time steps on training, we train and test our model with only 5 time steps. The results are in Table~\ref{influence of time step training}. It shows that the training does not fail when the number of time steps is 5, but there would be a significant performance drop, and the accuracy would decrease and fluctuate in the latter part of training. It is probably because the gradient calculated by implicit differentiation could still be a descent direction though not exact, and in the latter part, it may not be a descent direction so the accuracy cannot be further improved.

\begin{table} [ht]
	\centering
	\small
	\tabcolsep=0.5mm
	\caption{Performance of training the model under different time steps. The model is trained on CIFAR-10 with AlexNet-F structure and LIF model.}
	\begin{tabular}{cc}
		\toprule[1pt]
		Time steps & Accuracy\\
		\midrule[0.5pt]
		5 & 83.09\%\\
		30 & 91.74\%$\pm$0.09\%\\
		\bottomrule[1pt]
	\end{tabular}
	\label{influence of time step training}
\end{table}

\section{Discussion of Limitations and Social Impacts}

This work mainly focuses on training feedback spiking neural networks for inputs that are convergent in the context of average accumulated signals, as indicated in the assumptions in the theorems. This holds for common pattern recognition tasks and common visual tasks, e.g. image classification, whose inputs are static images or the alternative neuromorphic version with spikes. While for other types of varying inputs, e.g. speech, it may require additional efforts to consider the definition and utilization of equilibrium with time. One practically plausible method is to flatten the inputs to treat the original time dimension as the channel dimension, and feed such data to the model at each `time step'. In this way, our theorems and method still hold. But the definition of  `time step' in this method is not the true time, which may lack the biological plausibility and increase the computational requirements. An interesting future work is to generalize the methodology to varying inputs.

As for social impacts, since this work focuses only on training methods for spiking neural networks, there is no direct negative social impact. And we believe that the development of successful energy-efficient SNN models could broader its applications and alleviate the huge energy consumption by ANNs. Besides, understanding and improving the training of biologically plausible SNNs may also contribute to the understanding of our brains and bridge the gap between biological neurons and successful deep learning.

\end{document}